\numberwithin{equation}{section}
\theoremstyle{definition}
\newtheorem{theorem}{Theorem}
\newtheorem{proposition}{Proposition}
\newtheorem{lemma}{Lemma}[section]
\newtheorem{corollary}{Corollary}
\newtheorem{definition}{Definition}
\newtheorem{assumption}{Assumption}[section]
\newtheorem{example}{Example}
\newtheorem{remark}{Remark}
\newcommand{\RR}[0]{\mathbb{R}}
\newcommand{\NN}[0]{\mathbb{N}}
\newcommand{\ZZ}[0]{\mathbb{Z}}
\newcommand{\EE}[0]{\mathbb{E}}
\newcommand{\deff}[0]{d_\mathrm{eff}}
\newcommand{\KK}[0]{\mathbf{K}}
\newcommand{\PP}[0]{\mathbf{P}}
\newcommand{\dd}[0]{\rm{d}}
\newcommand{\Tr}[0]{{\rm Tr}}
\newcommand{\He}[0]{{\rm He}}
\author[1]{Arie Wortsman}
\author[1]{Bruno Loureiro}
\affil[1]{\small Departement d'Informatique, \'Ecole Normale Sup\'erieure, PSL \& CNRS}
\title{Kernel ridge regression under power-law data: \\ spectrum and generalization}
\date{}
\begin{document}

\maketitle

\begin{abstract}
In this work, we investigate high-dimensional kernel ridge regression (KRR) on i.i.d. Gaussian data with anisotropic power-law covariance. This setting differs fundamentally from the classical source \& capacity conditions for KRR, where power-law assumptions are typically imposed on the kernel eigenspectrum itself. Our contributions are twofold. First, we derive an explicit characterization of the kernel spectrum for polynomial inner-product kernels, giving a precise description of how the kernel eigenspectrum inherits the data decay. Second, we provide an asymptotic analysis of the excess risk in the high-dimensional regime for a particular kernel with this spectral behavior, showing that the sample complexity is governed by the effective dimension of the data rather than the ambient dimension. These results establish a fundamental advantage of learning with power-law anisotropic data over isotropic data. To our knowledge, this is the first rigorous treatment of non-linear KRR under power-law data.
\end{abstract}

\section{Introduction}
Consider a supervised learning problem where training data $(x_1, y_1), \dots, (x_n, y_n) \in \RR^{d} \times \RR$ is sampled i.i.d. from a joint probability distribution over $\mathbb{R}^{d}\times \mathbb{R}$ with density $\nu$. In this manuscript are interested in the problem of kernel ridge regression (KRR):
\begin{align}
\label{eq:def:krr}
    \hat{f}_{\lambda} : = \underset{f \in \mathcal{H} }{\mathrm{arg}\min}\left \{ \sum_{i=1}^{n} \left(y_{i} - f(x_i)\right)^{2} + \lambda \| f\|_{\mathcal{H}}\right \},
\end{align}
where $k$ is a positive-definite kernel associated with the \emph{reproducing kernel Hilbert space} (RKHS) $\mathcal{H}$, and $\lambda>0$ is the $\ell_{2}$-penalty strength. Throughout this manuscript, we assume $k$ is universal and trace-class.

Although Kernel Ridge Regression (KRR) has long been a central topic in classical machine learning \citep{W64,scholkopf2002learning}, it has recently attracted renewed interest owing to its connections with neural networks, both at initialization \citep{williams1996computing,lee2018deep} and in the lazy training regime \citep{JGH18,COB19}.

Our main focus in the following will be on the study of the generalization properties of the minimizer of \cref{eq:def:krr}, as quantified by the excess population risk:
\begin{align}
\label{eq:risk}
    R(\hat{f}_{\lambda}) = \mathbb{E}_{x\sim \nu_{x}}\left[\left(\hat{f}_{\lambda}(x)-f_{\star}(x)\right)^{2}\right],
\end{align}
where the expectation is taken over an independent sample from the covariates marginal distribution $x\sim \nu_{x}$, and $f_{\star}(x)=\mathbb{E}[y|x]$ is the Bayes predictor. We will further assume that $f_{\star}\in L^{2}(\nu_{x})$ and the noise $\varepsilon_{i}=y_{i}-f_{\star}(x_{i})$ is zero mean and has finite variance $\mathbb{E}[\varepsilon_{i}^{2}]=\sigma^{2}<\infty$.\footnote{A well-known result is that, under mild conditions on $K$, the kernel ridge regressor is a universal approximator on $L^{2}(\nu_{x})$ \citep{micchelli2006universal}. Therefore, we can assume without loss of generality that $f_{\star}\in L^{2}(\nu_{x})$, with any other component effectively behaving as irreducible noise.}

The generalization properties of \cref{eq:def:krr} have been studied in the learning theory literature under different assumptions. In particular, two settings have received significant attention. The first, known under the umbrella of \emph{source and capacity} conditions, considers a family of tasks parametrized by the relative complexity of $\mathcal{H}$ with respect to $L^{2}(\nu_{x})$, as characterized by the spectral decomposition of the kernel. More precisely, consider the \emph{kernel operator} $T:L^{2}(\nu_{x})\to \mathcal{H}$ defined as
\begin{equation}
    T(f) = \int_{\RR^{d}} k(x,x') f(x') \nu_{x}(\dd x').
    \label{eq:integral_operator}
\end{equation}
Since this is a self-adjoint operator, it admits a diagonalization in $L^{2}(\nu_{x})$ \citep{cucker2002mathematical}. Let $\lambda_{m}\geq 0$ denote its eigenvalues, ordered non-increasingly, and $e_{m}$ the corresponding eigenfunctions. Because $k$ is trace-class, we have $\Tr~T = \sum_{m\geq 0}\lambda_{m}<\infty$, and the effective “size’’ of $\mathcal{H}\subset L^{2}(\nu_{x})$ is governed by the rate of decay of the eigenvalues. Similarly, the complexity of $f_{\star}\in L^{2}(\nu_{x})$ is quantified by the magnitude of $||T^{1/2}f_{\star}||_{\mathcal{H}}$. The source and capacity conditions formalize these notions by assuming a power-law decay for these quantities:
\begin{itemize}
    \item\textbf{Capacity:} There exists a $\alpha> 1$ such that ${\Tr~T^{\alpha}=\sum_{m\geq 0}\lambda^{\alpha}_{m}<\infty}$.
    \item\textbf{Source:} There exists a $r\geq 0$ such that ${||T^{1/2-r}f_{\star}||_{\mathcal{H}}<\infty}$.
\end{itemize}
The excess risk rates for KRR under these conditions have been extensively analyzed in the kernel literature \citep{CdV07,bach2017equivalence,richards2021asymptotics}, revealing a rich phenomenology with cross-overs between different decay and plateau regimes \citep{cui2021generalization,defilippis2024dimension} reminiscent of the empirically observed \emph{neural scaling laws} \citep{brown2020language,kaplan2020scaling,hoffmann2022empirical}. This parallel has sparked renewed interest in these conditions, with many recent works exploring closely related settings as theoretical proxies for neural scaling laws \citep{bahri2024explaining,maloney2022solvable,atanasov2024scaling,bordelon2024dynamical,paquette2024four}.

A complementary line of work instead considers explicit kernel functions and data distributions for which the connection between data and feature space is mathematically tractable. Results of this type, however, are rare, as they rely on an explicit diagonalization of the kernel in $L^{2}(\nu_{x})$, which is generally a very challenging problem. Two notable exceptions are: (i) low-dimensional problems, where diagonalizing the integral operator in \cref{eq:integral_operator} can be reduced to solving a differential equation \citep{tomasini2022failure}; and (ii) dot-product kernels with isotropic data (e.g.\ $x\sim\mathcal{N}(0,I_{d})$ or $x\sim{\rm Unif}(\mathbb{S}^{d-1})$), where the eigenfunctions are given by harmonic polynomials \citep{GMMM20,ghorbani2021linearized,MMM22}. A key consequence of the latter results is that, since $\lambda_{m}=\Theta(d^{-m})$, learning high-frequency components of the target function $\langle f_{\star},e_{m}\rangle$ requires increasingly fine spectral resolution, leading to a high-dimensional sample complexity bottleneck for KRR of $n=\Theta(d^{m})$, analogous to polynomial ridge regression \citep{MMM22}. 

Our main goal in this paper is to go beyond the isotropic high-dimensional setting, addressing the following question:
\begin{center}
\textit{How does structure in the covariates impact the generalization properties of kernel methods?}
\end{center}
Motivated by the ubiquity of power-law structure in signal processing \citep{simoncelli2001natural, mallat2002theory}, we consider the setting where the covariates follow an anisotropic Gaussian distribution $x\sim\mathcal{N}(0,\Sigma)$, with $\Sigma\in\mathbb{R}^{d\times d}$ taken, without loss of generality, to be diagonal $\Sigma_{jk}=\sigma_{j}\delta_{jk}$ with a power-law spectrum:
\begin{align}
\sigma_{j}= C_{\alpha}(d) \cdot j^{-\alpha}, \quad 1\leq j \leq d
\label{eq:cov_eigenvalues}
\end{align}
where $\alpha\geq 0$ and $C_{\alpha}(d)$ is chosen such that $\Tr{\Sigma} = 1$. In particular, we denote the corresponding probability density function by $\gamma_{d}^{\alpha}$. Our \textbf{main contributions} are:
\begin{itemize}
    \item \textbf{Sharp Spectrum:} We establish an exact asymptotic characterization of the spectrum of polynomial dot-product kernels as $d\to\infty$, valid for all $\alpha \geq 0$. For $\alpha>1$, the kernel provably satisfies an asymptotic capacity condition with $\lambda_{m}=\Theta(m^{-\alpha})$, exactly mirroring the decay of the data covariance.
    \item \textbf{Excess risk structured data:} We derive an asymptotic characterization of the excess risk for a particular family of kernels with the above spectrum in the high-dimensional scaling regime where $\alpha\in[0,1)$. The analysis shows that the risk is governed by the \emph{effective dimension} of the data, which decreases with $\alpha$, thereby establishing a fundamental statistical advantage of power-law structure for KRR.
\end{itemize}
Finally, we provide numerical experiments to illustrate our theoretical results, as well as to show its relevance beyond the scope of the theory.

\subsection*{Further related works}
\paragraph{KRR with anisotropic data:} Anisotropy in the data distribution of KRR has been investigated in different contexts. \cite{LR20} investigated how $\Sigma$ impacts the generalization of KRR at the interpolation regime ($\lambda = 0$). \cite{DWY21} studied rotationally invariant kernels for anisotropic sub-Gausssian data in a high-dimensional setting.  \cite{MM22} studied KRR and Neural Networks with a structured covariance for spherical distributions. \cite{BEMWW24,MWSE23,wang2024nonlinear} studied KRR on data with a spiked covariance matrix. However, none of these works address the anisotropic power-law setting considered here.

\paragraph{Theory of scaling laws:} 
Scaling laws are a classical topic in the kernel literature, extensively studied under the framework of source and capacity conditions. In particular, several works have characterised the scaling of the excess risk for KRR \citep{CdV07,bach2017equivalence,cui2021generalization}, random features \citep{rudi2017generalization,defilippis2024dimension}, and (S)GD \citep{yao2007early,ying2008online,carratino2018learning,pillaud2018statistical}. Distinct from our approach, these analyses assume power-law structure in feature space. More recently, \citep{bahri2024explaining,maloney2022solvable,atanasov2024scaling,bordelon2024dynamical,paquette2024four,lin2024scaling,KB25} examined the scaling behaviour of linear models trained on anisotropic data, under both ridge regression and (S)GD. Although these works introduce power-law structure in the inputs, linearity of the model directly implies a power-law structure in the features. Beyond linear settings, \citep{ren2025emergence,arous2025learning,defilippis2025scaling} studied scaling laws for two-layer neural networks in teacher–student setups, where the teacher weights follow a power-law decay and the data are isotropic Gaussian. In these models, non-linearity arises in the features, but anisotropy is only present in the target weights. To our knowledge, our work is the first to address the problem of anisotropic power-law data with non-linear features.

\subsection*{Notation} 
We denote $\gamma_{d}^{\alpha}$ as the gaussian measure in \cref{eq:cov_eigenvalues}. For an integer $m \in \NN$, we denote the set $[m]:=\{ 1, \dots, m\}$. We denote multi-indices in $\ZZ^{d}_{\geq 0}$ by Greek letters. Given a multi-index $\beta \in \ZZ^{d}_{\geq 0}$, we denote $|\beta| = \beta_1 + \dots + \beta_d$. We will sometimes denote $\beta!: = \beta_1! \dots \beta_d! $, which should not be confused with $|\beta|!$, which is the classical factorial for integer numbers. Following this notation, we will sometimes denote binomial coefficients $\binom{|\beta|}{\beta_1, \dots, \beta_d}: = \frac{|\beta|!}{\beta_1! \cdots \beta_d !}$ as $\binom{|\beta|}{\beta}$. For a vector $z \in \RR^{d}$ and a multi-index $\beta \in \ZZ^{d}_{\geq 0}$, we will denote $z^{\beta} := z_1^{\beta_1} \cdots z_d^{\beta_d}$. For a set $S$, it's cardinality is denoted by $|S|$. For a kernel $k: \RR^{d} \times \RR^{d} \to \RR$, it's Hilbert-Schmidt norm is denoted by $\|k\|_{\rm HS} := \left ( \int_{\RR^{d} \times \RR^{d}} k(x,x')^2 \mu(\dd x) \mu(\dd x')\right)^{\sfrac{1}{2}}$.

\section{Main results}
\label{sec:mainres}

In this section we discuss our two main results, concerning the characterization of the kernel spectrum and the consequences for the excess risk in the anisotropic high-dimensional regime.

While in the isotropic setting the natural scale in the problem is given by the data dimension, for strongly anisotropic data, this is played by the notion of \emph{effective dimension}. 
\begin{definition}[Effective Dimension] Let $\Sigma \in \RR^{d \times d}$ denote a positive semi-definite matrix with eigenvalues $\sigma_1 \geq \sigma_d \geq \dots \geq \sigma_d>0$. Define the following two notions of effective dimensionality: 
\begin{align*}
r_0(\Sigma) = \dfrac{\sum_{i=1}^{d} \sigma_i}{\sigma_1}, \text{ and } R_0(\Sigma) = \dfrac{\left (\sum_{i=1}^{d} \sigma_i \right )^{2}}{ \sum_{i=1}^{d} \sigma_i^2}.
\end{align*}
\label{def:effective_dim}
\end{definition}
These are standard notions that naturally arise in the analysis of anisotropic problems, e.g.\ \citep{BLLT20,cheng2024dimension}, and they will play a central role in our proofs. In the power-law setting introduced in \cref{eq:cov_eigenvalues}, we have $C_{\alpha}=r_0(\Sigma)^{-1}$, and the effective dimensions exhibit the following asymptotic scaling as $d\to\infty$:
\begin{equation}
r_0(\Sigma) = \begin{cases}
    O(d^{1-\alpha}) & \text{ for } 0 \leq \alpha \leq 1 \\
    \log(d), & \text{ for } \alpha = 1\\
    O(1), & \text{ for } \alpha >1, \end{cases}
\label{eq:asymptotics_effective dimension_1}
\end{equation}
while for $R_0(\Sigma)$:
\begin{equation}
    R_0(\Sigma) = \begin{cases}
    O(d) & \text{ for } 0 \leq \alpha \leq \frac{1}{2} \\
    O(d^{2-2\alpha}), & \text{ for } \frac{1}{2} < \alpha <1 \\
    O(1), & \text{for } \alpha >1. \end{cases}
    \label{eq:asymptotics_effective_dimension_2}
\end{equation}
\begin{remark}
Note that $R_0(\Sigma)$ exhibits a transition at $\alpha=\sfrac{1}{2}$, whereas $r_0(\Sigma)$ does not. In particular, this means for ${\alpha>\frac{1}{2}}$ the leading eigenvalues are significantly larger than the tail of the spectrum. To see this more concretely, we can see how this affect concentration inequalities. If we consider $x,x' \sim \gamma_d^{\alpha}$, then by Bernstein's Inequality we will have that $|\langle x,x\rangle |\sim \frac{\log(d)}{R_0(\Sigma)^{\frac{1}{2}}}$ with high probability. Then, when $\alpha < \frac{1}{2}$, this will be the standard asymptotic bound $|\langle x,x\rangle| \sim \frac{\log(d)}{\sqrt{d}}$, while for $\alpha > \frac{1}{2}$, this gives $|\langle x,x\rangle| \sim \frac{\log(d)}{\deff}$. This shows that for $\alpha < \frac{1}{2}$, each coordinate contributes to the behavior of the sum, while for $\alpha > \frac{1}{2}$, only the first few coordinates determine the order of the sum. 
\end{remark}

\subsection{Spectrum of an inner-product kernels}
\label{sec:spectrum}
\begin{figure}[t]
    \centering
    \includegraphics[width=0.8\linewidth]{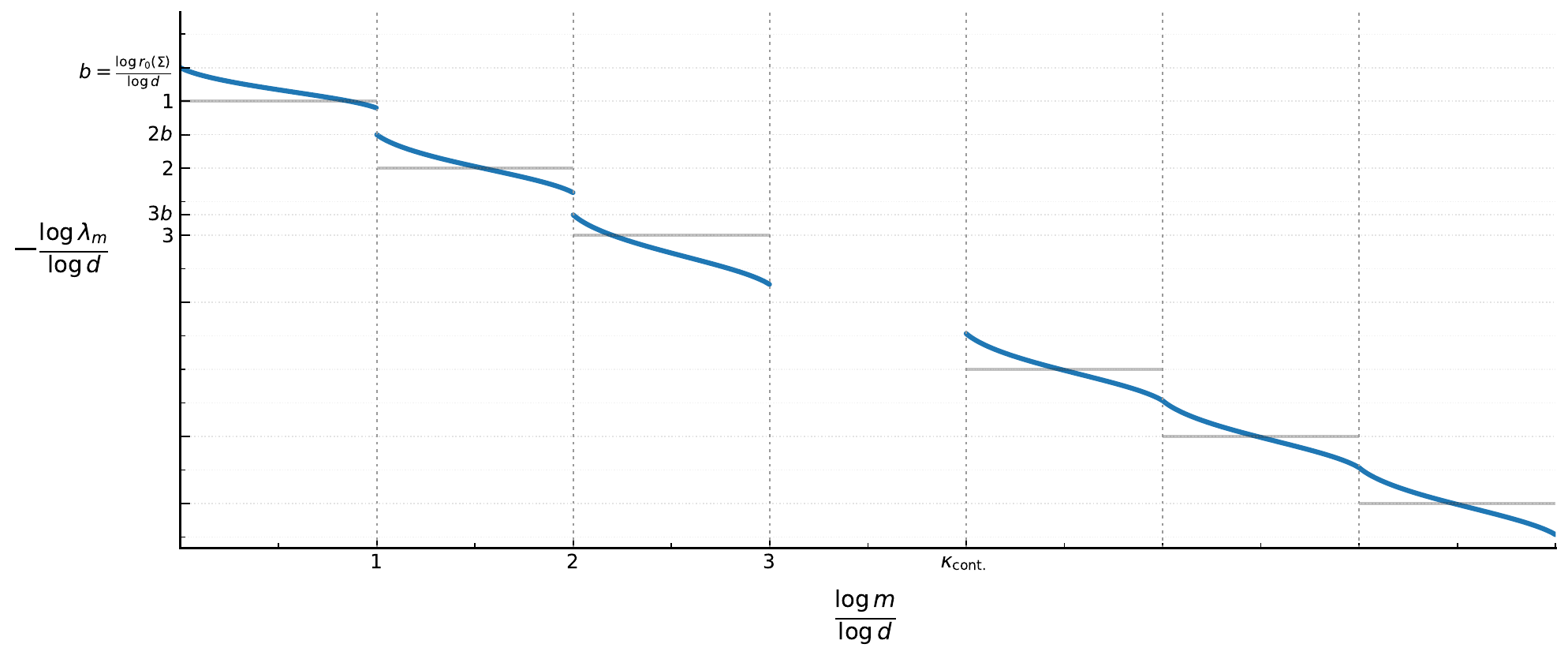}
    \caption{Illustration of the kernel spectrum for $\alpha \in [\frac{1}{\ell + 1},\frac{1}{\ell})$, for $\ell \in \NN$, from \cref{prop:spectrum_power_law_final}, shown in normalized log–log scale and highlighting both the \emph{spectral gap} and \emph{continuous} regions. The grey solid horizontal line corresponds to the isotropic case, where the degenerate eigenvalues are grouped into piecewise constant levels: at each level $m \geq 0$, there are $\Theta(d^{m})$ eigenvalues of magnitude $\Theta(d^{-m})$. By contrast, the black solid line depicts the anisotropic case with $\alpha \in (0,1)$, where the spectrum separates into two distinct regimes. In the \emph{spectral gap region}, on the left of the figure, levels $m \leq \kappa_{\rm cont.}= \ell$ contain $\Theta(d^{m})$ non-degenerate eigenvalues of order $\Theta(r_0(\Sigma)^{-m})$ and increasing steepness, with successive levels separated by spectral gaps of decreasing side, starting at multiples of $b=\sfrac{\log{r_0(\Sigma)}}{\log{d}}$. Beyond this, in the \emph{continuous region} $m \geq \kappa_{\rm cont.}$, the gaps disappear and the eigenvalues overlap across levels, yielding a continuous spectrum that becomes increasingly steep at each level $m$.}
    \label{fig:tikz_power_law_decay}
\end{figure}
Our starting point in this section is to characterize the spectrum of the kernel operator defined in \cref{eq:integral_operator} for anisotropic Gaussian data. This question is central, as the generalization error of KRR is tightly connected to the spectrum of the kernel (see, e.g.\ \cite{CLKZ23,MMM22}). Our focus will be in inner-product kernels of the form
\begin{equation}
    k(x,x') = h \left (\langle x, x' \rangle \right ),
\end{equation}
where $h\in \mathcal{C}^{\infty}$. 
\begin{assumption} The function $h(\cdot): \RR \to \RR$ is a $\mathcal{C}^{\infty}$ function, and it has a series expansion:    
\begin{equation}
    h(t) = \sum_{m \geq 0} h_m t^m,
    \end{equation}
    where $h_k \geq 0 $ for all $k \in \NN \cup \{ 0\}$. 
\label{Assumptions_h}
\end{assumption}
Inner-product kernels have been extensively studied since the pioneering work of \cite{EK10}, who derived a sharp asymptotic approximation for the kernel matrix under isotropic sub-Gaussian data in the proportional regime $n=\Theta(d)$. This analysis has since been extended in several directions, including different normalizations \citep{cheng2013spectrum,fan2019spectral}, random features and NTK kernels \citep{MMM22, fan2020spectra} and polynomial scaling regimes \citep{lu2025equivalence,PWZ24}. In contrast, the anisotropic sub-Gaussian setting considered here remains largely unexplored.

Our first result concern the behavior of the spectrum of truncated inner-product kernels for any diagonal covariance matrix $\Sigma_{jk}=\sigma_{j}\delta_{jk}$.
\begin{proposition}
    Let $\sigma_1, \dots, \sigma_d \in \RR_{+}$, and define the diagonal covariance matrix $\Sigma = \mathrm{diag}(\sigma_1, \dots, \sigma_d)$. Then the integral operator  $T_{\leq D}$ associated to the truncated kernel:
    \[
    k^{\leq D}(x,x') = \sum_{m = 0}^{D} h_{m} \langle x_{m}, x'_{m}\rangle^{m},
    \]
    has $\binom{d + D}{D}$ non-zero eigenvalues. Moreover, for each multi-index $\beta \in \ZZ^{d}_{\geq 0}$, with $|\beta|= \beta_1 + \dots +\beta_d \leq D$, there exists an eigenvalue $\lambda_{\beta}$ and explicit constants $C_1, C_2$ such that:  
    \[
    C_{1,\beta} \sigma_1^{\beta_d}  \cdots \sigma_d^{\beta_d} \leq  \lambda_{\beta}  \leq C_{2,\beta} \sigma_1^{\beta_1} \cdots \sigma_d^{\beta_d},
    \]
    with $C_{i,\beta}$ constant on $d$ for $i \in \{1,2\}$. 
    \label{prop:eigenvalues_operator}
\end{proposition}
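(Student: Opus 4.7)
The plan is to rewrite $T_{\leq D}$ as an explicit finite-dimensional matrix in the orthonormal Hermite basis of its image, and exploit a weighted diagonal-dominance structure to match eigenvalues to multi-indices.

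\textbf{Setup and rank count.} By the multinomial theorem,
\[
k^{\leq D}(x,x') \;=\; \sum_{|\beta|\leq D} h_{|\beta|}\binom{|\beta|}{\beta}\, x^\beta (x')^\beta,
\]
so $T_{\leq D}$ is finite-rank with image contained in $V_D := \mathrm{span}\{x^\beta : |\beta|\leq D\}$. Linear independence of the monomials in $L^2(\gamma_d^\alpha)$ gives $\dim V_D = \binom{d+D}{D}$; linear independence of the rank-1 outer products (under $h_m>0$ for $m\le D$) then yields the stated rank count.

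\textbf{Matrix in the Hermite basis.} I would change variables to $z=\Sigma^{-1/2}x\sim\mathcal{N}(0,I_d)$, so that the kernel becomes $\sum h_{|\beta|}\binom{|\beta|}{\beta}\sigma^\beta z^\beta(z')^\beta$, and express the operator in the orthonormal multivariate Hermite basis $\phi_\gamma(z) = \He_\gamma(z)/\sqrt{\gamma!}$. Using the standard expansion $z^\beta = \sum_{\gamma\leq\beta,\,\beta-\gamma\text{ even}} P_{\gamma,\beta}\,\phi_\gamma$ with $P_{\beta,\beta}=\sqrt{\beta!}$ and $P_{\gamma,\beta}$ depending only on $\beta,\gamma$, the matrix entries read
\[
\tilde T_{\gamma,\gamma'}\;=\;\sum_{\beta\,:\,\beta\geq\gamma\vee\gamma',\,|\beta|\leq D} h_{|\beta|}\binom{|\beta|}{\beta}\sigma^\beta\, P_{\gamma,\beta}\, P_{\gamma',\beta}.
\]
Three structural properties follow: (i) $\tilde T_{\gamma,\gamma'}=0$ unless $\gamma\equiv\gamma'\pmod 2$ componentwise (parity block structure), since a common $\beta$ must have both $\beta-\gamma$ and $\beta-\gamma'$ with all even components; (ii) $\tilde T_{\gamma,\gamma}$ has leading term $h_{|\gamma|}|\gamma|!\sigma^\gamma$ and is bounded above by a constant (depending only on $\gamma,D$) times $\sigma^\gamma$, using $\sigma_j\leq 1$ and hence $\sigma^\beta\leq\sigma^\gamma$ for $\beta\geq\gamma$; (iii) $|\tilde T_{\gamma,\gamma'}|\leq C(\gamma,\gamma',D)\sqrt{\sigma^\gamma\sigma^{\gamma'}}$ off-diagonal, from $\sigma^\beta\leq\sqrt{\sigma^\gamma\sigma^{\gamma'}}$ whenever $\beta\geq\gamma\vee\gamma'$ (a direct consequence of $2\max(a,b)\geq a+b$).

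\textbf{Matching eigenvalues.} To extract the bounds on the individual $\lambda_\beta$, I would sort multi-indices by decreasing $\sigma^\beta$ and identify $\lambda_\beta$ with the correspondingly-ordered eigenvalue of $\tilde T$. The upper bound $\lambda_\beta\leq C_{2,\beta}\sigma^\beta$ follows from Cauchy interlacing applied to the principal submatrix of $\tilde T$ indexed by $\{\gamma:\sigma^\gamma\geq\sigma^\beta\}$, combined with a weighted Gershgorin bound on its norm using (iii). For the lower bound $\lambda_\beta\geq C_{1,\beta}\sigma^\beta$, I would factor $\tilde T=D^{1/2}BD^{1/2}$ with $D=\mathrm{diag}(\sigma^\gamma)$, noting that $B$ has uniformly bounded entries by (ii)--(iii), and invoke Ostrowski's theorem on the same principal submatrix to reduce the bound to a uniform lower bound on the spectrum of $B$. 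The delicate step will be obtaining uniform-in-$d$ lower bounds on the spectrum of $B$: each parity block of $B$ grows in size with $d$, so naive estimates are insufficient. The key observation enabling the argument is that off-diagonal entries of $B$ within a parity block, indexed by $\gamma=\pi+2\delta$ and $\gamma'=\pi+2\delta'$, carry additional factors of $\sigma^{|\delta-\delta'|}$ that decay rapidly in $|\delta-\delta'|$; this should restore the required block-wise conditioning via a decoupling argument that isolates the finitely many multi-indices carrying non-negligible weight.
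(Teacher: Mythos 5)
Your overall strategy (pass to $z=\Sigma^{-1/2}x$, reduce the operator to a finite matrix indexed by multi-indices, and transfer spectral information through an Ostrowski-type similarity argument, ordering eigenvalues by $\sigma^\beta$) is the same as the paper's, and your rank count and upper-bound skeleton are fine. But there is a genuine gap at exactly the point you flag yourself: the uniform-in-$d$ lower bound on the spectrum of $B=D^{-1/2}\tilde T D^{-1/2}$. Positive semidefiniteness of $B$ gives nothing, each parity block of $B$ has size growing polynomially in $d$, and a Gershgorin row sum over such a block involves unboundedly many terms; the claim that the extra decay factors ``should restore the required block-wise conditioning via a decoupling argument'' is precisely the hard part of the proposition and is not carried out. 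The same growing-row-sum issue also silently affects your upper bound, where the ``weighted Gershgorin bound on the norm'' of a principal submatrix whose size grows with $d$ must be quantified.

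The paper closes this gap by a different mechanism that your proposal is missing: instead of analyzing $\tilde T$ entrywise, it writes the moment matrix of the operator as an exact congruence $M=\Lambda\,\EE[\Psi\Psi^{\top}]\,\Lambda^{\top}$, where $\Psi_{\beta}=\sqrt{h_{|\beta|}\binom{|\beta|}{\beta}\sigma^{\beta}}\,\He_{\beta}(z)$ are Hermite features, so that the middle factor is \emph{exactly diagonal} by Hermite orthogonality with entries $h_{|\beta|}|\beta|!\,\sigma^{\beta}$, and $\Lambda$ is the explicit upper-triangular monomial-to-Hermite change of basis whose diagonal entries are $\sqrt{\beta!}$ and whose off-diagonal entries carry factors of order $\sqrt{\sigma^{\beta-\bar k}}$. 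Once one shows $\max\{\|\Lambda\|_{\rm op},\|\Lambda^{-1}\|_{\rm op}\}\leq C(D)$ with $C(D)$ independent of $d$, Ostrowski's theorem (Horn--Johnson, Thm.~4.5.9) matches the $k$-th ordered eigenvalue of $M$ to the $k$-th ordered diagonal entry up to the dimension-free factors $\lambda_{\min}(\Lambda\Lambda^{\top})$ and $\lambda_{\max}(\Lambda\Lambda^{\top})$, delivering the two-sided bound in one stroke. In short, the missing idea is to route the congruence through the Hermite \emph{feature map} (so the middle factor is diagonal by orthogonality) rather than through the diagonal of $\tilde T$ (which forces you to prove well-conditioning of $B$ from scratch). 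To repair your proof you would either need to import that factorization, or genuinely quantify your decoupling step, e.g.\ by showing that the off-diagonal part of each parity block of $B$ has operator norm bounded away from its diagonal uniformly in $d$ using $\sum_j\sigma_j=1$ to control the row sums.
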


\begin{proof}[Sketch of the Proof:] 
We begin by noting that Hermite polynomials are not the eigenfunctions of this kernel. However, since any polynomial of degree $\leq D$ can be written as a linear combination of Hermite polynomials of degree $\leq D$, we can always rewrite our kernel in this basis. To do so, note that given $i, j \in [n]$, we can write: 
\[
k^{\leq D}(x_i, x_j) = \Phi_{i}^{\top} \Phi_{j},
\]
where $\Phi_{i}, \Phi_j \in \RR^{{\binom{D + d}{D}}}$ are feature vectors with coordinates indexed by multi-indices, and with elements $ \Phi_{i,\beta} = \sqrt{\binom{|\beta|}{\beta} \sigma^{\beta}} z_i^{\beta}$, and $z_i = \Sigma^{-\sfrac{1}{2}}x_i$. Following \cite{LRZ20}, we can construct a change of basis matrix $\sigma$ that transforms Hermite features $ \Psi_{i,\beta} = \sqrt{\binom{|\beta|}{\beta} \sigma^{\beta}} He_{\beta}(z)$ into monomial features $\Phi_{i,\beta} = \sqrt{\binom{|\beta|}{\beta} \sigma^{\beta}} z_i^{\beta}$ linearly, that is: 
\begin{equation}
    \Phi_{i} = \Lambda \Psi_{i}. 
\end{equation}
The change-of-basis matrix $\Lambda$ has a few interesting properties. In particular, for the positive-definite truncated kernel $k^{\leq D}$, this matrix is upper triangular and $\max \{\| \Lambda \|_\mathrm{op}, \| \Lambda^{-1}\|_\mathrm{op} \} \leq C$, for a dimension-free matrix $C$. Hence, the kernel matrix $K \in \RR^{n \times n}$ can be written as: 
\begin{equation}
    K = \Lambda \Psi \Psi^{\top} \Lambda,
\end{equation}
where $\Psi = [\Psi_1, \dots, \Psi_n]^{\top}$. \Cref{prop:eigenvalues_operator} follows from relating the eigenvalues of the operator with the eigenvalues of the expectation of $K$ over the data,  $\EE [K]$ and noting that $\Lambda$ acts a similarity transform. We refer the reader to \cref{Appendix:Gramm_Schmidt} for a detailed proof.
\end{proof}

\begin{remark}
    The techniques in \cite{LRZ20} also allow the distribution of $x$ to be sub-gaussian with independent entries. We left the generalization of this results to a more general sub-gaussian setting for future work. 
\end{remark}

\Cref{prop:eigenvalues_operator} gives, up to constants, the spectrum of the truncated kernel $k^{\leq D}$. However, it does not give an order for the eigenvalues. 
\begin{remark}[Isotropic case]
In the isotropic case ($\alpha=0$), this result is closely related to \cite{GMMM20}, which showed that for data uniformly distributed on the sphere the eigenvalues separate into distinct levels, each corresponding to a different scale in $d$. Specifically, each level $m \in [D]$ consists of $O(d^{m})$ degenerate eigenvalues of order $\Theta(d^{-m})$. This is expected, since the isotropic Gaussian distribution and the uniform distribution on the sphere are known to be asymptotically equivalent.
\end{remark}

\Cref{prop:eigenvalues_operator} can be extended to regular inner-product kernels. Indeed, kernels satisfying \cref{Assumptions_h} can be accurately approximated by truncating their Taylor expansion at degree $D$, with an error term that decreases with $D$ and can be explicitly controlled. Since \cref{prop:eigenvalues_operator} holds for any $D>0$, the spectrum of $k(x,x') = h(\langle x, x'\rangle)$ can be approximated by that of $k^{\leq D}(x,x')$. By tracking the approximation error, one shows that $|k - k^{\leq D}|_{\rm HS}\to 0$ as $D\to\infty$. The following corollary then follows directly from the Hoffman–Wielandt inequality (Thm.~2.2 in \cite{KG00}).
\begin{corollary}
 Let $\sigma_1, \dots, \sigma_d \in \RR_{+}$, and define the diagonal covariance matrix $\Sigma = \mathrm{diag}(\sigma_1, \dots, \sigma_d)$. Then the eigenvalues of integral operator  of the kernel $k(x,x') = h( \langle x,x'\rangle)$ can be bounded above and below by quantities of the same form as Proposition~\ref{prop:eigenvalues_operator}, up to constants independent of $d$.
 \label{corollary:continous_kernel_eigenvalues}
\end{corollary}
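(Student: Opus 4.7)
The plan is to lift \Cref{prop:eigenvalues_operator} to the untruncated kernel via a Hoffman--Wielandt perturbation argument. Decompose $k(x,x') = k^{\leq D}(x,x') + r^{>D}(x,x')$, with $r^{>D}(x,x') = \sum_{m > D} h_m \langle x, x'\rangle^m$, and let $T$, $T_{\leq D}$ be the corresponding self-adjoint integral operators on $L^2(\nu_x)$. The strategy is to control the Hilbert--Schmidt distance $\|T - T_{\leq D}\|_{\rm HS}$ and then use that, for each fixed multi-index $\beta$, the bound in \Cref{prop:eigenvalues_operator} pins $\lambda_\beta(T_{\leq D})$ inside a window $[C_{1,\beta}\sigma^{\beta}, C_{2,\beta}\sigma^{\beta}]$ that does not depend on $D$.

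For the first step I would write
\begin{equation*}
\|T - T_{\leq D}\|_{\rm HS}^2 \;=\; \sum_{m, m' > D} h_m\, h_{m'}\, \EE\bigl[\langle x, x'\rangle^{m+m'}\bigr],
\end{equation*}
and use that, conditional on $x'$, one has $\langle x, x'\rangle \sim \mathcal{N}(0, (x')^{\top}\Sigma x')$, so that the joint moments reduce to standard Gaussian moments of the quadratic form $(x')^{\top}\Sigma x'$, which are controlled by powers of $\Tr \Sigma^2 \leq (\Tr \Sigma)^2$. The trace-class hypothesis on $k$ (convergence of $\sum_m h_m \EE[\|x\|^{2m}]$) together with \Cref{Assumptions_h} (non-negative $h_m$) is then strong enough to force $\|T - T_{\leq D}\|_{\rm HS} \to 0$ as $D \to \infty$.

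In the second step, the Hoffman--Wielandt inequality (Thm.~2.2 in \cite{KG00}) applied to the compact self-adjoint operators $T$ and $T_{\leq D}$ gives $\sum_i |\lambda_i^\downarrow(T) - \lambda_i^\downarrow(T_{\leq D})|^2 \leq \|T - T_{\leq D}\|_{\rm HS}^2$, where $\lambda_i^\downarrow$ denotes non-increasingly ordered eigenvalues. Fix $\beta$. For every $D \geq |\beta|$, \Cref{prop:eigenvalues_operator} exhibits an eigenvalue of $T_{\leq D}$ in $[C_{1,\beta}\sigma^{\beta}, C_{2,\beta}\sigma^{\beta}]$; its rank $i(\beta, D)$ in the ordered spectrum eventually stabilizes, because the new eigenvalues appearing as $D$ grows correspond to multi-indices $\gamma$ with $|\gamma| > D$, whose magnitudes $\sigma^{\gamma}$ are eventually smaller than $C_{1,\beta}\sigma^{\beta}$. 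Passing $D \to \infty$ along this stabilized rank defines an eigenvalue $\lambda_\beta(T)$ lying in the same window, with constants inherited verbatim from \Cref{prop:eigenvalues_operator}.

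The main obstacle is reconciling the additive Hoffman--Wielandt error with the multiplicative enclosure we wish to preserve. When $\sigma^{\beta}$ is very small (large $|\beta|$, or $\beta$ concentrated on late coordinates), $D$ must be chosen large enough relative to $\beta$ so that $\|T - T_{\leq D}\|_{\rm HS} = o(\sigma^{\beta})$; this is the quantitative tail estimate that has to be extracted from step one. Once this is in hand, the rank-stabilization argument is quantitatively justified and the interval transfers to $\lambda_\beta(T)$ without any $d$-dependent blow-up in the constants, yielding the corollary.
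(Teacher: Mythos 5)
Your proposal follows essentially the same route as the paper's proof: the same truncation $k = k^{\leq D} + k^{>D}$, the same application of the Hoffman--Wielandt inequality to the integral operators, and the same final step of choosing $D$ large enough that the Hilbert--Schmidt tail is negligible compared to the targeted eigenvalue $\lambda_{\beta}$. Your treatment is in fact somewhat more careful than the paper's sketch, since you make explicit both the quantitative control of $\|k^{>D}\|_{\rm HS}$ via Gaussian moments and the rank-stabilization step needed to convert the additive $\ell^2$ perturbation bound into the multiplicative two-sided enclosure.
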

For isotropic data, the behavior of such kernels implies that the spectrum of $k$ exhibits a new spectral gap at each successive kernel degree. This phenomenon, however, does not persist in the anisotropic case: once $\alpha>0$, only finitely many spectral gaps remain, and for $\alpha>\sfrac{1}{2}$ the spectrum becomes continuous. See \cref{fig:experiment_1} for an illustration.

Although \cref{prop:eigenvalues_operator} does not provide an ordering of the eigenvalues for general $\Sigma$, in the power-law setting with $\sigma_i \propto i^{-\alpha}$ for $\alpha \geq 0$ we can determine the order of the $k$-th largest eigenvalue for each $m \leq d^{k}$ when focusing on a specific polynomial.
\begin{corollary}
\label{cor:monomialkernel}
    Let $\alpha \geq 0$, and consider the power-law covariance matrix in \eqref{eq:cov_eigenvalues}. Fix $D \in \NN$, and let $k(x,x') = h\left (\langle x, x' \rangle \right )$ with $h(x) = x^{D}$. Then the associated kernel operator $T_{D}$ has $\binom{d-1+D}{d-1}$ eigenvalues, denoted by $\lambda_{m}$ for $m \in \bigl[\binom{d-1+D}{d-1}\bigr]$. Moreover, for each such eigenvalue there exist constants $C_{1},C_{2}>0$, depending only on $\alpha$ and $D$, such that:
    \[
     C_1 \dfrac{m^{-\alpha} \mathrm{poly} \log(d)}{r_0(\Sigma)^{D}} \leq \lambda_{m} \leq C_2 \dfrac{m^{-\alpha} \mathrm{poly} \log(d)}{r_0(\Sigma)^{D}}.
    \]
    \label{cor:power_law_decay_monomial}
\end{corollary}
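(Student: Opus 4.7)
The plan is to reduce the problem, via \cref{prop:eigenvalues_operator}, to a counting question about products of small integers. First, I specialize \cref{prop:eigenvalues_operator} to the monomial $h(t) = t^{D}$: only multi-indices with $|\beta| = D$ give non-zero eigenvalues, and there are exactly $\binom{d-1+D}{d-1}$ of them, matching the count in the statement. Substituting the power law $\sigma_i = i^{-\alpha}/r_0(\Sigma)$ into the two-sided bound $\lambda_\beta \asymp \sigma^\beta$ yields
$$\lambda_\beta \;\asymp\; \frac{N(\beta)^{-\alpha}}{r_0(\Sigma)^{D}}, \qquad N(\beta) := \prod_{i=1}^d i^{\beta_i},$$
with multiplicative constants depending only on $D$. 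Ordering the eigenvalues non-increasingly is then the same as ordering the positive integers $N(\beta)$ non-decreasingly on the simplex $\{\beta \geq 0 : |\beta| = D\}$, so the corollary reduces to locating the $m$-th smallest value of $N$.

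The second step is a Piltz-type divisor count. Let $A(T) := \#\{\beta : |\beta| = D,\ N(\beta) \leq T\}$. Through the bijection between multi-indices of weight $D$ and size-$D$ multisets from $[d]$, $A(T)$ agrees, up to a factor in $[1, D!]$, with the ordered count $\tilde A(T) := \#\{(a_1,\dots,a_D) \in [d]^{D} : a_1 \cdots a_D \leq T\} = \sum_{n \leq T} \tau_D^{[d]}(n)$, where $\tau_D^{[d]}$ is the truncated $D$-fold divisor function. The classical Dirichlet--Piltz asymptotic for the unrestricted divisor function, $\sum_{n \leq T} \tau_D(n) = T (\log T)^{D-1}/(D-1)! + O(T (\log T)^{D-2})$, yields $A(T) \lesssim_D T \cdot \mathrm{poly}\log(d)$ for all $T \leq d^{D}$, and coincides with $A(T)$ up to constants for $T \leq d$, since in that range the constraint $a_j \leq d$ is automatic (any factor of a product $\leq T$ must itself be $\leq T$). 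For $T \in [d, d^{D}]$ I pass to the logarithmic coordinates $y_j = \log a_j$ and prove a matching lower bound $A(T) \gtrsim_D T/\mathrm{poly}\log(d)$ by counting lattice points in the truncated simplex $\{y \in [0, \log d]^{D} : \sum_j y_j \leq \log T\}$ via a direct volume comparison.

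Inverting $A(T) \asymp_D T \cdot \mathrm{poly}\log(d)$ produces $N_m \asymp_D m \cdot \mathrm{poly}\log(d)$ uniformly in $m \leq \binom{d-1+D}{d-1}$, so that
$$\lambda_m \;\asymp_D\; \frac{m^{-\alpha} \cdot \mathrm{poly}\log(d)}{r_0(\Sigma)^{D}},$$
which is the two-sided bound claimed in the corollary. The main obstacle is the uniformity of the divisor count over the full range $T \in [1, d^{D}]$: the classical Piltz estimate is standard when the factors are unrestricted, but once $T$ exceeds $d$ the restriction $a_j \leq d$ becomes active, and one must argue that its effect introduces only polylog-in-$d$ corrections rather than factors that would degrade the $m^{-\alpha}$ rate. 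The lattice-point picture in logarithmic coordinates is what makes this bookkeeping tractable, reducing the question to comparing two explicit polytope volumes whose ratio is controlled by a power of $\log d$.
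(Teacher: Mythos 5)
Your proposal follows essentially the same route as the paper: reduce via \cref{prop:eigenvalues_operator} to the power-law products $\sigma^{\beta}$, recast the level-set count $\#\{\beta: |\beta|=D,\ \prod_{j} j^{\beta_j} \le L\}$ as a Piltz-type divisor sum of order $L\,\mathrm{poly}\log$, and invert to read off the $m$-th eigenvalue. You are in fact more explicit than the paper about the truncation $a_j \le d$ when the threshold exceeds $d$ (the paper's recursion glosses over this), and the argument is correct.
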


\begin{proof}[Sketch of the Proof:]
The classical approach to estimating the order of the eigenvalues is to approximate the number of eigenvalues lying in a set of the form $\{\lambda_{m} : \lambda_m \geq \varepsilon\}$, and then approximate this count by the volume of the corresponding polytope. In our setting, however, we can exploit the special structure of the eigenvalues --- specifically, their explicit dependence on integers --- to reformulate the problem. The cardinality of the polytope can be expressed as the number of tuples of a given size whose product lies below a prescribed threshold. This allows us to work directly with the set’s cardinality, thereby avoiding integration over a high-dimensional region and considerably simplifying the computation.

To see this more clearly, note that by Proposition~\ref{prop:eigenvalues_operator}, we have that the cardinality of the set $\{\lambda_{m}: \lambda_m \geq \varepsilon \}$ is the same as for $\{\beta: \sigma_1^{\beta_1} \cdots \sigma_d^{\beta_d} \geq \varepsilon\}$. Then, since $\sigma_{j} = C_{\alpha} j^{-\alpha}$, this give us: 
\begin{equation}
    \left |\{\lambda_{m}: \lambda_m \geq \varepsilon \} \right | = \left | \left\{\beta: \prod_{a=1}^{d} a^{\beta_a} \leq L \right\} \right |,
\end{equation}
for $L = \left (\sfrac{C_{\alpha}}{\varepsilon} \right )^{\alpha}$. The cardinality of the right-hand side has been well-studied for integer numbers and corresponds to a classical problem in number theory (c.f. \cite{T15}, Chapter I.3). In particular, this cardinality is given by: 
\begin{equation}
    \left | \left\{\beta: \prod_{a=1}^{d} a^{\beta_a} \leq L\right\} \right | = CL\mathrm{poly}\log(L),
\end{equation}
for a constant $C$ independent on the dimension $d$. Note that this maps $M(\varepsilon) = |\{\lambda_{m}: \lambda_{m} \geq \varepsilon \}|$ to an integer. We can then invert this relation to get an eigenvalue $\varepsilon$ as a function of $M$ and conclude the desired result.
\end{proof}

An interesting consequence of the above result is that for $\alpha>1$, since $r_0(\Sigma) = O(1)$ (c.f. \cref{eq:asymptotics_effective dimension_1}) the spectrum of this class of inner-product kernels satisfy a capacity condition with the same exponent of the data covariance $\lambda_m = \Theta(m^{-\alpha})$. This is illustrated in \cref{fig:experiment_power_law_decay}.  

We can further extend Corollary~\ref{cor:power_law_decay_monomial} to any finite-degree polynomial kernel, when $\alpha \in [0, \frac{1}{\ell})$, for some $\ell \in \NN$.

\begin{proposition}
    Let $\ell \in \NN$, $\alpha \in [\frac{1}{\ell + 2} ,\frac{1}{\ell + 1})$, and $D \gg L$. Let $\lambda_m$ denote the $m-th$ eigenvalue of the kernel $k(x,x') = \sum_{j=0}^D h_{j} \langle x,x'\rangle^k$, with $x,x' \sim \gamma_d^{\alpha}$. Denote $B_{d,j}:=\binom{d + j}{j}$. Then: 
    \begin{itemize}
    \item \textbf{Spectral Gap Sector:} If $B_{d,j}\leq m \leq B_{d,j+1}$, for $j \leq \ell$, denote by $m^{+}: m - B_{d,j}$. Then: 
    \[
   \lambda_m = \tilde{\Theta}\left ( C_1 \dfrac{\left (m^+\right )^{-\alpha}}{r_0(\Sigma)^{j+1}} \right ).
    \]
    \item \textbf{Continuous Spectrum:} If $m > B_{d,\ell}$, then there exists a strictly increasing sequence of numbers $a_\ell, \dots a_{D-1}$, such that $a_{j} = O(d^{j+1}\mathrm{poly}\log(d))$, such that if $a_j \leq m \leq a_{j+1}$, then there exists constants $C_3, C_4$, independent of the dimension, such that: 
    \[
   \lambda_m  = \tilde{\Theta} \left ( C_4 \dfrac{\left ( m -a_{j}\right )^{-\alpha}}{r_0(\Sigma)^{j+1}} \right ),
    \]
    \end{itemize}
    where we used $\tilde{\Theta}$ to hide the poly-logarithmic factors. 
    \label{prop:spectrum_power_law_final}
\end{proposition}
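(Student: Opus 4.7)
My plan is to bootstrap from Corollary \ref{cor:power_law_decay_monomial}, which gives for each pure monomial kernel $\langle x, x'\rangle^k$ the ordered spectrum $\tilde\Theta((m^+)^{-\alpha}/r_0(\Sigma)^k)$. Combined with Proposition \ref{prop:eigenvalues_operator}, which identifies exactly one eigenvalue per multi-index $|\beta|\leq D$ for the polynomial kernel up to dimension-free constants, the task reduces to the combinatorial question of how the $D+1$ sorted sub-lists (one per degree) interleave in the global ordering. I therefore treat each regime separately.

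For the spectral gap sector, I would show that consecutive levels $k$ and $k+1$ with $k\leq \ell$ are cleanly separated. By Corollary \ref{cor:power_law_decay_monomial}, the smallest degree-$k$ eigenvalue is $\tilde\Theta(d^{-\alpha k}/r_0(\Sigma)^k)$ and the largest degree-$(k+1)$ eigenvalue is $\tilde\Theta(1/r_0(\Sigma)^{k+1})$. Using $r_0(\Sigma)=\Theta(d^{1-\alpha})$ from \eqref{eq:asymptotics_effective dimension_1}, their ratio equals $\tilde\Theta(d^{1-\alpha(k+1)})$, which diverges under the hypothesis $\alpha<1/(\ell+1)$ for every $k\leq\ell$. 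Hence for $m\in[B_{d,j},B_{d,j+1}]$ with $j\leq\ell$, the global ranking coincides with the intra-degree ranking of level $j+1$, and the stated formula follows from Corollary \ref{cor:power_law_decay_monomial} with $m^+=m-B_{d,j}$.

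For the continuous sector I would work with the eigenvalue counting function $N(\lambda):=\#\{m:\lambda_m\geq\lambda\}=\sum_{k=0}^D N_k(\lambda)$, where by Corollary \ref{cor:power_law_decay_monomial} each level satisfies $N_k(\lambda)=\tilde\Theta\bigl(\min\{\binom{d-1+k}{d-1},(\lambda r_0(\Sigma)^k)^{-1/\alpha}\}\bigr)$. For $\lambda\in [1/r_0(\Sigma)^{j+2},1/r_0(\Sigma)^{j+1}]$ with $j\geq \ell+1$, arithmetic with $r_0(\Sigma)=\Theta(d^{1-\alpha})$ shows that each $k\geq j+2$ contributes strictly less than the level-$(j+1)$ term in polynomial order, while the combined contribution of levels $k\leq j$ is essentially constant in $\lambda$ over the window (each is either fully saturated at $\binom{d-1+k}{d-1}$ or pinned very close to it). Setting $a_j$ to be the total count from levels $\leq j$ at the window's upper endpoint, inversion of the relation $N(\lambda)=a_j+(\lambda r_0(\Sigma)^{j+1})^{-1/\alpha}+\text{polylog}$ gives $\lambda_m=\tilde\Theta((m-a_j)^{-\alpha}/r_0(\Sigma)^{j+1})$; the bound $a_j=O(d^{j+1}\,\text{polylog}(d))$ then follows from $\binom{d-1+k}{d-1}=O(d^k)$ summed up to $k=j+1$, and strict monotonicity of $(a_j)$ from the fact that each successive window strictly enlarges the saturated set of lower levels.

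The main obstacle is the polylog book-keeping in the continuous sector. Although level $j+1$ dominates the variation of $N(\lambda)$ across the active window, the absolute value $N(\lambda)$ contains non-trivial contributions from partially saturated lower-degree terms near the boundary of saturation and from unsaturated higher-degree terms that happen to lie in the window. Verifying that these are absorbed into $\tilde\Theta$, rather than producing a polynomial shift in rank, requires careful use of the hypothesis $\alpha\in[1/(\ell+2),1/(\ell+1))$ together with the saturation arithmetic $k\leq (1-\alpha)(j+1)$ that controls when a given degree is fully exhausted by threshold $1/r_0(\Sigma)^{j+1}$.
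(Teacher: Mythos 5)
Your spectral-gap sector argument matches the paper's: the paper proves the same separation criterion (its Lemma on spectral gaps compares $r_0(\Sigma)^{-\ell} d^{-\alpha\ell}$ against $r_0(\Sigma)^{-(\ell+1)}$, which is exactly your ratio $d^{1-\alpha(k+1)}$) and then invokes Corollary~\ref{cor:power_law_decay_monomial} level by level. That part is fine.

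The continuous sector has a genuine gap, and it is in the choice of windows, not in the ``polylog book-keeping'' you flag at the end. A level-$k$ eigenvalue lies in $[\Theta(d^{-k}),\Theta(r_0(\Sigma)^{-k})]$, so level $k$ becomes fully saturated (all $\binom{d-1+k}{d-1}$ of its eigenvalues counted) only once the threshold drops to $\lambda \le d^{-k}$, \emph{not} at $\lambda\le r_0(\Sigma)^{-k}$. In your window $\lambda\in[r_0(\Sigma)^{-(j+2)},r_0(\Sigma)^{-(j+1)}]=[d^{-(1-\alpha)(j+2)},d^{-(1-\alpha)(j+1)}]$, a level $k$ with $(1-\alpha)(j+1)<k\le j$ is therefore \emph{not} saturated: at the top of the window its count is $d^{(1-\alpha)(j+1-k)/\alpha}$, which is smaller than $d^{k}$ by a positive power of $d$, and it keeps growing across the window. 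Concretely, level $j$ contributes a count varying between $r_0(\Sigma)^{1/\alpha}$ and $\min\{d^{j},r_0(\Sigma)^{2/\alpha}\}$ over your window, while level $j+1$ only varies between $1$ and $r_0(\Sigma)^{1/\alpha}$; so the $\lambda$-dependence of $N(\lambda)$ is dominated by the partially saturated \emph{lower} levels, not by level $j+1$. (Try $\alpha=0.4$, $\ell=1$, $j=2$: the window is $[d^{-2.4},d^{-1.8}]$, level $2$ saturates only at $\lambda\le d^{-2}$, and its count swings from $d^{1.5}$ to $d^{2}$ while level $3$ swings from $1$ to $d^{1.5}$.) Inverting $N(\lambda)\approx a_j+(\lambda r_0(\Sigma)^{j+1})^{-1/\alpha}$ then yields the wrong power of $r_0(\Sigma)$ on most of your window; these contributions cannot be absorbed into $\tilde\Theta$.

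The paper avoids this by partitioning the continuous part into windows $A_{2,j}=\{\lambda: d^{-(j+1)}\le\lambda\le d^{-j}\}$, i.e.\ aligned with the saturation thresholds $d^{-k}$ rather than with powers of $r_0(\Sigma)^{-1}$. In such a window every level $k\le j$ is fully exhausted, so the variation of the count is carried entirely by levels $k\ge j+1$, among which $k=j+1$ dominates the geometric sum $\sum_{k\ge j+1}\varepsilon^{-1/\alpha}r_0(\Sigma)^{-k/\alpha}$; the cardinalities $|A_{2,j}|=\Theta(d^{j+1}\mathrm{poly}\log d)$ and the inversion $\lambda_M=\tilde\Theta(M^{-\alpha}r_0(\Sigma)^{-(j+1)})$ then follow from the same Tenenbaum-style counting used in Corollary~\ref{cor:power_law_decay_monomial}. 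If you replace your windows by these, the rest of your counting-function argument goes through essentially as you wrote it.
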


The intuition behind Proposition~\ref{prop:spectrum_power_law_final} is the following: For a given value of $\alpha \in [0,1)$, we can say precisely how many spectral gaps are in the spectrum. This is illustrated in Figure~\ref{fig:tikz_power_law_decay}. We get two different behaviors: When there are spectral gaps (which correspond to the first part of the proposition), we will have the same behavior described by Corollary~\ref{cor:power_law_decay_monomial} (see LHS of Figure~\ref{fig:tikz_power_law_decay}). When $\alpha > 0$, after a finite number of spectral gaps there is a part of the spectrum that is continuous. This part is described by the second part of the Proposition. For the details of the proof, we refer the reader to Appendix~\ref{Appendix:Gramm_Schmidt}. 

\begin{figure}[t]
        \centering
        \includegraphics[width=0.45\linewidth]{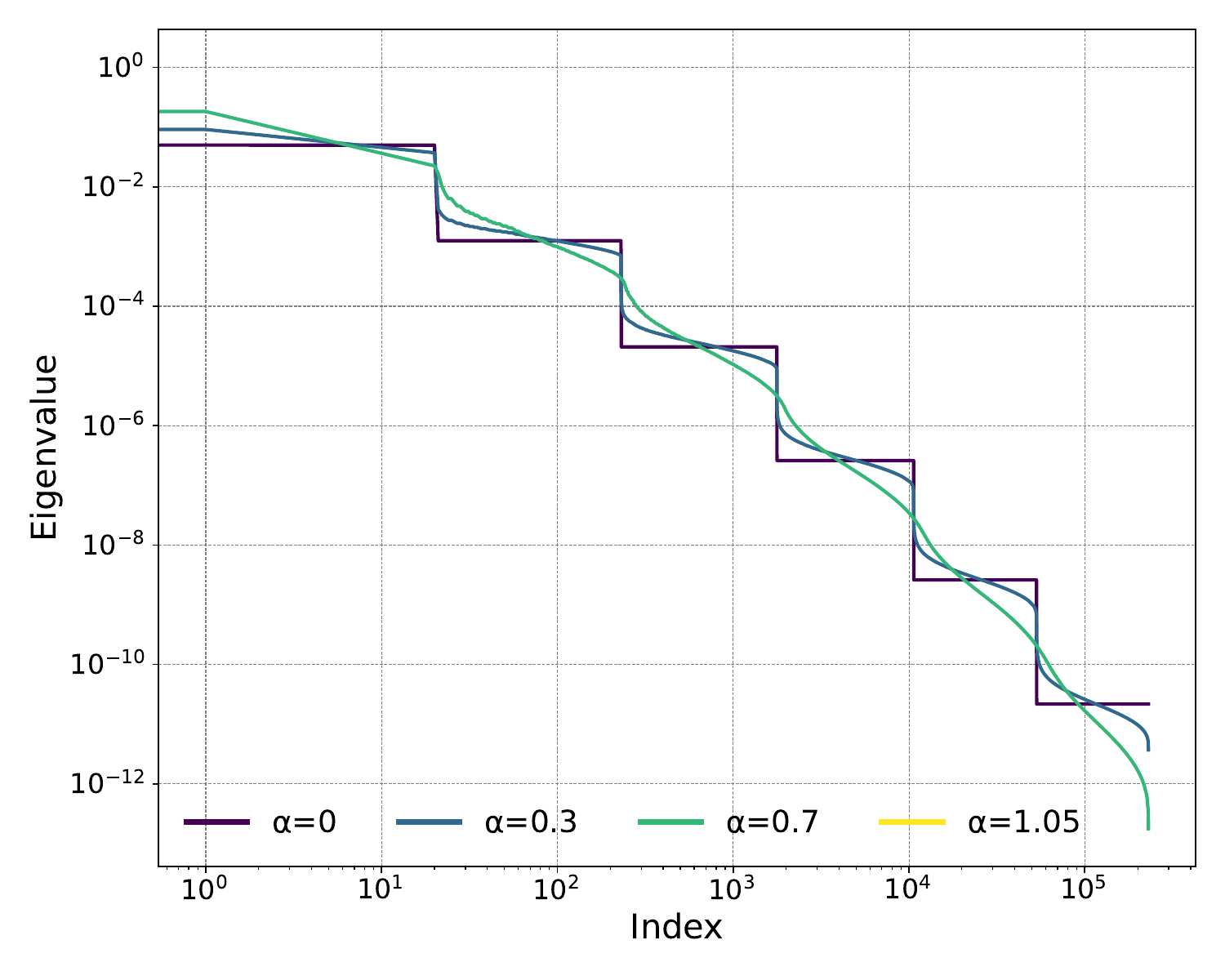}    
        \includegraphics[width=0.45\linewidth]{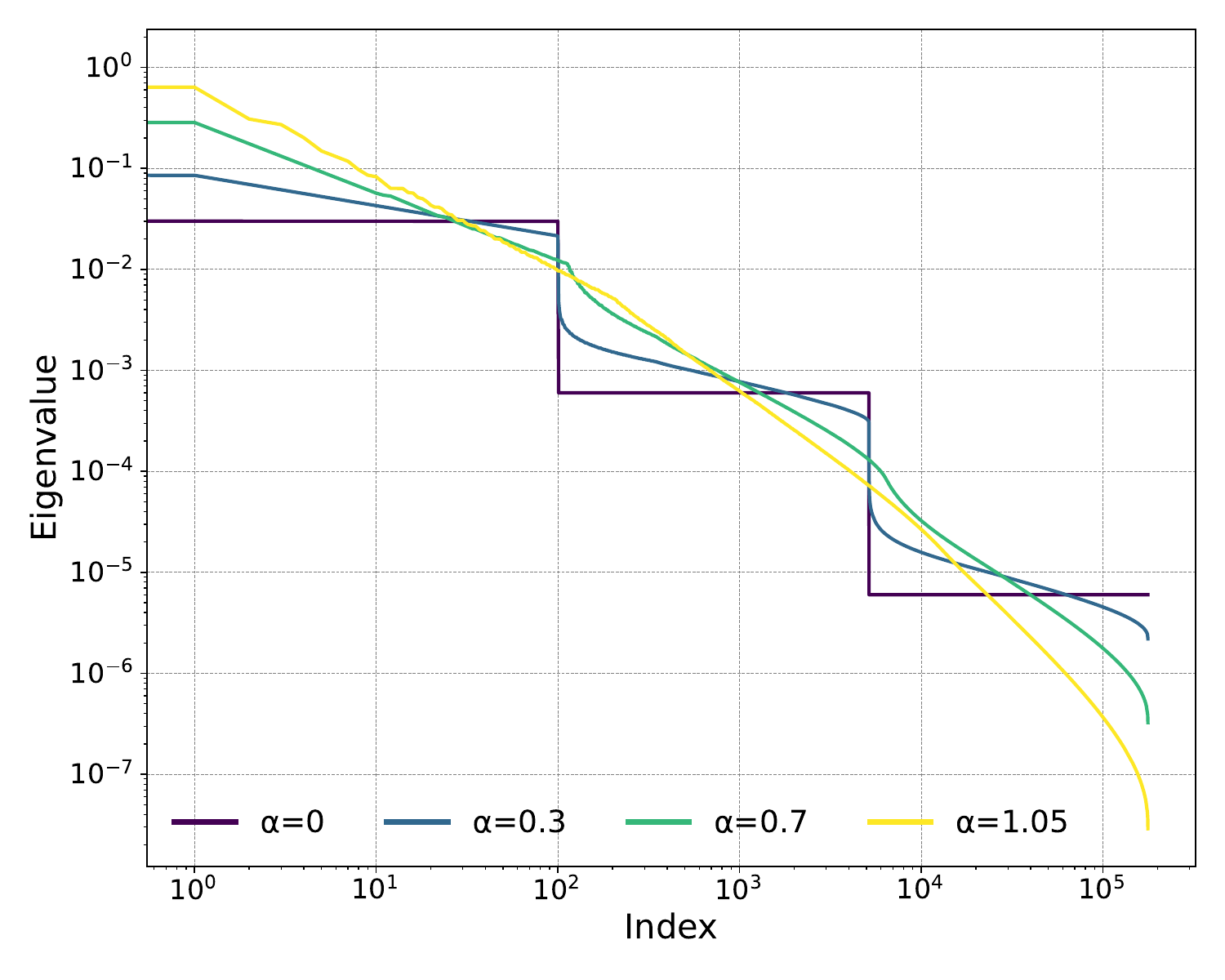}
    \caption{\textbf{Left}: Theoretical Spectrum the kernel resulting by truncating $k(x,x') = \exp(\langle x, x' \rangle) $ on the 5-th degree of it's Taylor expansion, and with $x,x' \sim \gamma_d^{\alpha}$ for $\alpha \in \{ 0, 0.3, 0.7, 1.05\}$, with $d = 20$. \textbf{Right}: Theoretical spectrum of the kernel $k(x,x') = ( 1 + \langle x, x' \rangle)^3 $, with $x,x' \sim \gamma_d^{\alpha}$ for $\alpha \in \{ 0, 0.3, 0.7, 1.05\}$, with $d = 100$.}
    \label{fig:experiment_1}
\end{figure}

\subsection{Consequences for learning}
\label{sec:learning}
We now turn to our second main result, which addresses how anisotropy in the data affects the generalisation capacity of kernel ridge regression. Intuitively, since the effective dimension satisfies $r_{0}(\Sigma)\lesssim d$ (cf.\ \cref{eq:asymptotics_effective dimension_1}) and decreases with $\alpha\geq 0$, one expects that strongly anisotropic data should reduce the sample complexity required to achieve small excess risk. The result in this section confirm this intuition and provides a precise characterization of the benefits of anisotropy in the high-dimensional regime $\alpha \in [0,1)$.

Our focus in this section will be on the following Hermite polynomial kernel:
\begin{equation}
    k(x,x') =  \sum_{\beta \in \ZZ^{d}_{\geq 0}} \xi_{\beta}\binom{|\beta|}{\beta} \sigma^{\beta} \He_{\beta}(\Sigma^{-\sfrac{1}{2}}x) \He_{\beta}(\Sigma^{-\sfrac{1}{2}}x'), 
    \label{def:hermite_kernel}
\end{equation}
with $\xi_{\beta} \geq 0$ for all $\beta \in \ZZ^{d}_{\geq 0}$, $\binom{|\beta|}{\beta}:=\dfrac{|\beta |!}{\beta_1! \cdots \beta_d!}$ and $\sigma^{\beta}:= \sigma_1^{\beta_1}\cdots \sigma_d^{\beta_d}$. 

Characterizing the excess risk requires a close control not only of the spectrum of the kernel but also of its eigenfunctions. Diagonalizing a general kernel in dimension $d$ is a challenging mathematical problem, with explicit solutions only known for particular cases, such as harmonic polynomials. For this reason, a common simplification in the theoretical literature consists of studying kernels which are directly defined in terms of their Mercer decomposition, see for instance  \citep{FB24,BBPV23}. It is an interesting open question to  find a Mercer Decomposition for inner-product kernels, as the ones considered in \cref{Assumptions_h}, with anisotropic Gaussian data. 

\begin{remark}[Gaussian kernel]
By Mehler's formula (c.f. \cite{B23_blog}), taking $\xi_{\beta} = \xi_{|\beta|}$ for all $\beta \in \ZZ^{d}_{\geq 0}$  and $\xi_{m}>0$ for all $m \geq 0$,  the Hermite kernel in \cref{def:hermite_kernel} corresponds to a Gaussian RBF Kernel $G(x, x') = \exp(-\frac{1}{2} (x-y)^{\top} T (x-y)$ with for a particular choice of p.s.d. matrix $T\in\mathbb{R}^{d\times d}$.
\end{remark}

Consider the KRR problem defined in \cref{eq:def:krr} on the RKHS spanned by the Hermite kernel in \cref{def:hermite_kernel}. The minimizer is explicitly given by $\hat{f}_{\lambda}(x)=k_{x}^{\top}(K+\lambda I_{n})^{-1}y$, where $K_{ij}=k(x_{i},x_{j})$ is the kernel matrix and $k_{x}=k(x,x_{i})$. 
The main result result in this section states that in the high-dimensional regime of anisotropy $\alpha\in[0,1)$, under limited sample complexity $n = O_{d}(d^{\kappa})$ for some $\kappa>0$, this predictor only captures low-frequency components of the target function. More precisely, fix a small constant $\delta_0 > 0$ and define the subsets of multi-indices:
\begin{align*}
    \mathsf{High}(n) : = \left \{ \beta \in \ZZ^{d}_{\geq 0}: \sigma_1^{\beta_1} \cdots \sigma_d^{\beta_d} \leq \dfrac{1}{d^{\kappa + \delta_0}} \right \},
\end{align*}   
and 
\begin{align*}
\mathsf{Low}(n) : = \left \{\beta \in \ZZ^{d}_{\geq 0}: \sigma_1^{\beta_1} \cdots \sigma_d^{\beta_d} > \dfrac{1}{d^{\kappa + \delta_0}} \right \}.
\end{align*}   
This induces a decomposition of the kernel spectrum $\lambda_{\beta}$ into high- and low-frequency sectors, corresponding to $\mathsf{High}(n)$ and $\mathsf{Low}(n)$, respectively. At a high level, in the anisotropic high-dimensional regime $\alpha \in [0,1)$ with limited data $n=\Theta(d^{\kappa})$, the KRR predictor $\hat{f}_{\lambda}$ has sufficient resolution to capture only the low-frequency components of the target function. The high-frequency components remain unlearned, effectively behaving as an implicit ridge regularizer. This intuition is formalized in the following result.
\begin{theorem}
    \label{thm:generalization_error}
    Let $n = C d^{\kappa}$, with $\kappa > 0$, and $\alpha\in[0,1)$. Define $D(\kappa) = \lfloor \frac{\kappa}{1-\alpha} \rfloor$, and assume $D(\kappa)\cdot (1-\alpha) < \kappa$, and $\kappa \not = \lfloor \kappa \rfloor$. Let $\hat{f}_\lambda$ denote the KRR predictor in \cref{eq:def:krr} with Hermite kernel defined in \cref{def:hermite_kernel}, $\lambda >0$ denote the Ridge regularization and $ f_{\star}^{\mathsf{Low}(n)} \in \RR^{|\mathsf{Low}(n)|}$ denote the vector with all the Hermite coefficients of $f^\star$ for $\beta \in \mathsf{Low}(n)$. Then, 
    \[
    R(\hat{f}) = \| (I - S^{\mathsf{Low}(n)}) f_{\star}^{\mathsf{Low}(n)} \|_{L^2} + o_d(1),
    \]
    where $S^{\mathsf{Low}(n)} \in \RR^{|\mathsf{Low}(n)| \times |\mathsf{Low}(n)|}$ is the shrinkage matrix 
        \[
        S^{\mathsf{Low}(n)} = \left ( (\lambda  + \sigma_\mathrm{eff}) (nD) ^{-1} + I_n\right )^{-1},
        \]
        with $D \in \RR^{|\mathsf{Low}(n)| \times |\mathsf{Low}(n)| }$ a diagonal matrix indexed by $\beta \in \mathsf{Low}(n)$, and with elements
        \[
        D_{\beta, \beta} = h_{|\beta|} |\beta|! \dfrac{ \sigma_1^{\beta_1} \cdots \sigma_d^{\beta_d}}{r_0(\Sigma)^{|\beta|}}.
        \]
        and $\sigma_\mathrm{eff}:= \gamma^{\mathrm{eff}} = \lambda + \sum_{\beta \in \mathsf{High}(n)} \lambda_{\beta}$ , with $\lambda_\beta$ the eigenvalues of the kernel. 
\end{theorem}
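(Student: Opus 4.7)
The natural plan is to diagonalise everything in the Hermite basis, in which the kernel of \cref{def:hermite_kernel} is Mercer-diagonal by construction. Writing $e_\beta(x):=\He_\beta(\Sigma^{-\sfrac{1}{2}}x)/\sqrt{\beta!}$, the family $\{e_\beta\}_{\beta\in\ZZ^{d}_{\geq 0}}$ is orthonormal in $L^2(\gamma_d^{\alpha})$ with eigenvalues $\lambda_\beta=\xi_\beta |\beta|!\,\sigma^\beta$. Introducing the feature map $\phi_\beta(x):=\sqrt{\lambda_\beta}\,e_\beta(x)$ and the feature matrix $\Phi$ with rows $\phi(x_i)$, the KRR predictor reads $\hat f_{\lambda}(x)=\phi(x)^{\top}\Phi^{\top}(\Phi\Phi^{\top}+\lambda I_n)^{-1}y$. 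I would then split features along the partition $\mathsf{Low}(n)\sqcup\mathsf{High}(n)$, inducing a block decomposition $K=K^{L}+K^{H}$ with $K^{\bullet}=\Phi^{\bullet}(\Phi^{\bullet})^{\top}$. This decomposition provides the scaffolding for the rest of the argument: the low-frequency block will carry the learning signal, while the high-frequency block will concentrate to a scalar and act as implicit regularization.

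The technical heart of the proof is the following concentration claim. By Hermite orthogonality under $\gamma_d^{\alpha}$, $\EE[K^{H}]=(\sigma_\mathrm{eff}-\lambda)\,I_n$, and one needs $\|K^{H}-(\sigma_\mathrm{eff}-\lambda)I_n\|_\mathrm{op}=o_P(\sigma_\mathrm{eff})$. I would obtain this via Gaussian hypercontractivity: each $\He_\beta(\Sigma^{-\sfrac{1}{2}}x)$ is a polynomial of degree $|\beta|$ in a standard Gaussian, so all its moments are controlled by its variance, and this lifts to moment control of off-diagonal entries of $K^{H}$ and of diagonal fluctuations. A matrix Bernstein / trace-moment argument then delivers the operator-norm bound, using the spectrum description from Section~\ref{sec:spectrum} to bound the tail sums $\sum_{\beta\in\mathsf{High}(n)}\lambda_\beta$ and $\sum_{\beta\in\mathsf{High}(n)}\lambda_\beta^{2}$. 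The assumption $D(\kappa)(1-\alpha)<\kappa$ with $\kappa\notin\NN$ is used here to ensure simultaneously that $|\mathsf{Low}(n)|=o(n)$ and that the threshold $d^{-(\kappa+\delta_0)}$ lies strictly between two spectral levels, avoiding degenerate accumulation. This step is where I expect the main obstacle to lie.

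Once the concentration is established, $(K+\lambda I_n)^{-1}=(K^{L}+\sigma_\mathrm{eff} I_n)^{-1}+o_P(1)$ in operator norm, and Woodbury gives
\[
(K^{L}+\sigma_\mathrm{eff} I_n)^{-1}=\sigma_\mathrm{eff}^{-1}\bigl(I_n-\Phi^{L}\bigl((\Phi^{L})^{\top}\Phi^{L}+\sigma_\mathrm{eff} I_{|\mathsf{Low}(n)|}\bigr)^{-1}(\Phi^{L})^{\top}\bigr).
\]
Since $|\mathsf{Low}(n)|=o(n)$, a second hypercontractive law-of-large-numbers argument on the finite low-frequency block gives $(\Phi^{L})^{\top}\Phi^{L}/n\to D$ in operator norm, with $D_{\beta\beta}=\lambda_\beta$ as in the statement. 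The predictor therefore collapses, up to $o_P(1)$ errors, to ordinary ridge regression on the low-frequency Hermite coefficients with augmented regularization $\sigma_\mathrm{eff}$, whose shrinkage is exactly $S^{\mathsf{Low}(n)}=nD(nD+\sigma_\mathrm{eff} I)^{-1}=((\lambda+\sigma_\mathrm{eff})(nD)^{-1}+I)^{-1}$.

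The final step is to assemble the excess risk. Decomposing $f_\star$ along $\{e_\beta\}$ and using orthonormality, $R(\hat f_\lambda)$ splits into a low-frequency bias $\|(I-S^{\mathsf{Low}(n)})f_\star^{\mathsf{Low}(n)}\|_{L^{2}}^{2}$, a high-frequency bias, a noise-variance term, and cross terms. The noise-variance contribution is $O(\sigma^{2}|\mathsf{Low}(n)|/n)=o_d(1)$ by the cardinality bound; the cross terms between $f_\star^{\mathsf{High}(n)}$ and $\hat f_\lambda$ vanish in probability via the same concentration of $K^{H}$ used above, because $\hat f_\lambda$ places negligible mass on $\mathsf{High}(n)$; and the residual high-frequency bias is absorbed into $\sigma_\mathrm{eff}$ through the implicit regularization generated by $K^{H}$. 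Collecting these gives the stated identity modulo $o_d(1)$.
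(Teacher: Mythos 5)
Your overall architecture is sound and mirrors what the paper does, except that you re-derive the reduction to an effective ridge problem (concentration of $K^{H}$ to a scalar matrix, Woodbury, law of large numbers for the low-frequency feature covariance) from scratch, whereas the paper imports this wholesale as Theorem~4 of \citet{MMM22} and spends all of its effort verifying that theorem's hypotheses (\cref{assum:kernel_concentration} and \cref{assum:kernel_eigenvalue_decay}). That difference is cosmetic. The genuine gap is in the step you yourself flag as the main obstacle: the operator-norm concentration $\|K^{H}-(\sigma_{\mathrm{eff}}-\lambda)I_{n}\|_{\mathrm{op}}=o_{P}(1)$, and specifically its \emph{diagonal} part. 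Hypercontractivity plus a matrix Bernstein or trace-moment argument does not deliver this. The diagonal entries are $k^{H}(x_{i},x_{i})=\sum_{\beta\in\mathsf{High}(n)}\lambda_{\beta}\He_{\beta}(z_{i})^{2}$, and since $\sum_{\beta\in\mathsf{High}(n)}\lambda_{\beta}=\Theta(1)$, the naive bound $\|\sum_{\beta}\lambda_{\beta}(\He_{\beta}^{2}-1)\|_{L^{2}}\leq\sum_{\beta}\lambda_{\beta}\|\He_{\beta}^{2}-1\|_{L^{2}}$ gives $O(1)$, not $o(1)$ --- and to control the maximum over $n=d^{\kappa}$ samples you in fact need the variance to be $o(1/n)$, which is far stronger. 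Hypercontractivity only converts second-moment control into higher-moment control; it cannot create the missing cancellation.

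The paper obtains that cancellation by expanding each $\He_{\beta}^{2}$ in the Hermite basis via the Wiener-chaos product formula (\cref{lemma:hermite_squared_R_d}), exchanging sums to compute the Hermite coefficients $S_{\gamma}$ of $k^{H}(x,x)$, and then bounding $\sum_{\gamma}S_{\gamma}^{2}$ by splitting $\mathsf{High}(n)$ into the degrees $|\beta|>D(\kappa)$ (where the coefficients organize into powers of $\Tr(\Sigma^{2})=R_{0}(\Sigma)^{-1}$, and the hypothesis $D(\kappa)(1-\alpha)<\kappa$ is used precisely to make $n/R_{0}(\Sigma)^{D(\kappa)+1}=o(1)$) and the degrees $|\beta|\leq D(\kappa)$ lying in $\mathsf{High}(n)$ (where one must invoke the eigenvalue-counting result of \cref{cor:power_law_decay_monomial} to bound how many such multi-indices there are and how small their $\sigma^{\beta}$ must be). None of this machinery appears in your sketch, and your stated use of the hypothesis $D(\kappa)(1-\alpha)<\kappa$ (``the threshold lies strictly between two spectral levels'') is not where it is actually needed. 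A secondary, smaller issue: the unlearned high-frequency part of the target is not ``absorbed into $\sigma_{\mathrm{eff}}$'' --- in the effective ridge characterization its squared norm survives as an additive bias term (it is the implicit regularization from $K^{H}$, not the target's high-frequency energy, that enters $\sigma_{\mathrm{eff}}$).
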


For a proof of this Theorem, we refer the reader to Appendix~\ref{appendix:Kernel_Matrix_Approximation}.
\begin{remark}
A few remarks about \cref{thm:generalization_error} are in order.
\begin{itemize}
    \item \Cref{thm:generalization_error} relies on a concentration argument for the kernel matrix in the high-dimensional regime. Consequently, it applies only to $\alpha \in [0,1)$, where the effective dimension $r_{0}(\Sigma)$ diverges with $d$. For $\alpha \geq 1$, the data becomes effectively low-dimensional, and obtaining a comparably fine characterization of the excess risk requires random matrix theory techniques (see, e.g.\ \citep{defilippis2024dimension}).
    \item We exclude the case where $\kappa$ is an integer. This restriction arises because results of this type require concentration of a covariance matrix with $|\mathsf{Low}(n)|$ features, which in turn requires $n \gg |\mathsf{Low}(n)|$. This concentration becomes particularly challenging when $\alpha>\sfrac{1}{2}$, owing to the absence of a spectral gap (see \cref{prop:eigenvalues_operator}). 
    \item The proof of \cref{thm:generalization_error} builds on Theorem~4 of \citep{MMM22}, adapted to our setting, and requires establishing a number of non-trivial conditions on the kernel operator. A key step is the concentration of the diagonal entries of the kernel matrix, which we establish for our anisotropic kernel.
    \item By taking $\alpha=0$, \cref{thm:generalization_error} yields a result similar to \cite{GMMM20} for inner-product kernels with isotropic data on the sphere, therefore also generalizing their result to i.i.d.\ Gaussian setting. 
\end{itemize}
\end{remark}
Note that eigenvalues in $\mathsf{High}(n)$ correspond to Hermite polynomials of degree $D(\kappa)$ or higher. However, $\mathsf{Low}(n)$ also contains certain polynomials of degree exactly $D(\kappa)$. This observation yields the following corollary of \cref{thm:generalization_error}.
\begin{corollary} 
    Under the same assumptions of \cref{thm:generalization_error} the KRR predictor $\hat{f}_\lambda$ is at most a polynomial of degree $D(\kappa)$. In particular, there exist polynomials of degree $D(\kappa)$ that can be learned in this regime. 
    \label{cor:degree_of_predictor}
\end{corollary}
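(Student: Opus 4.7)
The plan is to derive the corollary as a direct consequence of \cref{thm:generalization_error} by characterizing the set $\mathsf{Low}(n)$ in terms of polynomial degree under the power-law scaling $\sigma_j = C_\alpha(d)\, j^{-\alpha}$.

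First I would establish that $|\beta| \leq D(\kappa)$ for every $\beta \in \mathsf{Low}(n)$. Since the $\sigma_j$ are decreasing, $\sigma^\beta \leq \sigma_1^{|\beta|}$, and by \cref{eq:asymptotics_effective dimension_1} with $\alpha \in [0,1)$, $\sigma_1 = C_\alpha(d) = 1/r_0(\Sigma) = \Theta(d^{-(1-\alpha)})$. The membership condition $\sigma^\beta > d^{-(\kappa+\delta_0)}$ therefore forces $(1-\alpha)|\beta| < \kappa + \delta_0$ up to lower-order corrections. The hypothesis $D(\kappa)(1-\alpha) < \kappa$ says that $\kappa/(1-\alpha)$ lies strictly inside the open interval $(D(\kappa), D(\kappa)+1)$, so for $\delta_0$ sufficiently small this forces $|\beta| \leq D(\kappa)$. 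Consequently, every Hermite basis element $\He_\beta(\Sigma^{-1/2} x)$ with $\beta \in \mathsf{Low}(n)$ is a polynomial in $x$ of degree at most $D(\kappa)$. Substituting this into \cref{thm:generalization_error}, the asymptotic risk depends only on the projection $f_\star^{\mathsf{Low}(n)}$ onto these low-frequency modes, from which it follows that $\hat{f}_\lambda$ coincides in $L^2(\gamma_d^\alpha)$, up to $o_d(1)$, with a function in $\mathrm{span}\{\He_\beta(\Sigma^{-1/2}\cdot): \beta \in \mathsf{Low}(n)\}$, proving the first assertion.

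For the second assertion, I would exhibit an explicit $\beta^\star \in \mathsf{Low}(n)$ of total weight $D(\kappa)$, namely $\beta^\star = (D(\kappa), 0, \ldots, 0)$. Then $\sigma^{\beta^\star} = \sigma_1^{D(\kappa)} = \Theta(d^{-(1-\alpha)D(\kappa)})$, and the strict inequality $D(\kappa)(1-\alpha) < \kappa$ gives $\sigma^{\beta^\star} > d^{-(\kappa+\delta_0)}$ for $\delta_0$ small enough, so $\beta^\star \in \mathsf{Low}(n)$. The associated Hermite basis element $\He_{D(\kappa)}(x_1/\sqrt{\sigma_1})$ is a polynomial of degree exactly $D(\kappa)$ living in the learned span. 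The main obstacle is the quantitative verification that such a polynomial is actually learned (i.e.\ that the excess risk vanishes when $f_\star$ is proportional to it): this amounts to showing $(S^{\mathsf{Low}(n)})_{\beta^\star,\beta^\star}\to 1$, equivalently $n\,D_{\beta^\star,\beta^\star}\gg \lambda + \sigma_\mathrm{eff}$. Plugging in $D_{\beta^\star,\beta^\star} = h_{D(\kappa)}\, D(\kappa)!\, /\, r_0(\Sigma)^{2D(\kappa)}$ reduces this to controlling the high-frequency tail $\sigma_\mathrm{eff} - \lambda = \sum_{\beta \in \mathsf{High}(n)} \lambda_\beta$ via the spectrum description of \cref{prop:spectrum_power_law_final}; the auxiliary hypothesis $\kappa \neq \lfloor \kappa \rfloor$ keeps $n=Cd^\kappa$ away from level-crossing transitions and prevents borderline eigenvalues from contaminating either side of the split.
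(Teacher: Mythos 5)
Your characterization of $\mathsf{Low}(n)$ is correct and is exactly how the paper argues: the appendix establishes $\mathsf{Low}(n)\subseteq\{\beta:|\beta|\leq D(\kappa)\}$ from $\sigma^{\beta}\leq\sigma_1^{|\beta|}=\Theta(r_0(\Sigma)^{-|\beta|})$ together with $D(\kappa)<\kappa/(1-\alpha)<D(\kappa)+1$, and the witness $\beta^{\star}=(D(\kappa),0,\dots,0)$ with $\sigma^{\beta^{\star}}=\Theta(d^{-(1-\alpha)D(\kappa)})\gg d^{-(\kappa+\delta_0)}$ is precisely the "certain polynomials of degree exactly $D(\kappa)$" the paper has in mind. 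So the first assertion and the membership $\beta^{\star}\in\mathsf{Low}(n)$ are fine.

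The loose end you flag for the second assertion is, however, a genuine one, and your proposed reduction would not close it as written. If you plug the theorem's stated entry $D_{\beta^{\star},\beta^{\star}}=h_{D(\kappa)}D(\kappa)!/r_0(\Sigma)^{2D(\kappa)}$ into $nD_{\beta^{\star},\beta^{\star}}\gg\lambda+\sigma_{\rm eff}$ with $\sigma_{\rm eff}=\Theta(1)$ (the high-frequency tail is bounded by the trace, so no further control from \cref{prop:spectrum_power_law_final} helps you beyond $O(1)$), you need $\kappa>2(1-\alpha)D(\kappa)$ — which the hypotheses do not give: take $\alpha=0$, $\kappa=2.5$, so $D(\kappa)=2$ and $nD_{\beta^{\star},\beta^{\star}}\asymp d^{2.5-4}\to 0$, i.e.\ the shrinkage would go to $0$, not $1$. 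The resolution is that "learned" should be read against the kernel eigenvalue itself, $\lambda_{\beta^{\star}}=h_{D(\kappa)}D(\kappa)!\,\sigma_1^{D(\kappa)}=\Theta(r_0(\Sigma)^{-D(\kappa)})$ (cf.\ Remark~\ref{remark:exact_eigenvalues} and the effective-ridge condition $n\lambda_{\beta^{\star}}\gg\gamma^{\rm eff}$ from Theorem~4 of \cite{MMM22}); then $n\lambda_{\beta^{\star}}=\Theta(d^{\kappa-(1-\alpha)D(\kappa)})\to\infty$ precisely because $(1-\alpha)D(\kappa)<\kappa$, which is the whole point of the assumption. In other words, the extra factor $r_0(\Sigma)^{-|\beta|}$ in the displayed $D_{\beta,\beta}$ double-counts the normalization already contained in $\sigma^{\beta}$; you should either note this or phrase the learnability condition directly in terms of $\lambda_{\beta^{\star}}$, after which the second assertion follows immediately.
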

\Cref{cor:degree_of_predictor} shows that the isotropic case ($\alpha=0$) is the worst case in the power-law data setting. Specifically, when $\alpha=0$ the predictor learns exactly a polynomial of degree $\lfloor \kappa \rfloor$, whereas for $\alpha>0$ it can only improve upon this, making $\lfloor \kappa \rfloor$ a lower bound on the degree of the learned polynomial. The dependence on the target function $f_{\star}$ comes from \cref{thm:generalization_error}: Anisotropy improves learning only when the target is well aligned with the eigenvectors of the data covariance; that is, when $f_{\star}(x)$ depends more strongly on the leading coordinates of $x$ (those with the largest variance) than on the trailing ones.

Altogether, this provides a clear answer to our initial question: overall, strong anisotropy on the data can only help the KRR predictor, being most beneficial when the target function has stronger alignment with the most important directions in data space. When this is not the case, for example when the target function is of the form $f_{\star}(x) = f_{\star} (x_{d})$, with $x_{d}$ the last coordinate of $x$ then \cref{thm:generalization_error} gives the same bound for all values of $\alpha \in [0,1)$. To illustrate this discussion, we consider two concrete examples. 

\begin{example}[Isotropic is the worst case] Consider the case when $f_{\star}(x) = \He_2 (x_{1})$, with $x_{1}$ the first  coordinate of $x$. Then, by \cref{thm:generalization_error} the sample complexity necessary to learn this function in the isotropic case $\alpha=0$ is $n=O(d^{2+\varepsilon})$, while for $\alpha > 0$, the sample complexity is $n=O(r_0(\Sigma)^{2+\varepsilon}) = O(d^{2(1-\alpha) + \varepsilon}) \ll d^{2}$. Hence, learning this type of functions is easier for larger $\alpha$. 
\end{example}

\begin{example}[Alignment of the target] 
Consider the target function $f_{\star}(x)=\He_{2}(x_{d})$. In the isotropic case $\alpha=0$, the sample complexity is $n=O(d^{2+\varepsilon})$. For this target, anisotropy brings no advantage: \Cref{thm:generalization_error} shows that the required sample complexity is $n=O(\sigma_{d}^{-2-\varepsilon})=O(d^{2+\varepsilon})$ for any $\alpha\in(0,1)$, which coincides exactly with the isotropic rate.
\end{example}

\section{Numerical experiments}
In this section, we numerically illustrate the theoretical results of \cref{sec:mainres} through concrete examples, both within and beyond the scope of the mathematical assumptions, thereby showing the broader relevance of our findings. 

\subsection{Spectrum of inner product kernels}
We begin with an illustration of the theoretical eigenvalue predictions of Proposition~\ref{prop:eigenvalues_operator} and Corollary~\ref{corollary:continous_kernel_eigenvalues} for different kernels.

\Cref{fig:experiment_1} shows the spectrum of two kernels for different levels of anisotropy $\alpha$. In the isotropic case, the spectrum is piece-wise constant, with each level $m\geq 0$ corresponding to $\Theta(d^{m})$ degenerate eigenvalues of size $\Theta(d^{-m})$, a consequence of rotational symmetry \citep{GMMM20}. For $\alpha\in[0,1)$, this symmetry is broken, lifting the degeneracy of the eigenvalues. Nevertheless, \cref{prop:spectrum_power_law_final} shows that for the first few levels, a spectral gap remain, coinciding exactly with the isotropic levels. These spectral gaps have important consequences for learning, and is intimately connected to the existence of low- and high-frequency sectors in \cref{thm:generalization_error}. Both the size of the gaps as well as the size of the spectral gap region decrease with $\alpha\in[0,1)$, completely disappearing for $\alpha\geq 1$, for which the spectrum becomes purely continuous. 

Finally, we illustrate \cref{cor:power_law_decay_monomial} for $\alpha >1$ of a pure polynomial in \cref{fig:experiment_power_law_decay}, showing that this kernel satisfy a capacity condition with exponent equals to the data anisotropy.
\begin{figure}[t]
    \centering
    \includegraphics[width=0.75\linewidth]{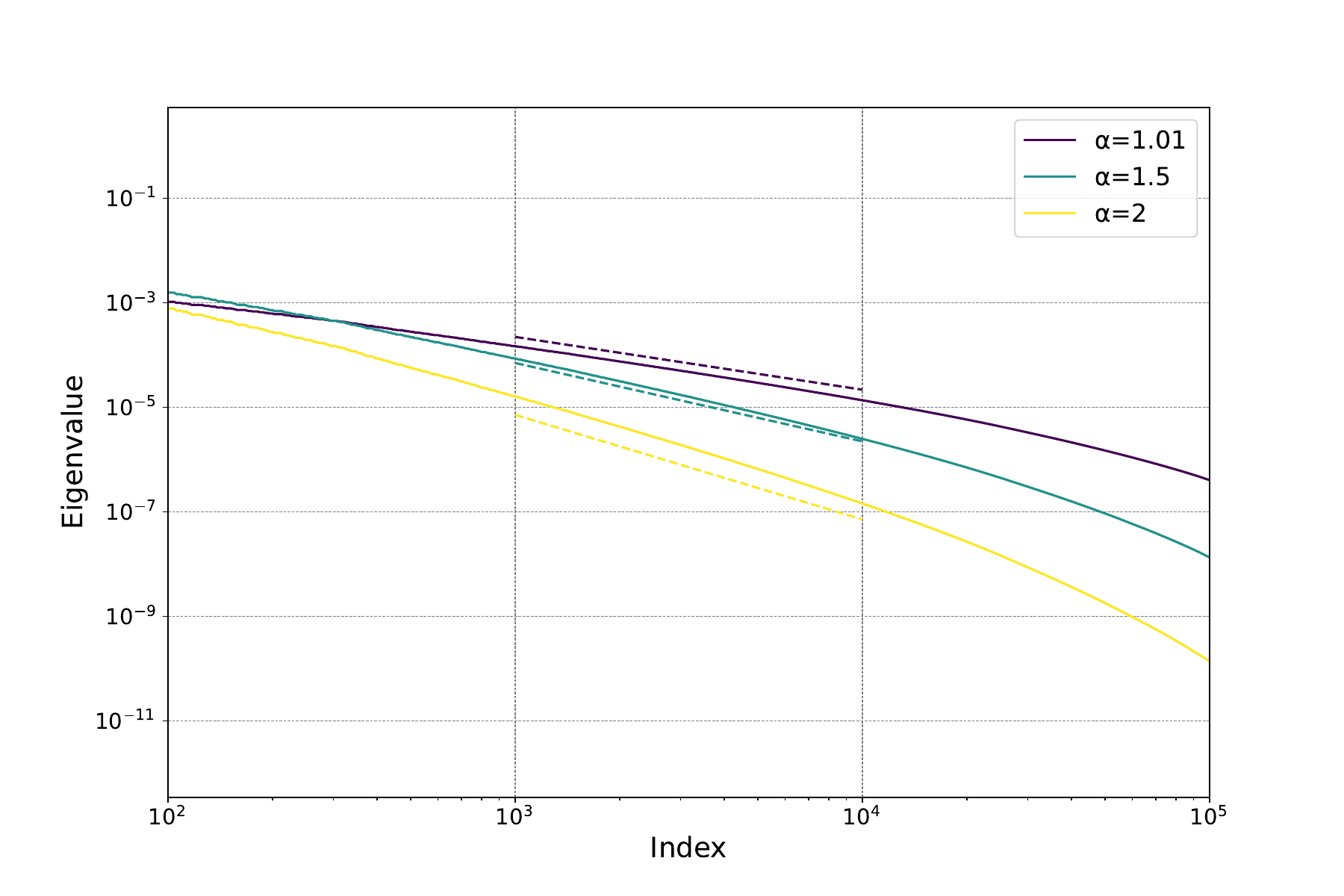}
    \caption{The plot corresponds to the theoretical spectrum of a polynomial kernel $K(x,x') = \langle x,x'\rangle^3$ with $d = 100$. Dashed lines correspond to function $C \cdot i^{-\alpha}$ for each value of $\alpha \in \{1.01, 1.5, 2\}$.  }
    \label{fig:experiment_power_law_decay}
\end{figure}

\subsection{Excess Risk for different targets} 
We now illustrate the generalization results in \cref{thm:generalization_error} and \cref{cor:degree_of_predictor}. \Cref{fig:experiment_2_1} shows the excess risk in \cref{eq:risk} for the Hermite kernel defined in \cref{def:hermite_kernel} and different training data sizes. Each sub-figure correspond to a different choice of target function $f_{\star}$.\\
 
The left side of \Cref{fig:experiment_2_1} corresponds to a target function which depends only the the first coordinate of the covariates:  $f_{\star}(x) = \He_{1}(z_{1}) + \He_{2}(z_{1})  + \He_{3}(z_{1})$, where $z_{1} = (\Sigma^{-\sfrac{1}{2}}x)_{1}$. As discussed in \cref{sec:learning}, this corresponds to a case in which anisotropy strongly helps generalization. Indeed, in the isotropic case $\alpha=0$ (purple curve), the error quickly plateau at this range of $n$, while in for high-anisotropy $\alpha=0.9$ (yellow curve) approaches zero at the same range --- a consequence of the fact that learning polynomials of the first coordinate require polynomial sample complexity in the effective dimension $r_{0}(\Sigma)$, which for $\alpha>0$ can be much smaller than $d$.\\ 

The right side of \Cref{fig:experiment_2_1} corresponds to the extreme opposite case: a target that depends only on the last coordinate of the covariates: $f_{\star}(x) = \He_{1}(z_{d}) + \He_{2}(z_{d})  + \He_{3}(z_{d})$. As discussed in \cref{sec:learning}, this corresponds to a case in which anisotropy does not generalization. Indeed, in this case the excess risk for the anisotropic kernels plateau at the same risk as the isotropic case. 
\begin{figure}[t]
        \centering
        \includegraphics[width=0.45\linewidth]{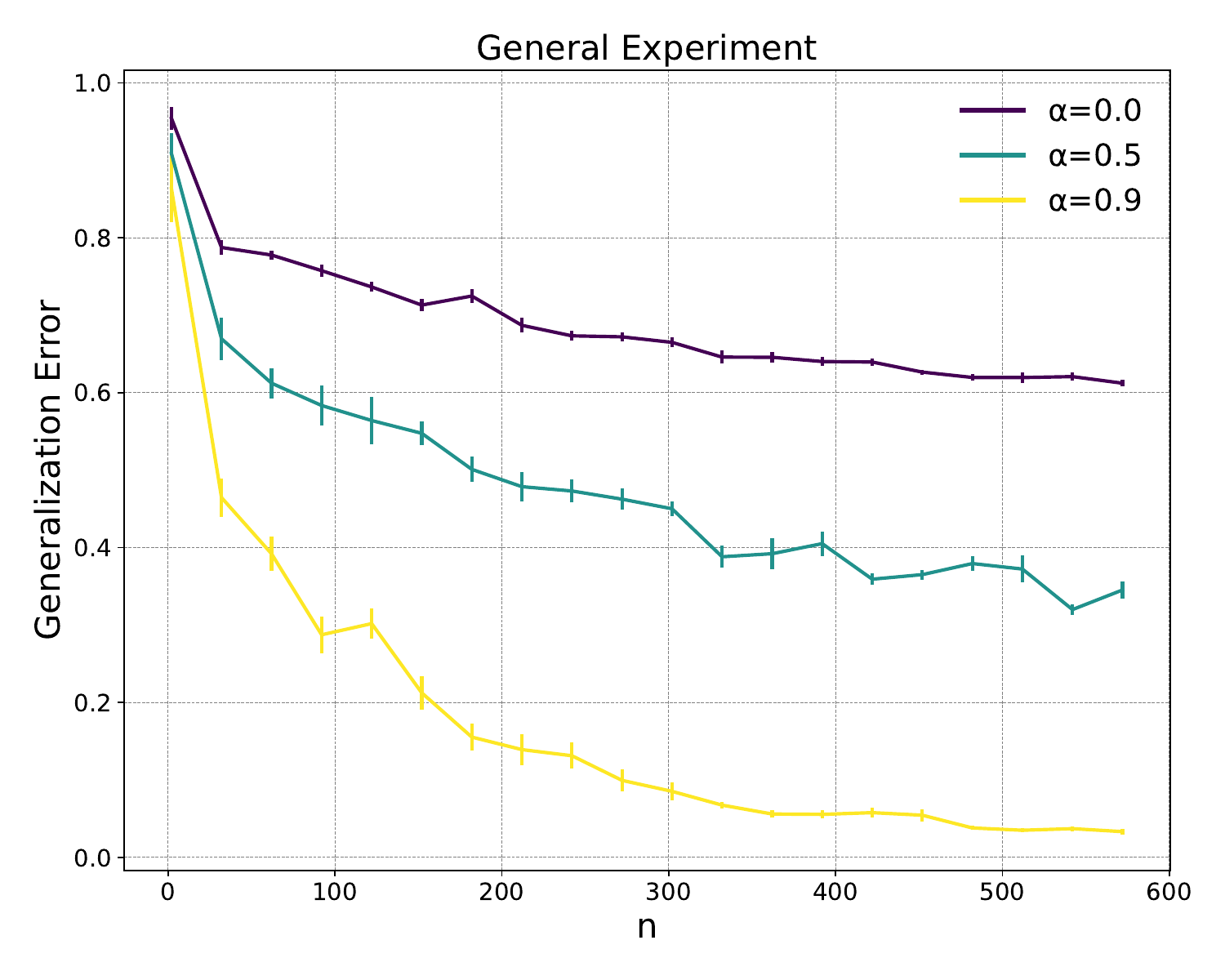}    
        \includegraphics[width=0.45\linewidth]{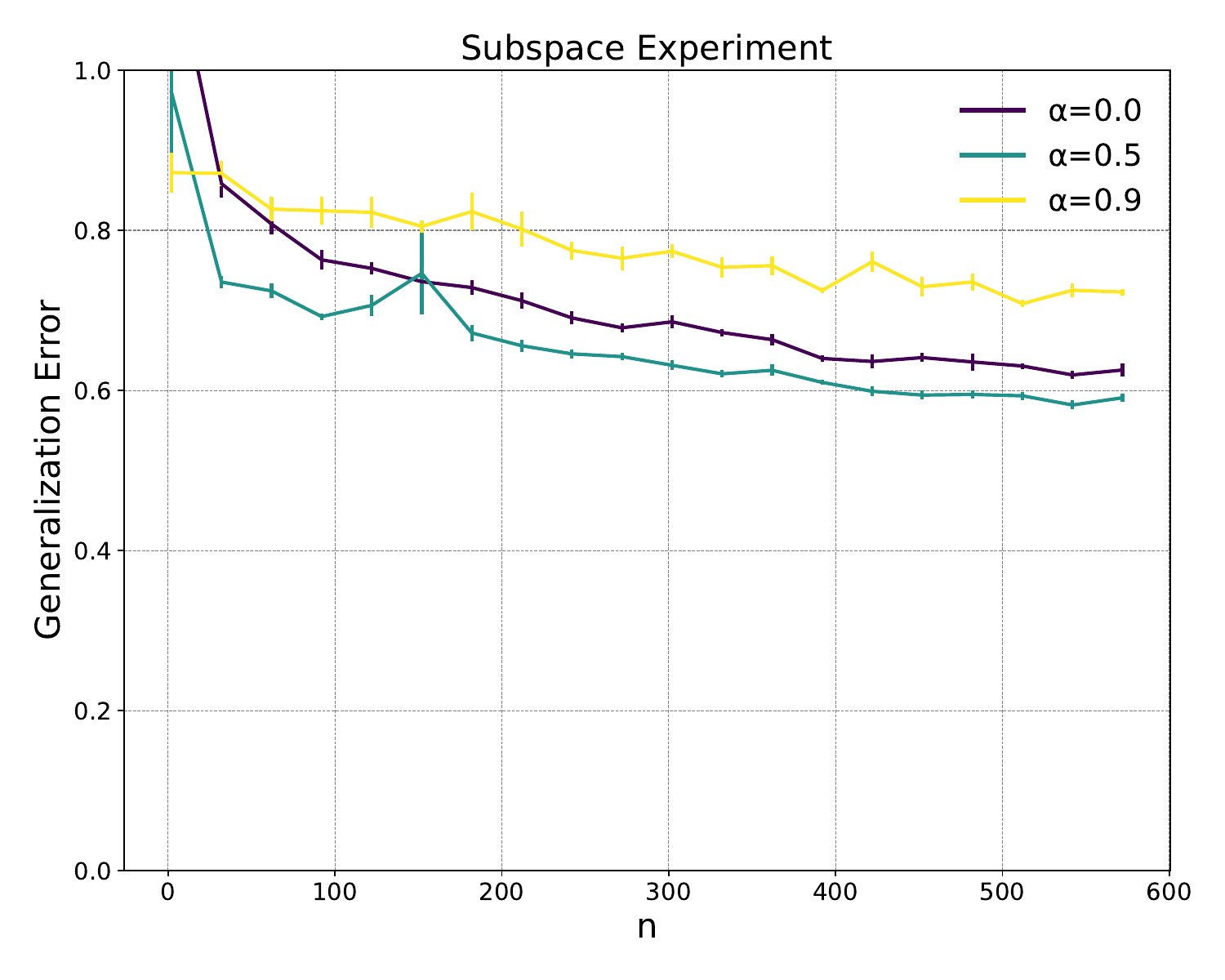}
    \caption{Excess risk for the kernel in Equation~\eqref{def:hermite_kernel} maximum degree equal to $3$ with $d = 100$, $\lambda =0.01$. The target function is of the form
    $f_{\star}(x) = \He_{1}(z_{i}) + \He_{2}(z_{i})  + \He_{3}(z_{i})$. In the first plot (Left) , we take $i=1$, while in the second (Right), we take $i=d$. Plots are obtained by averaging $10$ seeds, and bars denote the standard deviation. }
    \label{fig:experiment_2_1}
\end{figure}

\section*{Conclusion}
In this work we studied the spectral and generalization properties of KRR under anisotropic power-law data. Our results bridge two previously disconnected approaches to generalization in KRR: the high-dimensional analysis of isotropic data and the classical source–capacity framework. A key takeaway is that power-law anisotropy is benign for generalization, with the largest benefits arising when the target function aligns with the highest-variance components of the data. In this case, for a fixed sample complexity, anisotropy enables the predictor to capture higher-frequency components of the target than in the isotropic setting, a phenomenon our results characterize precisely. Looking ahead, several interesting directions remain open, including extending our excess risk analysis to more general inner-product kernels and developing a characterization of the excess risk in the regime $\alpha \geq 1$. 
\section*{Acknowledgements}
We would like to thank Francis Bach, Gérard Ben Arous, Yatin Dandi, Florentin Guth, Florent Krzakala, Fanghui Liu, Theodor Misiakiewicz and Eliot Paquette for insightful discussions. This work was supported by the French government, managed by the National Research Agency (ANR), under the France 2030 program with the reference ``ANR-23-IACL-0008'' and the Choose France - CNRS AI Rising Talents program. 

\bibliography{References}

\begin{thebibliography}{62}
\providecommand{\natexlab}[1]{#1}
\providecommand{\url}[1]{\texttt{#1}}
\expandafter\ifx\csname urlstyle\endcsname\relax
  \providecommand{\doi}[1]{doi: #1}\else
  \providecommand{\doi}{doi: \begingroup \urlstyle{rm}\Url}\fi

\bibitem[Arous et~al.(2025)Arous, Erdogdu, Vural, and Wu]{arous2025learning}
G.~B. Arous, M.~A. Erdogdu, N.~M. Vural, and D.~Wu.
\newblock Learning quadratic neural networks in high dimensions: Sgd dynamics and scaling laws.
\newblock \emph{arXiv preprint arXiv:2508.03688}, 2025.

\bibitem[Atanasov et~al.(2024)Atanasov, Zavatone-Veth, and Pehlevan]{atanasov2024scaling}
A.~Atanasov, J.~A. Zavatone-Veth, and C.~Pehlevan.
\newblock Scaling and renormalization in high-dimensional regression.
\newblock \emph{arXiv preprint arXiv:2405.00592}, 2024.

\bibitem[Ba et~al.(2024)Ba, Erdogdu, Suzuki, Wang, and Wu]{BEMWW24}
J.~Ba, M.~A. Erdogdu, T.~Suzuki, Z.~Wang, and D.~Wu.
\newblock Learning in the presence of low-dimensional structure: a spiked random matrix perspective.
\newblock \emph{Advances in Neural Information Processing Systems}, 36, 2024.

\bibitem[Bach(2017)]{bach2017equivalence}
F.~Bach.
\newblock On the equivalence between kernel quadrature rules and random feature expansions.
\newblock \emph{Journal of machine learning research}, 18\penalty0 (21):\penalty0 1--38, 2017.

\bibitem[Bach(2023)]{B23_blog}
F.~Bach.
\newblock Polynomial magic iii: Hermite polynomials.
\newblock \url{https://francisbach.com/hermite-polynomials/}, 2023.
\newblock Accessed: 09-26-25.

\bibitem[Bahri et~al.(2024)Bahri, Dyer, Kaplan, Lee, and Sharma]{bahri2024explaining}
Y.~Bahri, E.~Dyer, J.~Kaplan, J.~Lee, and U.~Sharma.
\newblock Explaining neural scaling laws.
\newblock \emph{Proceedings of the National Academy of Sciences}, 121\penalty0 (27):\penalty0 e2311878121, 2024.

\bibitem[Bartlett et~al.(2020)Bartlett, Long, Lugosi, and Tsigler]{BLLT20}
P.~L. Bartlett, P.~M. Long, G.~Lugosi, and A.~Tsigler.
\newblock Benign overfitting in linear regression.
\newblock \emph{Proceedings of the National Academy of Sciences}, 117\penalty0 (48):\penalty0 30063--30070, 2020.

\bibitem[Bietti et~al.(2023)Bietti, Bruna, and Pillaud-Vivien]{BBPV23}
A.~Bietti, J.~Bruna, and L.~Pillaud-Vivien.
\newblock On learning gaussian multi-index models with gradient flow part i: General properties and two-timescale learning.
\newblock \emph{Communications on Pure and Applied Mathematics}, 2023.

\bibitem[Bordelon et~al.(2024)Bordelon, Atanasov, and Pehlevan]{bordelon2024dynamical}
B.~Bordelon, A.~Atanasov, and C.~Pehlevan.
\newblock A dynamical model of neural scaling laws.
\newblock \emph{Proceedings of the 41st International Conference on Machine Learning}, 2024.
\newblock arXiv preprint arXiv:2402.01092.

\bibitem[Boucheron et~al.(2013)Boucheron, Lugosi, and Massart]{BLM13}
S.~Boucheron, G.~Lugosi, and P.~Massart.
\newblock \emph{Concentration inequalities}.
\newblock Oxford University Press, Oxford, 2013.
\newblock ISBN 978-0-19-953525-5.
\newblock \doi{10.1093/acprof:oso/9780199535255.001.0001}.
\newblock URL \url{https://doi.org/10.1093/acprof:oso/9780199535255.001.0001}.
\newblock A nonasymptotic theory of independence, With a foreword by Michel Ledoux.

\bibitem[Brown et~al.(2020)Brown, Mann, Ryder, Subbiah, Kaplan, Dhariwal, Neelakantan, Shyam, Sastry, Askell, et~al.]{brown2020language}
T.~Brown, B.~Mann, N.~Ryder, M.~Subbiah, J.~D. Kaplan, P.~Dhariwal, A.~Neelakantan, P.~Shyam, G.~Sastry, A.~Askell, et~al.
\newblock Language models are few-shot learners.
\newblock \emph{Advances in neural information processing systems}, 33:\penalty0 1877--1901, 2020.

\bibitem[Caponnetto and De~Vito(2007)]{CdV07}
A.~Caponnetto and E.~De~Vito.
\newblock Optimal rates for the regularized least-squares algorithm.
\newblock \emph{Foundations of Computational Mathematics}, 7\penalty0 (3):\penalty0 331--368, 2007.

\bibitem[Carratino et~al.(2018)Carratino, Rudi, and Rosasco]{carratino2018learning}
L.~Carratino, A.~Rudi, and L.~Rosasco.
\newblock Learning with sgd and random features.
\newblock \emph{Advances in neural information processing systems}, 31, 2018.

\bibitem[Cheng and Montanari(2024)]{cheng2024dimension}
C.~Cheng and A.~Montanari.
\newblock Dimension free ridge regression.
\newblock \emph{The Annals of Statistics}, 52\penalty0 (6):\penalty0 2879--2912, 2024.

\bibitem[Cheng and Singer(2013)]{cheng2013spectrum}
X.~Cheng and A.~Singer.
\newblock The spectrum of random inner-product kernel matrices.
\newblock \emph{Random Matrices: Theory and Applications}, 2\penalty0 (04):\penalty0 1350010, 2013.

\bibitem[Chizat et~al.(2019)Chizat, Oyallon, and Bach]{COB19}
L.~Chizat, E.~Oyallon, and F.~Bach.
\newblock On lazy training in differentiable programming.
\newblock \emph{Advances in neural information processing systems}, 32, 2019.

\bibitem[Cucker and Smale(2002)]{cucker2002mathematical}
F.~Cucker and S.~Smale.
\newblock On the mathematical foundations of learning.
\newblock \emph{Bulletin of the American mathematical society}, 39\penalty0 (1):\penalty0 1--49, 2002.

\bibitem[Cui et~al.(2021)Cui, Loureiro, Krzakala, and Zdeborov{\'a}]{cui2021generalization}
H.~Cui, B.~Loureiro, F.~Krzakala, and L.~Zdeborov{\'a}.
\newblock Generalization error rates in kernel regression: The crossover from the noiseless to noisy regime.
\newblock \emph{Advances in Neural Information Processing Systems}, 34:\penalty0 10131--10143, 2021.

\bibitem[Cui et~al.(2023)Cui, Loureiro, Krzakala, and Zdeborov{\'a}]{CLKZ23}
H.~Cui, B.~Loureiro, F.~Krzakala, and L.~Zdeborov{\'a}.
\newblock Error scaling laws for kernel classification under source and capacity conditions.
\newblock \emph{Machine Learning: Science and Technology}, 4\penalty0 (3):\penalty0 035033, 2023.

\bibitem[Defilippis et~al.(2024)Defilippis, Loureiro, and Misiakiewicz]{defilippis2024dimension}
L.~Defilippis, B.~Loureiro, and T.~Misiakiewicz.
\newblock Dimension-free deterministic equivalents and scaling laws for random feature regression.
\newblock \emph{Advances in Neural Information Processing Systems}, 37:\penalty0 104630--104693, 2024.

\bibitem[Defilippis et~al.(2025)Defilippis, Xu, Girardin, Troiani, Erba, Zdeborov{\'a}, Loureiro, and Krzakala]{defilippis2025scaling}
L.~Defilippis, Y.~Xu, J.~Girardin, E.~Troiani, V.~Erba, L.~Zdeborov{\'a}, B.~Loureiro, and F.~Krzakala.
\newblock Scaling laws and spectra of shallow neural networks in the feature learning regime.
\newblock \emph{arXiv preprint arXiv:2509.24882}, 2025.

\bibitem[Donhauser et~al.(2021)Donhauser, Wu, and Yang]{DWY21}
K.~Donhauser, M.~Wu, and F.~Yang.
\newblock How rotational invariance of common kernels prevents generalization in high dimensions.
\newblock In \emph{International Conference on Machine Learning}, pages 2804--2814. PMLR, 2021.

\bibitem[El~Karoui(2010)]{EK10}
N.~El~Karoui.
\newblock The spectrum of kernel random matrices.
\newblock \emph{The Annals of Statistics}, pages 1--50, 2010.

\bibitem[Fan and Montanari(2019)]{fan2019spectral}
Z.~Fan and A.~Montanari.
\newblock The spectral norm of random inner-product kernel matrices.
\newblock \emph{Probability Theory and Related Fields}, 173\penalty0 (1):\penalty0 27--85, 2019.

\bibitem[Fan and Wang(2020)]{fan2020spectra}
Z.~Fan and Z.~Wang.
\newblock Spectra of the conjugate kernel and neural tangent kernel for linear-width neural networks.
\newblock \emph{Advances in neural information processing systems}, 33:\penalty0 7710--7721, 2020.

\bibitem[Follain and Bach(2024)]{FB24}
B.~Follain and F.~Bach.
\newblock Nonparametric linear feature learning in regression through regularisation.
\newblock \emph{Electronic Journal of Statistics}, 18\penalty0 (2):\penalty0 4075--4118, 2024.

\bibitem[Ghorbani et~al.(2020)Ghorbani, Mei, Misiakiewicz, and Montanari]{GMMM20}
B.~Ghorbani, S.~Mei, T.~Misiakiewicz, and A.~Montanari.
\newblock When do neural networks outperform kernel methods?
\newblock \emph{Advances in Neural Information Processing Systems}, 33:\penalty0 14820--14830, 2020.

\bibitem[Ghorbani et~al.(2021)Ghorbani, Mei, Misiakiewicz, and Montanari]{ghorbani2021linearized}
B.~Ghorbani, S.~Mei, T.~Misiakiewicz, and A.~Montanari.
\newblock {Linearized two-layers neural networks in high dimension}.
\newblock \emph{The Annals of Statistics}, 49\penalty0 (2):\penalty0 1029 -- 1054, 2021.
\newblock \doi{10.1214/20-AOS1990}.

\bibitem[Hoffmann et~al.(2022)Hoffmann, Borgeaud, Mensch, Buchatskaya, Cai, Rutherford, de~Las~Casas, Hendricks, Welbl, Clark, et~al.]{hoffmann2022empirical}
J.~Hoffmann, S.~Borgeaud, A.~Mensch, E.~Buchatskaya, T.~Cai, E.~Rutherford, D.~de~Las~Casas, L.~A. Hendricks, J.~Welbl, A.~Clark, et~al.
\newblock An empirical analysis of compute-optimal large language model training.
\newblock \emph{Advances in neural information processing systems}, 35:\penalty0 30016--30030, 2022.

\bibitem[Horn and Johnson(2012)]{HJ12}
R.~A. Horn and C.~R. Johnson.
\newblock \emph{Matrix analysis}.
\newblock Cambridge university press, 2012.

\bibitem[Jacot et~al.(2018)Jacot, Gabriel, and Hongler]{JGH18}
A.~Jacot, F.~Gabriel, and C.~Hongler.
\newblock Neural tangent kernel: Convergence and generalization in neural networks.
\newblock \emph{Advances in neural information processing systems}, 31, 2018.

\bibitem[Kaplan et~al.(2020)Kaplan, McCandlish, Henighan, Brown, Chess, Child, Gray, Radford, Wu, and Amodei]{kaplan2020scaling}
J.~Kaplan, S.~McCandlish, T.~Henighan, T.~B. Brown, B.~Chess, R.~Child, S.~Gray, A.~Radford, J.~Wu, and D.~Amodei.
\newblock Scaling laws for neural language models.
\newblock \emph{arXiv preprint arXiv:2001.08361}, 2020.

\bibitem[Koltchinskii and Gin{\'e}(2000)]{KG00}
V.~Koltchinskii and E.~Gin{\'e}.
\newblock {Random matrix approximation of spectra of integral operators}.
\newblock \emph{Bernoulli}, 6\penalty0 (1):\penalty0 113 -- 167, 2000.

\bibitem[Kunstner and Bach(2025)]{KB25}
F.~Kunstner and F.~Bach.
\newblock Scaling laws for gradient descent and sign descent for linear bigram models under zipf's law.
\newblock \emph{arXiv preprint arXiv:2505.19227}, 2025.

\bibitem[Lee et~al.(2018)Lee, Sohl-dickstein, Pennington, Novak, Schoenholz, and Bahri]{lee2018deep}
J.~Lee, J.~Sohl-dickstein, J.~Pennington, R.~Novak, S.~Schoenholz, and Y.~Bahri.
\newblock Deep neural networks as gaussian processes.
\newblock In \emph{International Conference on Learning Representations}, 2018.

\bibitem[Liang and Rakhlin(2020)]{LR20}
T.~Liang and A.~Rakhlin.
\newblock Just interpolate.
\newblock \emph{The Annals of Statistics}, 48\penalty0 (3):\penalty0 1329--1347, 2020.

\bibitem[Liang et~al.(2020)Liang, Rakhlin, and Zhai]{LRZ20}
T.~Liang, A.~Rakhlin, and X.~Zhai.
\newblock On the multiple descent of minimum-norm interpolants and restricted lower isometry of kernels.
\newblock In \emph{Conference on Learning Theory}, pages 2683--2711. PMLR, 2020.

\bibitem[Liang and Lee(2013)]{LL13}
Z.~Liang and Y.~Lee.
\newblock Eigen-analysis of nonlinear pca with polynomial kernels.
\newblock \emph{Statistical Analysis and Data Mining: The ASA Data Science Journal}, 6\penalty0 (6):\penalty0 529--544, 2013.

\bibitem[Lin et~al.(2024)Lin, Wu, Kakade, Bartlett, and Lee]{lin2024scaling}
L.~Lin, J.~Wu, S.~M. Kakade, P.~L. Bartlett, and J.~D. Lee.
\newblock Scaling laws in linear regression: Compute, parameters, and data.
\newblock \emph{Advances in Neural Information Processing Systems}, 37:\penalty0 60556--60606, 2024.

\bibitem[Lu and Yau(2025)]{lu2025equivalence}
Y.~M. Lu and H.-T. Yau.
\newblock An equivalence principle for the spectrum of random inner-product kernel matrices with polynomial scalings.
\newblock \emph{The Annals of Applied Probability}, 35\penalty0 (4):\penalty0 2411--2470, 2025.

\bibitem[Mallat(2002)]{mallat2002theory}
S.~G. Mallat.
\newblock A theory for multiresolution signal decomposition: the wavelet representation.
\newblock \emph{IEEE transactions on pattern analysis and machine intelligence}, 11\penalty0 (7):\penalty0 674--693, 2002.

\bibitem[Maloney et~al.(2022)Maloney, Roberts, and Sully]{maloney2022solvable}
A.~Maloney, D.~A. Roberts, and J.~Sully.
\newblock A solvable model of neural scaling laws.
\newblock \emph{arXiv preprint arXiv:2210.16859}, 2022.

\bibitem[Mei and Montanari(2022)]{MM22}
S.~Mei and A.~Montanari.
\newblock The generalization error of random features regression: Precise asymptotics and the double descent curve.
\newblock \emph{Communications on Pure and Applied Mathematics}, 75\penalty0 (4):\penalty0 667--766, 2022.

\bibitem[Mei et~al.(2022)Mei, Misiakiewicz, and Montanari]{MMM22}
S.~Mei, T.~Misiakiewicz, and A.~Montanari.
\newblock Generalization error of random feature and kernel methods: hypercontractivity and kernel matrix concentration.
\newblock \emph{Applied and Computational Harmonic Analysis}, 59:\penalty0 3--84, 2022.

\bibitem[Micchelli et~al.(2006)Micchelli, Xu, and Zhang]{micchelli2006universal}
C.~A. Micchelli, Y.~Xu, and H.~Zhang.
\newblock Universal kernels.
\newblock \emph{Journal of Machine Learning Research}, 7\penalty0 (12), 2006.

\bibitem[Mousavi-Hosseini et~al.(2023)Mousavi-Hosseini, Wu, Suzuki, and Erdogdu]{MWSE23}
A.~Mousavi-Hosseini, D.~Wu, T.~Suzuki, and M.~A. Erdogdu.
\newblock Gradient-based feature learning under structured data.
\newblock \emph{Advances in Neural Information Processing Systems}, 36:\penalty0 71449--71485, 2023.

\bibitem[Nourdin and Peccati(2012)]{NP12}
I.~Nourdin and G.~Peccati.
\newblock \emph{Normal approximations with Malliavin calculus: from Stein's method to universality}, volume 192.
\newblock Cambridge University Press, 2012.

\bibitem[Pandit et~al.(2024)Pandit, Wang, and Zhu]{PWZ24}
P.~Pandit, Z.~Wang, and Y.~Zhu.
\newblock Universality of kernel random matrices and kernel regression in the quadratic regime.
\newblock \emph{arXiv preprint arXiv:2408.01062}, 2024.

\bibitem[Paquette et~al.(2024)Paquette, Paquette, Xiao, and Pennington]{paquette2024four}
E.~Paquette, C.~Paquette, L.~Xiao, and J.~Pennington.
\newblock 4+3 phases of compute-optimal neural scaling laws.
\newblock \emph{Advances in Neural Information Processing Systems}, 37:\penalty0 16459--16537, 2024.

\bibitem[Pillaud-Vivien et~al.(2018)Pillaud-Vivien, Rudi, and Bach]{pillaud2018statistical}
L.~Pillaud-Vivien, A.~Rudi, and F.~Bach.
\newblock Statistical optimality of stochastic gradient descent on hard learning problems through multiple passes.
\newblock \emph{Advances in Neural Information Processing Systems}, 31, 2018.

\bibitem[Ren et~al.(2025)Ren, Nichani, Wu, and Lee]{ren2025emergence}
Y.~Ren, E.~Nichani, D.~Wu, and J.~D. Lee.
\newblock Emergence and scaling laws in sgd learning of shallow neural networks.
\newblock \emph{arXiv preprint arXiv:2504.19983}, 2025.

\bibitem[Richards et~al.(2021)Richards, Mourtada, and Rosasco]{richards2021asymptotics}
D.~Richards, J.~Mourtada, and L.~Rosasco.
\newblock Asymptotics of ridge (less) regression under general source condition.
\newblock In \emph{International Conference on Artificial Intelligence and Statistics}, pages 3889--3897. PMLR, 2021.

\bibitem[Rudi and Rosasco(2017)]{rudi2017generalization}
A.~Rudi and L.~Rosasco.
\newblock Generalization properties of learning with random features.
\newblock \emph{Advances in neural information processing systems}, 30, 2017.

\bibitem[Sch{\"o}lkopf and Smola(2002)]{scholkopf2002learning}
B.~Sch{\"o}lkopf and A.~J. Smola.
\newblock \emph{Learning with kernels: support vector machines, regularization, optimization, and beyond}.
\newblock MIT press, 2002.

\bibitem[Simoncelli and Olshausen(2001)]{simoncelli2001natural}
E.~P. Simoncelli and B.~A. Olshausen.
\newblock Natural image statistics and neural representation.
\newblock \emph{Annual review of neuroscience}, 24\penalty0 (1):\penalty0 1193--1216, 2001.

\bibitem[Tenenbaum(2015)]{T15}
G.~Tenenbaum.
\newblock \emph{Introduction to analytic and probabilistic number theory}, volume 163.
\newblock American Mathematical Soc., 2015.

\bibitem[Tomasini et~al.(2022)Tomasini, Sclocchi, and Wyart]{tomasini2022failure}
U.~M. Tomasini, A.~Sclocchi, and M.~Wyart.
\newblock Failure and success of the spectral bias prediction for laplace kernel ridge regression: the case of low-dimensional data.
\newblock In \emph{International Conference on Machine Learning}, pages 21548--21583. PMLR, 2022.

\bibitem[Wang et~al.(2024)Wang, Wu, and Fan]{wang2024nonlinear}
Z.~Wang, D.~Wu, and Z.~Fan.
\newblock Nonlinear spiked covariance matrices and signal propagation in deep neural networks.
\newblock In \emph{The Thirty Seventh Annual Conference on Learning Theory}, pages 4891--4957. PMLR, 2024.

\bibitem[Watson(1964)]{W64}
G.~S. Watson.
\newblock Smooth regression analysis.
\newblock \emph{Sankhy{\=a}: The Indian Journal of Statistics, Series A}, pages 359--372, 1964.

\bibitem[Williams(1996)]{williams1996computing}
C.~Williams.
\newblock Computing with infinite networks.
\newblock \emph{Advances in neural information processing systems}, 9, 1996.

\bibitem[Yao et~al.(2007)Yao, Rosasco, and Caponnetto]{yao2007early}
Y.~Yao, L.~Rosasco, and A.~Caponnetto.
\newblock On early stopping in gradient descent learning.
\newblock \emph{Constructive approximation}, 26\penalty0 (2):\penalty0 289--315, 2007.

\bibitem[Ying and Pontil(2008)]{ying2008online}
Y.~Ying and M.~Pontil.
\newblock Online gradient descent learning algorithms.
\newblock \emph{Foundations of Computational Mathematics}, 8\penalty0 (5):\penalty0 561--596, 2008.

\end{thebibliography}
\clearpage
\clearpage
\bibliographystyle{abbrvnat}

%
%

\onecolumn
\newpage
\appendix
\section{Change of Basis Matrix}
\label{Appendix:Gramm_Schmidt}

In this section, we will work with a $D$-degree kernel $k$ of the following form:  
\[
k(x,x') = \sum_{k =0}^D h_{k} \langle x, x' \rangle^{k}, a_{k} \geq 0 \forall k \in [D]. 
\]
with $x,x' \in \RR^{d}$ with distribution $x,x' \mathcal{N}(0,\Sigma)$, and $\Sigma = \mathrm{diag}(\sigma_1, \dots, \sigma_d)$ and $a_k  \geq 0$. The ideas from this Appendix are closely related to \citep{LRZ20}, with the difference that since we are directly working with Gaussians, we can arrive to explicit expressions. If we further expand the inner product and write $z_i = \Sigma^{-\frac{1}{2}}x_i$, we get: 
\[
k(x,x') = \sum_{k =0}^D a_k \sum_{|\beta|=k} \binom{k}{\beta_1, \dots, \beta_d} x^{\beta} x'^{\beta} = \sum_{k =0}^D a_k \sum_{|\beta|=k} \binom{k}{\beta_1, \dots, \beta_d} \sigma_1^{\beta_1} \cdots \sigma_d^{\beta_d} z^{\beta} z'^{\beta}.
\]
Now, consider $n$ independent samples $x_1, \dots x_n$, and the kernel matrix associated to $k$, which we denote $k \in \RR{n \times n}$. For each $i \in [n]$ and each multi-index $\beta \in \ZZ^{d}_{\geq 0}$ with $|\beta|\leq D$,  let $\Phi_{i,\beta} \in \RR$ be defined by 
\begin{equation}
    \Phi_{i, \beta} = \sqrt{h_{k} \binom{|\beta|}{\beta_1, \dots, \beta_d} \sigma_1^{\beta_1} \cdots \sigma_d^{\beta_d}} z_i^{\beta},
\end{equation}
and let $\Phi_i \in \RR^{\binom{d+D}{D}}$ be defined by $\Phi_i = (\Phi_{i,\beta})_{|\beta| \leq D}$.  Then, we have that: 
\begin{equation}
\PP_{i,j} = \Phi_{i}^T \Phi_{j}  \quad \forall i, j \in [n]. 
\end{equation}
Now, for each $i \in [n]$, consider the vector $\Psi_i \in \RR^{\binom{d+D}{D}}$ with coordinates
\begin{equation}
    \Psi_{i, \beta} =  \sqrt{ \underbrace{h_{|\beta|} \binom{|\beta|}{\beta_1, \dots, \beta_d} \sigma_1^{\beta_1} \cdots \sigma_d^{\beta_d}}_{C_{\beta}}} He_\beta(z_i), \quad \beta \in \ZZ^{d}_{\geq 0}, |\beta |\leq D,
\end{equation}
where $He_\beta(z_i) = \prod_{a=1}^d he_{\beta_a}(z_a)$.  We will explicitly write a linear transformation $\Lambda \in \RR^{\binom{d+D}{D} \times \binom{d+D}{D}}$ so that $\Phi_i = \Lambda \Psi_i$. For this, we will use the following result:

\begin{lemma}
    Let $\beta \in ZZ^{d}_{\geq 0}$. Then, 
    \[
    z^{\beta} = \sum_{\substack{\bar{k} \leq \beta:  \\ \bar{k} = \beta \mod 2}} \left(\prod_{i=1}^d \frac{\beta_i!}{2^{(\beta_i-\bar{k}_i)/2}\,\big((\beta_i-\bar{k}_i)/2\big)! \sqrt{\bar{k}_i!}} \right) He_{\bar{k}}(z),
    \]
    where $He_{\bar{k}}$ denote the normalized Hermite polynomial in $\RR^{d}$, that is $He_{\bar{k}}(z) = he_{\bar{k}_1}(z_1) \cdots he_{\bar{k}_d}(z_d)$.
    \label{lemma:monomials_into_hermite_decomposition}
\end{lemma}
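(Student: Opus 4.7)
The plan is to reduce the multivariate identity to a one-dimensional one by exploiting the tensor structure of both sides. By definition $He_{\bar k}(z)=\prod_{i=1}^{d}he_{\bar k_i}(z_i)$, and of course $z^{\beta}=\prod_{i=1}^{d}z_i^{\beta_i}$. Hence it suffices to prove, for every $n\in\NN$, the univariate expansion
\[
z^{n} \;=\; \sum_{\substack{0\le k\le n\\ k\equiv n\,(\mathrm{mod}\,2)}} \frac{n!}{2^{(n-k)/2}\,((n-k)/2)!\,\sqrt{k!}}\, he_{k}(z),
\]
and then take a product over $i=1,\dots,d$: the coefficient in front of $He_{\bar k}(z)=\prod_i he_{\bar k_i}(z_i)$ factorizes as the claimed product, and the per-coordinate conditions $\bar k_i\le \beta_i$ and $\bar k_i\equiv \beta_i \pmod 2$ assemble into the single componentwise constraint $\bar k\le\beta$ with $\bar k\equiv\beta \pmod 2$.

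For the one-dimensional identity I would argue via the generating function, which is cleaner than the induction based on the three-term recurrence $z\,He_k(z)=He_{k+1}(z)+k\,He_{k-1}(z)$. Start from
\[
e^{tz-t^{2}/2} \;=\; \sum_{n\ge 0} He_{n}(z)\,\frac{t^{n}}{n!},
\]
and multiply both sides by $e^{t^{2}/2}=\sum_{j\ge 0}t^{2j}/(2^{j}j!)$ to obtain
\[
e^{tz} \;=\; \sum_{n,j\ge 0} \frac{He_{n}(z)}{n!\,2^{j}\,j!}\, t^{\,n+2j}.
\]
Comparing the coefficient of $t^{m}$ with $e^{tz}=\sum_{m\ge 0} z^{m}\, t^{m}/m!$ and reindexing through $k=n$, $j=(m-k)/2$, yields
\[
z^{m} \;=\; \sum_{\substack{0\le k\le m\\ k\equiv m\,(\mathrm{mod}\,2)}} \frac{m!}{k!\,2^{(m-k)/2}\,((m-k)/2)!}\, He_{k}(z).
\]
Substituting $He_{k}=\sqrt{k!}\,he_{k}$ absorbs a factor of $\sqrt{k!}$ into the denominator and produces exactly the coefficient in the statement.

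There is no genuine obstacle here: the argument is short and self-contained once one commits to the generating-function route. The only minor points of care are checking that the parity constraint $k\equiv m\pmod 2$ is forced automatically by requiring $j=(m-k)/2$ to be a non-negative integer, and that the multivariate assembly preserves the prefactor exactly. Both are purely bookkeeping and contribute no real analytic difficulty; the content of the lemma is essentially the classical one-variable expansion of monomials into probabilist's Hermite polynomials, rewritten in the normalized basis and tensorized across coordinates.
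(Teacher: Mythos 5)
Your proof is correct, and it fills in a step the paper deliberately omits: the authors state only that the lemma follows from ``a direct computation involving derivatives of monomials,'' which suggests they have in mind extracting the coefficients $\langle z^{\beta}, He_{\bar k}\rangle_{L^2(\gamma)}$ via the Rodrigues formula and repeated integration by parts. Your route through the generating function $e^{tz-t^2/2}=\sum_n He_n(z)t^n/n!$ is a genuinely different (and arguably cleaner) derivation of the same classical univariate inversion formula $z^m=\sum_{k\equiv m (2)}\frac{m!}{k!\,2^{(m-k)/2}((m-k)/2)!}He_k(z)$; multiplying by $e^{t^2/2}$ and matching coefficients of $t^m$ avoids any induction on the three-term recurrence and makes the parity constraint and the factor $2^{(m-k)/2}((m-k)/2)!$ appear automatically from the expansion of $e^{t^2/2}$. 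The normalization step $He_k=\sqrt{k!}\,he_k$ correctly converts $m!/k!$ into $m!/\sqrt{k!}$, matching the coefficient in the statement, and the tensorization over coordinates is immediate since both $z^\beta$ and the normalized multivariate Hermite polynomials factor coordinatewise, so the componentwise constraints $\bar k_i\le\beta_i$, $\bar k_i\equiv\beta_i\pmod 2$ assemble into the stated multi-index condition. What each approach buys: the derivative/integration-by-parts route generalizes more readily to non-Gaussian orthogonal polynomial systems where no simple exponential generating function exists, while your route is shorter and self-contained for the Gaussian case at hand.
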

We omit the proof of Lemma~\ref{lemma:monomials_into_hermite_decomposition} as it is a direct computation involving derivatives of monomials.  Then, by Lemma~\ref{lemma:monomials_into_hermite_decomposition} we can re-write $ \Psi_{i, \beta}$ decomposing it into the Hermite basis: 
\begin{align}
    \Phi_{i, \beta} &= \sqrt{ h_{|\beta|}\binom{|\beta|}{\beta_1, \dots, \beta_d} \sigma_1^{\beta_1} \cdots \sigma_d^{\beta_d}} z_i^{\beta} \\ 
    & =  \sqrt{  h_{|\beta|}\binom{|\beta|}{\beta_1, \dots, \beta_d} \sigma_1^{\beta_1} \cdots \sigma_d^{\beta_d}} \sum_{\substack{\bar{k} \leq \beta:  \\ \bar{k} = \beta \mod 2}} \left(\prod_{i=1}^d \frac{\beta_i!}{2^{(\beta_i-\bar{k}_i)/2}\,\big((\beta_i-\bar{k}_i)/2\big)! \sqrt{\bar{k}_i!}} \right) He_{\bar{k}}(z_i) \\ 
    & = \sqrt{h_{|\beta|} \binom{|\beta|}{\beta_1, \dots, \beta_d} \sigma_1^{\beta_1} \cdots \sigma_d^{\beta_d}} \sum_{\substack{\bar{k} \leq \beta:  \\ \bar{k} = \beta \mod 2}} \left(\prod_{i=1}^d
    \frac{\beta_i!}{2^{(\beta_i-\bar{k}_i)/2}\,\big((\beta_i-\bar{k}_i)/2\big)!\sqrt{\bar{k}_i!}}
    \right) \dfrac{\Psi_{i,\bar{k}}}{\sqrt{C_{\bar{k}}}} \\ 
    & = \sum_{\substack{\bar{k} \leq \beta:  \\ \bar{k} = \beta \mod 2}} \sqrt{ h_{|\beta|}\binom{|\beta|}{\beta_1, \dots, \beta_d} \sigma_1^{\beta_1} \cdots \sigma_d^{\beta_d}} \left(\prod_{i=1}^d \frac{\beta_i!}{2^{(\beta_i-\bar{k}_i)/2}\,\big((\beta_i-\bar{k}_i)/2\big)!} \right) \dfrac{\Psi_{i,\bar{k}}}{\sqrt{\bar{k}_i! C_{\bar{k}}}} 
\end{align}
Thus, we can define 
\begin{equation}
    \Lambda_{\beta, \bar{k}} = \left [\substack{\bar{k} \leq \beta:  \\ \bar{k} = \beta \mod 2} \right ]  \sqrt{ h_{|\beta|}\binom{|\beta|}{\beta_1, \dots, \beta_d} \sigma_1^{\beta_1} \cdots \sigma_d^{\beta_d}} \left(\prod_{i=1}^d \frac{\beta_i!}{2^{(\beta_i-\bar{k}_i)/2}\,\big((\beta_i-\bar{k}_i)/2\big)!} \right)  \dfrac{1}{\sqrt{\bar{k}_i! C_{\bar{k}}}}
\end{equation}
We can further manipulate this expression by inserting the definition of $C_{\bar{k}}$: 
\begin{align}
     \Lambda_{\beta, \bar{k}} &= \left [\substack{\bar{k} \leq \beta:  \\ \bar{k} = \beta \mod 2} \right ]  \sqrt{ h_{|\beta|}\binom{|\beta|}{\beta_1, \dots, \beta_d} \sigma_1^{\beta_1} \cdots \sigma_d^{\beta_d}} \left(\prod_{i=1}^d \frac{\beta_i!}{2^{(\beta_i-\bar{k}_i)/2}\,\big((\beta_i-\bar{k}_i)/2\big)!} \right)  \dfrac{1}{\sqrt{\bar{k}_i! C_{\bar{k}}}} \\ 
     & = \left [\substack{\bar{k} \leq \beta:  \\ \bar{k} = \beta \mod 2} \right ]  \sqrt{  h_{|\beta|}\binom{|\beta|}{\beta_1, \dots, \beta_d} \sigma_1^{\beta_1} \cdots \sigma_d^{\beta_d}} \left(\prod_{i=1}^d \frac{\beta_i!}{2^{(\beta_i-\bar{k}_i)/2}\,\big((\beta_i-\bar{k}_i)/2\big)!} \right)  \dfrac{1}{\sqrt{\bar{k}_i! }} \sqrt{\dfrac{1}{h_{|\bar{k}|} \binom{|\bar{k}|}{\bar{k}_1, \dots, \bar{k}_d} \sigma_1^{\bar{k}_1} \cdots \sigma_d^{\bar{k}_d}} } \\ 
     & = \left [\substack{\bar{k} \leq \beta:  \\ \bar{k} = \beta \mod 2} \right ]  \sqrt{  \dfrac{h_{|\beta|}\binom{|\beta|}{\beta_1, \dots, \beta_d}}{h_{|\bar{k}|}\binom{|\bar{k}|}{\bar{k}_1, \dots, \bar{k}_d}} \sigma_1^{\beta_1-\bar{k}_1} \cdots \sigma_d^{\beta_d-\bar{k}_d}} \left(\prod_{i=1}^d \frac{\beta_i!}{2^{(\beta_i-\bar{k}_i)/2}\,\big((\beta_i-\bar{k}_i)/2\big)!} \right)  \dfrac{1}{\sqrt{\bar{k}_i! }} \label{eq:def_Lambda_beta_bar_k} \\
     & = O \left ( \left [\substack{\bar{k} \leq \beta:  \\ \bar{k} = \beta \mod 2} \right ]  \sqrt{\dfrac{\sigma_1^{\beta_1-\bar{k}_1} \cdots \sigma_d^{\beta_d-\bar{k}_d}}{\deff^{|\beta|-|\bar{k}|}}} \right ) 
\end{align}
Note that 
\[
\Lambda_{\beta,\beta } = \sqrt{\beta_1 \cdots \beta_d!},
\]
and $\Lambda$ is a upper-triangular matrix, so $ \max \{\| \Lambda  \|_{op}, \| \Lambda^{-1}\|_{op}\} \leq C(D)$.  As we will see now, this construction will be fundamental in characterizing the spectrum of $P$ as an operator in $L^{2}(\gamma_d^{\alpha})$. Note that this is not the same as the empirical spectrum of the kernel matrix  $K$. \\

With this definition of $\Lambda$, we can write $\Phi_{i}$ as a linear transformation of $\Psi_{i}$. First: 
\begin{equation}
    \Phi_{i,\beta} = \sum_{\substack{\bar{k} \leq \beta \\ \bar{k} \equiv_2 \beta}} \Lambda_{\beta, \bar{k}} \Psi_{i, \bar{k}} = \sum_{\bar{k} \in \ZZ^{d}_{\geq 0}: |\bar{k}|\leq D} \Lambda_{\beta, \bar{k}} \Psi_{i, \bar{k}},
\end{equation}
and then
\begin{equation}
    \Phi_{i} = \Lambda \Psi_{i}.
\end{equation}
In matrix form, for $\Phi = [\Phi_{1}^T, \dots, \Phi_{n}^T]^T \in \RR^{\binom{d + D}{D} \times \binom{d + D}{D} }$, $\Psi = [\Psi_{1}^T, \dots, \Psi_{n}^T]^T \in \RR^{\binom{d + D}{D} \times \binom{d + D}{D} }$:
\begin{equation}
    \Phi = \Psi \Lambda. 
\end{equation}

We summarize this in the following Lemma, which is analogous to Proposition 1 in \citep{LRZ20}. 

\begin{lemma} Consider $x \sim \mathcal{N}(0,\Sigma)$, with $\Sigma = \mathrm{diag}(\sigma_1, \dots, \sigma_d)$, and let $P: \RR^d \times \RR^d \to \RR$ be the polynomial kernel 
\[
k(x,x') = \sum_{k =0}^D a_{k} \langle x, x' \rangle^{k},
\]
with $a_{k} \not =0 \forall k \in [D]$. Then, there exists an upper-triangular matrix $\Lambda \in \RR^{\binom{d + D}{D} \times \binom{d + D}{D}}$, which we index by multi-indices $\beta, \bar{k} \in \ZZ^{d}_{\geq 0}$ with $|\beta|, |\bar{k}| \leq D$, defined by
\[
\left [\substack{\bar{k} \leq \beta:  \\ \bar{k} = \beta \mod 2} \right ]  \sqrt{ \dfrac{h_{|\beta|}\binom{|\beta|}{\beta_1, \dots, \beta_d}}{h_{|\bar{k}|}\binom{|\bar{k}|}{\bar{k}_1, \dots, \bar{k}_d}} \sigma_1^{\beta_1-\bar{k}_1} \cdots \sigma_d^{\beta_d-\bar{k}_d}} \left(\prod_{i=1}^d \frac{\beta_i!}{2^{(\beta_i-\bar{k}_i)/2}\,\big((\beta_i-\bar{k}_i)/2\big)!} \right)  \dfrac{1}{\sqrt{\bar{k}_i! }},
\]
such that for two samples $x_i, x_j$, 
\[
P(x_i,x_j) = \Psi_{i}^T \Lambda^T  \Lambda \Psi_{j},
\]
with $\Psi \in \RR^{\binom{d+D}{D}}$ given by the Hermite polynomials features:
\[
\Psi_{i, \beta} =  \sqrt{h_{|\beta|} \binom{|\beta|}{\beta_1, \dots, \beta_d} \sigma_1^{\beta_1} \cdots \sigma_d^{\beta_d}} He_\beta(z_i), \quad \beta \in \ZZ^{d}_{\geq 0}, |\beta |\leq D.
\]
Moreover, $\max \{\| \Lambda\|_\mathrm{op}, \| \Lambda^{-1} \|_{op} \} \leq C(D)$. 
\label{lemma:Lambda_matrix_and_change_of_basis}
\end{lemma}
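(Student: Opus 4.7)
The plan is to follow the standard monomial-to-Hermite change-of-basis strategy for inner-product polynomial kernels, making the construction explicit via the Gaussian-specific orthogonal polynomial family. First I would expand $\langle x,x'\rangle^k$ by the multinomial theorem to write $k(x_i,x_j) = \Phi_i^\top \Phi_j$, where $\Phi_{i,\beta} = \sqrt{h_{|\beta|}\binom{|\beta|}{\beta}\sigma^\beta}\,z_i^\beta$ ranges over multi-indices $\beta \in \mathbb{Z}_{\geq 0}^d$ with $|\beta|\leq D$ and $z_i = \Sigma^{-1/2}x_i$. This is the canonical feature factorization for a polynomial kernel and requires no probabilistic input at this stage.

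Next I would invoke Lemma~\ref{lemma:monomials_into_hermite_decomposition} to rewrite each $z^\beta$ in the basis of normalized Hermite polynomials $He_{\bar{k}}$ with $\bar{k}\leq\beta$ and $\bar{k}\equiv\beta\pmod 2$. Substituting this into $\Phi_{i,\beta}$ and collecting the coefficients against the Hermite features $\Psi_{i,\bar{k}} = \sqrt{h_{|\bar{k}|}\binom{|\bar{k}|}{\bar{k}}\sigma^{\bar{k}}}\,He_{\bar{k}}(z_i)$ produces precisely the formula for $\Lambda_{\beta,\bar{k}}$ stated in the lemma, after extracting the normalization $C_{\bar{k}}$ used in the definition of $\Psi_{i,\bar{k}}$. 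The identity $k(x_i,x_j)=\Psi_i^\top\Lambda^\top\Lambda\Psi_j$ then follows from $\Phi_i = \Lambda\Psi_i$, and upper-triangularity of $\Lambda$ is immediate from the constraint $\bar{k}\leq\beta$ componentwise once multi-indices are ordered so that $|\beta|$ is non-decreasing.

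For the operator-norm bounds I would exploit three structural facts. First, since $\mathrm{Tr}\,\Sigma=1$ forces $\sigma_i\le 1$, the factor $\sigma^{\beta-\bar{k}}$ in each off-diagonal entry is bounded by one. Second, all remaining combinatorial factors involve only binomial and factorial quantities indexed by integers $\le|\beta|\le D$, so they are bounded by constants depending only on $D$. Third, the sparsity pattern of $\Lambda$ (defined by $\bar{k}\le\beta$ with matching parity and $|\bar{k}|,|\beta|\le D$) guarantees that each row and column contains at most $2^D$ non-zero entries, independent of $d$. Combining these three facts with a Schur-type estimate gives $\|\Lambda\|_{\mathrm{op}}\le C(D)$. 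For the inverse, I would produce the closed-form expansion of $He_{\bar{k}}$ in monomials $z^\gamma$ with $\gamma\le\bar{k}$ and identify the resulting coefficients as the entries of $\Lambda^{-1}$; the same three arguments then bound $\|\Lambda^{-1}\|_{\mathrm{op}}$.

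The main obstacle will be the uniform-in-$d$ control of $\|\Lambda^{-1}\|_{\mathrm{op}}$. Unlike the forward bound, where sparsity plus bounded entries immediately yield the estimate, inverting a triangular matrix whose ambient dimension $\binom{d+D}{D}$ grows polynomially with $d$ can in principle amplify the norm. The saving grace is that $\Lambda$ is not merely upper-triangular but block-triangular with blocks indexed by $|\beta|-|\bar{k}|\in\{0,2,\dots,D\}$, and this layered structure is preserved under inversion. Carrying out this verification, either by the explicit inverse Hermite–monomial expansion or by a Gram–Schmidt argument on the span of monomials of total degree $\le D$, is the principal technical step and is exactly what ties this construction to the framework of \cite{LRZ20}.
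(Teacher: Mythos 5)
Your construction of $\Lambda$ --- multinomial expansion of $\langle x,x'\rangle^{k}$ into monomial features $\Phi_{i,\beta}$, conversion to the Hermite basis via Lemma~\ref{lemma:monomials_into_hermite_decomposition}, extraction of the normalization $C_{\bar k}$, and upper-triangularity from $\bar k\leq\beta$ --- is exactly the route taken in Appendix~\ref{Appendix:Gramm_Schmidt}, so that part is fine. Where you go beyond the paper is the operator-norm bound, which the paper essentially asserts from $\Lambda_{\beta,\beta}=\sqrt{\beta!}$ and triangularity, deferring to \cite{LRZ20}; your attempt to make this explicit is welcome, but it contains a concrete error.

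The error is your third ``structural fact''. Each \emph{row} of $\Lambda$ indeed has at most $C(D)$ non-zero entries, since for fixed $\beta$ the admissible $\bar k$ number $\prod_{i}(\lfloor\beta_i/2\rfloor+1)\leq 2^{D}$. The analogous claim for \emph{columns} is false: for fixed $\bar k$ the admissible $\beta$ are $\bar k+2\zeta$ with $|\zeta|\leq(D-|\bar k|)/2$, and there are $\binom{d+\lfloor (D-|\bar k|)/2\rfloor}{d}=\Theta\bigl(d^{\lfloor (D-|\bar k|)/2\rfloor}\bigr)$ of them; e.g.\ the column $\bar k=0$ receives a contribution from every even multi-index of degree at most $D$. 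So ``bounded entries times bounded sparsity'' does not control the column sums and your Schur estimate breaks as written. The bound is recoverable, but only by actually summing the column: the entry carries the factor $\sqrt{\binom{|\beta|}{\beta}\sigma^{\beta-\bar k}/\binom{|\bar k|}{\bar k}}$, and using $\binom{2|\zeta|}{2\zeta}\leq C(D)\binom{|\zeta|}{\zeta}^{2}$ one reduces the column sum (up to $C(D)$ factors) to $\sum_{m\leq D/2}\sum_{|\zeta|=m}\binom{|\zeta|}{\zeta}\sigma^{\zeta}=\sum_{m\leq D/2}(\Tr\Sigma)^{m}$, which is $O(1)$ under the normalization $\Tr\Sigma=1$ that you (correctly, but tacitly relative to the lemma statement) assume. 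In other words, it is the summability of the entries along a column, not its sparsity, that saves the estimate. Your plan for $\Lambda^{-1}$ --- writing $\Lambda=\Lambda^{(0)}(I+N)$ with $\Lambda^{(0)}_{\beta,\beta}=\sqrt{\beta!}\geq 1$ and $N$ nilpotent of index at most $D/2+1$, then using the terminating Neumann series --- is sound and is more explicit than anything in the paper, but bounding $\|N\|_{\mathrm{op}}$ requires the corrected column-sum argument above, not column sparsity.
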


\subsection{Relation to the Eigenvalues of the Kernel Operator}

In this section, we will explain how we can use the matrix we constructed in the last section to obtain the eigenvalues of the truncated kernel operator. This argument is a modification of the one in \cite{LL13}. \\

We begin by writing the eigenvalue problem for the kernel 
\[
k(x,x') = \sum_{k =0}^D h_{k} \langle x, x' \rangle^{k}, h_{k} \geq 0 \quad \forall k \in [D]
\]
 as an operator in $L^2(\gamma^{\alpha}_d)$, where we denote $\gamma^{\alpha}_d$ as the gaussian measure we defined before with covariance parametrized by $\alpha$. Let $\varphi(x): \RR^d \to \RR^d$, and $\sigma \in \RR$. Then, our eigen problem is given by 
\begin{align}
    \lambda \varphi(x) & = \int_{\RR^d}\gamma_d^{\alpha}(dx') k(x,x') \varphi(x') \\ 
    & = \sum_{k = 0 }^D h_k \int_{\RR^d}\gamma^\alpha_d(dx') \langle x, x' \rangle^{k} \varphi(x') \\ 
    & = \sum_{k = 0 }^D  \sum_{\beta \in \ZZ^{d}_{\geq 0:}|\beta| = k} h_{|\beta|}\binom{k}{\beta_1, \dots, \beta_d}x^{\beta} \int_{\RR^{d}} \gamma^\alpha_d(dx') x'^{\beta}\varphi(x') \\ 
    & =  \sum_{k = 0 }^D h_k \sum_{\beta \in \ZZ^{d}_{\geq 0:}|\beta| = k}  \left (h_{|\beta|}\binom{|\beta|}{\beta_1, \dots, \beta_d} \right )^{\frac{1}{2}} x^{\beta} \underbrace{\int_{\RR^{d}}  \gamma^\alpha_d(dx')  \left ( h_{|\beta|}\binom{|\beta|}{\beta_1, \dots, \beta_d}\right )^{\frac{1}{2}} x'^{\beta}\varphi(x')}_{A_{\beta}}.
    \label{eq:def_A_beta}
\end{align}
Let 
\begin{equation}
    A_\beta:= \int_{\RR^{d}}\gamma^\alpha_d(dx') \left (h_{|\beta|}\binom{|\beta|}{\beta_1, \dots, \beta_d} \right )^{\frac{1}{2}}  x'^{\beta}\varphi(x')
    \label{eq:def_A_beta_2}
\end{equation}
Then, we can write equation \eqref{eq:def_A_beta} as: 
\begin{equation}
    \varphi(x) = \dfrac{1}{\lambda} \sum_{k = 0 }^D  h_k \sum_{\beta \in \ZZ^{d}_{\geq 0:}|\beta| = k} \left (h_{|\beta|}\binom{|\beta|}{\beta_1, \dots, \beta_d} \right )^{\frac{1}{2}}  A_{\beta}x^{\beta}.
\end{equation}
Replacing this the definition of $A_\beta$ in Equation \eqref{eq:def_A_beta_2} we get: 
\begin{align}
    A_\beta & = \int_{\RR^{d}} \gamma^\alpha_d(dx') \left (h_{|\beta|}\binom{|\beta|}{\beta_1, \dots, \beta_d}\right )^{\frac{1}{2}}  x'^{\beta} \left ( 
    \dfrac{1}{\lambda} \sum_{k = 0 }^D \sum_{\gamma \in \ZZ^{d}_{\geq 0:}|\gamma| = k} \left (h_{|\gamma|} \binom{|\gamma|}{\gamma_1, \dots, \gamma_d} \right )^{\frac{1}{2}}  A_{\gamma}x'^{\gamma}
    \right ) \\
    & = \dfrac{1}{\lambda}\sum_{k = 0 }^D  \sum_{\gamma \in \ZZ^{d}_{\geq 0:}|\gamma| = k}   \left ( h_{|\beta|}\binom{|\beta|}{\beta_1, \dots, \beta_d} \right )^{\frac{1}{2}}
    \left ( h_{|\gamma|}\binom{|\gamma|}{\gamma_1, \dots, \gamma_d} \right )^{\frac{1}{2}} A_\gamma \int_{\RR^{d}} \gamma^\alpha_d(dx') x^{\beta + \gamma}. 
\end{align}
Let $m_{\beta + \gamma}: = \int_{\RR^{d}} \gamma^\alpha_d(dx') x^{\beta + \gamma}$. Then: 
\begin{align}
 A_\beta & = \dfrac{1}{\lambda}\sum_{k = 0 }^D \sum_{\gamma \in \ZZ^{d}_{\geq 0:}|\gamma| = k}  \left ( h_{|\beta|}\binom{|\beta|}{\beta_1, \dots, \beta_d}\right )^{\frac{1}{2}}
    \left ( h_{|\gamma|} \binom{|\gamma|}{\gamma_1, \dots, \gamma_d}\right )^{\frac{1}{2}} A_\gamma  m_{\beta + \gamma}.
    \label{eq:A_beta_matrix_equation}
\end{align}
Let $\mathcal{S}^{D} = \left \{ \beta \in \ZZ^{d}_{\geq 0}: |\beta| \leq D \right \}$, and note that by a standard combinatorial argument, $|\mathcal{S}^D| = \binom{d+D}{D}$. Motivated by \eqref{eq:A_beta_matrix_equation}, we define the following matrix $M$ indexed by $\beta, \gamma \in \mathcal{S}^D$: 
\begin{equation}
    M_{\beta, \gamma}^{D} := \left (h_{|\beta|}\binom{|\beta|}{\beta_1, \dots, \beta_d}\right )^{\frac{1}{2}}
   \left ( h_{|\gamma|}\binom{|\gamma|}{\gamma_1, \dots, \gamma_d}\right )^{\frac{1}{2}}m_{\alpha + \gamma} . 
    \label{eq:def_M}
\end{equation}
Denote $A := (A_\beta)_{\beta \in \mathcal{S}^D}$. Then, we can re-write Equation \eqref{eq:A_beta_matrix_equation} by using this matrix obtaining:
\begin{equation}
     \lambda A = M A.
     \label{eq:eigenvalue_equivalence}
\end{equation}
Thus, we conclude that the eigenvalues  of the integral operator associated to the kernel $P$ are the same as the eigenvalues of the matrix $M^D$ from Equation \eqref{eq:def_M}. Thus, we can focus on studying the eigenvalues of $M^D$. Note that, by our construction in Proposition~\ref{lemma:Lambda_matrix_and_change_of_basis}, for any $i \in [n]$ 
\begin{equation}
    M = \EE[\Phi_i \Phi_i^T] = \EE[\Lambda \Psi_i \Psi_i^T \Lambda^T] = \Lambda \EE[\Psi_i \Psi_i^T]  \Lambda^T.
\end{equation}
Note that, by the orthogonality of Hermite polynomials, $\EE[\Psi_i \Psi^T]$ is a diagonal matrix with eigenvalues given by the expression in Proposition~\ref{prop:eigenvalues_operator}. On the other hand, by Ostrowski's Theorem ( \cite{HJ12}, Theorem 4.5.9), we get that since in our construction $\max \{\| \Lambda  \|_{op}, \| \Lambda^{-1}\|_{op}\} \leq C(D)$, we can conclude Proposition~\ref{prop:eigenvalues_operator}.  \\

\begin{remark}
    Note that, our procedure actually get's very precise eigenvalues: By writing the Singular Value Decomposition of $\Lambda$, we can actually see that the eigenvalues of $M$ will be exactly: 
    \[
    \lambda_{\beta} = h_{|\beta|}\binom{|\beta|}{\beta_1, \dots, \beta_d} \sigma_1^{\beta_1} \cdots  \sigma_1^{\beta_d} \cdot \beta_1! \cdots \beta_d ! = h_{|\beta|}|\beta|! \sigma_1^{\beta_1} \cdots  \sigma_1^{\beta_d}.
    \]
    \label{remark:exact_eigenvalues}
\end{remark}

\subsection{Proof of Corollary~\ref{corollary:continous_kernel_eigenvalues}}

Consider a function $h: \RR \to \RR$ satisfying Assumption~\ref{Assumptions_h}. Then, given $x,x' \sim \gamma_d^{\alpha}$, we have:
\begin{equation}
    k(x,x') = h(\langle x, x'\rangle) = \sum_{k \geq 0} h_{k } \langle x, x'\rangle^k. 
\end{equation}
We can re-write this as: 
\begin{equation}
    k(x,x') = k^{\leq D}(x,x') + k^{>D}(x,x'),
\end{equation}
for $k^{\leq D}(x,x') = \sum_{k=0}^{D}h_k \langle x,x'\rangle^{k}$, and $k^{>D}(x,x') =h^{>D}(\langle x, x'\rangle ) = \sum_{k > D} h_k \langle x, x'\rangle ^{k}$. We now recall the following useful inequality

\begin{lemma}[Hoffman-Wielandt Inequality, Theorem 2.2 in \cite{KG00}] 
If $A$ and $B$ are normal operators in $\RR^d$, in particular if they are symmetric, then
\[
\delta_2(\lambda(A), \lambda(B)) \leq \| A - B\|_{HS},
\]
where $\lambda(A), \lambda(B) \in \mathcal{\ell}^{2}(\RR)$ are the ordered eigenvalues of $A$ and $B$, and $\delta_2$ is given by
\[
\delta_2(\lambda(A), \lambda(B)) = \sum_{k \geq 0} (\lambda(A)_k - \lambda(B)_k)^2.
\]
\label{lemma:hoffman_wielandt}
\end{lemma}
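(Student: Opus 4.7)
The plan is to prove the Hoffman--Wielandt inequality via the standard diagonalization plus Birkhoff--von Neumann argument. First, I would invoke the spectral theorem: since $A$ and $B$ are normal, write $A = U \Lambda_A U^{\ast}$ and $B = V \Lambda_B V^{\ast}$, where $\Lambda_A, \Lambda_B$ are diagonal matrices containing the eigenvalues $\lambda(A)_k, \lambda(B)_k$, and $U, V$ are unitary. Using the unitary invariance of the Hilbert--Schmidt norm, I would reduce the problem to
\[
\| A - B\|_{\mathrm{HS}}^{2} = \| \Lambda_A - W \Lambda_B W^{\ast}\|_{\mathrm{HS}}^{2}, \qquad W := U^{\ast} V \text{ unitary.}
\]

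Next, I would expand the squared norm as
\[
\| \Lambda_A - W \Lambda_B W^{\ast}\|_{\mathrm{HS}}^{2} = \sum_{i} |\lambda(A)_i|^{2} + \sum_{j} |\lambda(B)_j|^{2} - 2\,\mathrm{Re}\,\mathrm{Tr}(\Lambda_A W \Lambda_B^{\ast} W^{\ast}).
\]
A direct index computation gives $\mathrm{Tr}(\Lambda_A W \Lambda_B^{\ast} W^{\ast}) = \sum_{i,j} \lambda(A)_i \overline{\lambda(B)_j}\, P_{ij}$ where $P_{ij} := |W_{ij}|^{2}$. The key structural observation is that $P$ is \emph{doubly stochastic}: its rows and columns each sum to $1$ because $W$ is unitary.

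With this in hand, I would appeal to the Birkhoff--von Neumann theorem, which identifies the set of doubly stochastic matrices as the convex hull of permutation matrices. Since the quantity we are trying to lower bound (the Hilbert-Schmidt distance) is an affine function of $P$, its infimum over doubly stochastic $P$ is attained at a permutation matrix $P_\pi$, and for that matrix the expression collapses to $\sum_i |\lambda(A)_i - \lambda(B)_{\pi(i)}|^{2}$. Therefore
\[
\| A - B\|_{\mathrm{HS}}^{2} \geq \min_{\pi \in S_d} \sum_{i=1}^{d} \big|\lambda(A)_i - \lambda(B)_{\pi(i)}\big|^{2}.
\]

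The final step is to identify which permutation minimizes the right-hand side. Since $A$ and $B$ are symmetric (real spectra), I would invoke the rearrangement inequality: among all permutations $\pi$, the sum $\sum_i |\lambda(A)_i - \lambda(B)_{\pi(i)}|^{2}$ is minimized precisely when both sequences are sorted in the same (non-increasing) order, i.e.\ at $\pi = \mathrm{id}$ given our sorting convention. This yields $\| A - B\|_{\mathrm{HS}}^{2} \geq \sum_i (\lambda(A)_i - \lambda(B)_i)^{2} = \delta_2(\lambda(A), \lambda(B))$, which is the desired inequality. The main obstacle I anticipate is the Birkhoff step: verifying that doubly stochastic extremality really does collapse the minimization to permutations and carefully justifying the passage to a minimizing $\pi$ (rather than a supremum). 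The rearrangement step is secondary but crucially uses the self-adjoint hypothesis, since for truly normal operators with complex spectra one only recovers the weaker ``optimal matching'' version without a canonical identity-permutation statement.
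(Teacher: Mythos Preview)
The paper does not actually prove this lemma; it is stated as Theorem~2.2 of \cite{KG00} and used as a black box to control the eigenvalue perturbation when passing from the truncated kernel $k^{\leq D}$ to the full kernel $k$. So there is no ``paper's own proof'' to compare against.

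That said, your argument is the standard and correct proof of Hoffman--Wielandt: spectral theorem, unitary invariance of $\|\cdot\|_{\mathrm{HS}}$, the doubly-stochastic structure of $P_{ij}=|W_{ij}|^2$, Birkhoff--von Neumann to reduce to permutations, and the rearrangement inequality to identify the sorted matching as optimal in the real-symmetric case. One small remark: what you actually establish is $\|A-B\|_{\mathrm{HS}}^{2} \geq \sum_k (\lambda(A)_k - \lambda(B)_k)^2$, whereas the lemma as stated in the paper writes $\delta_2 \leq \|A-B\|_{\mathrm{HS}}$ with $\delta_2$ defined as the unsquared sum. This is a typographical inconsistency in the paper's statement (either $\delta_2$ should carry a square root or the right-hand side should be squared); your version is the correct normalization.
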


From Lemma~\ref{lemma:hoffman_wielandt}, we get that: 
\begin{equation}
    \delta_2 (\lambda (k^{\leq D}), \lambda(k)) \leq \| k^{>D}\|_{HS}.
\end{equation}

By the smoothness assumptions we have on $h$, we can make the RHS as small as we want. In particular, if we fix a particular eigenvalue of $k^{\leq D}$, denoted by $\lambda_{\beta}$ for $\beta \in \ZZ^{d}_{\geq 0}$, then as long as $D$ is big enough so that $\| k^{>D}\|_{HS} \ll \lambda_{\beta}$, then we will have that there exists $\lambda(k)$, eigenvalue of $k$, and constants $c_1, c_2$ such that $ c_1 \lambda_\beta \leq \lambda(k) \leq c2\lambda(k)$. 

\newpage 
\section{Ordering the Spectrum}

For this section, most of the time we will write $A = C\cdot B$ to denote the fact that there exists constants $C_1, C_2$ such that $C_1 \cdot B \leq A \leq C_2 \cdot B $. We do this to avoid using cumbersome notation. \\ 

\subsection{Ordering the spectrum for Monomials}
We will first consider the particular case of the kernel $k:\RR^{d} \times \RR^{d} \to \RR$ given by $K(x,x') = \langle x, x'\rangle^{D}$ for some $D \in \NN$, and $x,x ' \sim \gamma_d^{\alpha}$ defined in \eqref{eq:cov_eigenvalues}. We can apply Proposition~\ref{prop:eigenvalues_operator} to get that, for all $\beta \in \ZZ^{d}_{\geq 0}$, there exists an eigenvalue $\lambda_{\beta}$, and constants (that don't depend on $\beta$) such that: 
\begin{equation}
    C_1 \sigma_1^{\beta_!} \cdots \sigma_d^{\beta_d} \leq \lambda_{\beta} \leq C_2 \sigma_1^{\beta_!} \cdots \sigma_d^{\beta_d}.
\end{equation}
Now, define $M(\varepsilon) := \left |\lambda: \lambda \geq \varepsilon \right | $. Then, we get: 
\begin{align}
    M(\varepsilon) & =  \left |\{ \lambda: \lambda \geq \varepsilon \}\right |  \\
    & =  \left |\{\beta \in \ZZ^{d}_{\geq 0}: |\beta|=D, \lambda_{\beta} \geq \varepsilon  \}\right | 
    \\
    & = \left |\{\beta \in \ZZ^{d}_{\geq 0}: |\beta|=D, \sigma_1^{\beta_1} \cdots \sigma_d^{\beta_d} \geq \varepsilon \} \right |.
\end{align}
Since by definition we have that $\sigma_j = C_{\alpha}j^{-\alpha} = \frac{j^{-\alpha}}{r_0(\Sigma)}$, we can re-write this as: 
\begin{align}
    M(\varepsilon) & = \left |\left \{\beta \in \ZZ^{d}_{\geq 0}: |\beta|=D, \left ( \prod_{j=1}^d j^{\beta_j} \right )^{-\alpha} \geq r_0(\Sigma)^D\varepsilon \right \}\right | \\
    & = \left |\left \{\beta \in \ZZ^{d}_{\geq 0}: |\beta|=D, \prod_{j=1}^d j^{\beta_j} \leq \dfrac{1}{r_0(\Sigma)^{\frac{D}{\alpha}}\varepsilon^{\frac{1}{\alpha}}} \right \}\right | \\
    & = \left |\left \{ (i_1, \dots, i_{D}): 1\leq i_1 \leq i_2 \leq \dots \leq i_{D} \leq d,  \prod_{j=1}^D i_{D}\leq \dfrac{1}{r_0(\Sigma)^{\frac{D}{\alpha}}\varepsilon^{\frac{\alpha}{\alpha}}} \right \} \right |
    \label{eq:mid_step_ordering_eigenvalues_1}
\end{align}
Now, let $X_D(L):= \{ |(i_1, \dots, i_{D}): 1\leq i_1 \leq \dots \leq i_{D} \leq d, \prod_{j=1}^D i_{D}\leq L | \}$. We can write the following recursion following \cite{T15}, Chapter I.3: 
\begin{align}
    X_D(L) & = \sum_{i_1=1}^{d} X_{D-1} \left (\left \lfloor \dfrac{L}{i_1} \right \rfloor \right ),
\end{align}
which we obtained just by fixing the first coordinate. We can then iterate this $D-1$ times to get: 
\begin{align}
    X_D(L) & = \sum_{i_1=1}^{d} \dots  \sum_{i_{D-1}=1}^{d} X_{1} \left (\left \lfloor \dfrac{L}{i_1 \cdots i_{D-1}} \right \rfloor \right ).
    \label{eq:mid_step_ordering_eigenvalues_2}
\end{align}
Note that $X_{1} \left (\left \lfloor \dfrac{L}{i_1 \cdots i_{D-1}} \right \rfloor \right )$ corresponds to the number of integers below this threshold, so $X_{1} \left (\left \lfloor \dfrac{L}{i_1 \cdots i_{D-1}} \right \rfloor \right ) = \left \lfloor \dfrac{L}{i_1 \cdots i_{D-1}} \right \rfloor$. We can replace this in Equation~\eqref{eq:mid_step_ordering_eigenvalues_2} to get: 
\begin{equation}
    X_D(L) = C L \mathrm{poly}\log(L). 
\end{equation}
Then, going back to Equation~\eqref{eq:mid_step_ordering_eigenvalues_1}, we obtain: 
\begin{equation}
    M(\varepsilon) = C \dfrac{\log(d)}{r_0(\Sigma)^{\frac{D}{\alpha}}\varepsilon^{\frac{1}{\alpha}}}.
    \label{eq:M_epsilon_order_D}
\end{equation}
Inverting this equation we get: 
\begin{equation}
    \varepsilon(M) = C \dfrac{M^{-\alpha}\log(d)}{r_0(\Sigma)^{D}}, 
\end{equation}
which is telling us that the $M$-th eigenvalue of order $ C \dfrac{M^{-\alpha}\log(d)}{r_0(\Sigma)^{D}}$. This is precisely the result in Corollary~\ref{cor:power_law_decay_monomial}.  

\subsection{Ordering the Spectrum for Finite-degree polynomials}

Now, we consider the more challenging problem where 
\begin{equation}
    k(x,x') = \sum_{k=0}^{D} h_{k} \langle x,x' \rangle^{k},
\end{equation}
and $x, x' \sim \gamma_d^{\alpha}$. In order to derive the correct ordering of the eigenvalues, spectral gaps will play a crucial role. To see this, we prove the following Lemma that characterizes when do inner product kernels have spectral gaps. 

\begin{lemma}[Spectral Gaps]
Let $\ell \in \NN$, and assume $\frac{1}{\ell + 2} \leq \alpha \leq \frac{1}{\ell + 1}$. Then, there exists a finite number of spectral gaps. In particular, between levels with multi-indices $\beta \in \ZZ^{d}_{\geq 0}$ with $|\beta|=j$ and $|\beta| = j+1$, for all $j \leq \ell$ we there is a spectral gap. 
\label{lemma:spectral_gaps}
\end{lemma}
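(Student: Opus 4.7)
The plan is to locate the extremal eigenvalues within each level and show that for $\alpha$ small enough, the minimum eigenvalue at level $j$ dominates the maximum eigenvalue at level $j+1$, thereby producing a spectral gap. Concretely, I will invoke \cref{prop:eigenvalues_operator} (together with the sharper formula $\lambda_\beta = h_{|\beta|}\,|\beta|!\,\sigma_1^{\beta_1}\cdots\sigma_d^{\beta_d}$ noted in \cref{remark:exact_eigenvalues}), which reduces the question to a purely combinatorial optimisation over multi-indices with $|\beta|=j$ and $|\beta|=j+1$, the only $d$-dependence entering through the $\sigma_i$'s.

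First, since $\sigma_1 \geq \sigma_2 \geq \cdots \geq \sigma_d$, the map $\beta \mapsto \sigma_1^{\beta_1}\cdots\sigma_d^{\beta_d}$ restricted to $|\beta|=j$ is maximised at $\beta = (j,0,\ldots,0)$ and minimised at $\beta=(0,\ldots,0,j)$. Using $\sigma_j = C_\alpha j^{-\alpha}$ with $C_\alpha = r_0(\Sigma)^{-1}$, this gives, up to constants independent of $d$,
\[
\lambda_{\min}^{(j)} \asymp h_j\, j!\, r_0(\Sigma)^{-j}\, d^{-\alpha j}, \qquad \lambda_{\max}^{(j+1)} \asymp h_{j+1}\,(j+1)!\, r_0(\Sigma)^{-(j+1)}.
\]
Taking the ratio and absorbing the $d$-independent factors $h_j j!/(h_{j+1}(j+1)!)$ into a constant $c_j>0$, the gap condition $\lambda_{\min}^{(j)} > \lambda_{\max}^{(j+1)}$ reduces to
\[
\frac{r_0(\Sigma)}{d^{\alpha j}} \gtrsim 1.
\]

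Next, I substitute the asymptotic $r_0(\Sigma) \asymp d^{1-\alpha}$ from \cref{eq:asymptotics_effective dimension_1} (valid for $\alpha\in[0,1)$), so the condition becomes $d^{1-\alpha(j+1)} \gtrsim 1$, which holds for all sufficiently large $d$ precisely when $\alpha(j+1) < 1$, i.e.\ $\alpha < 1/(j+1)$. Under the hypothesis $\alpha \in [1/(\ell+2), 1/(\ell+1)]$ this inequality is satisfied for every $j \leq \ell$ (strictly, once $\alpha < 1/(\ell+1)$; the boundary case can be handled by the strict inequality appearing at the next level, or simply by noting the lemma is stated up to boundary). Conversely, for $j=\ell+1$ the exponent $1-\alpha(\ell+2)\leq 0$, so the inequality fails and no further gaps can persist, which also explains why only finitely many gaps remain.

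The one subtlety to address is that \cref{prop:eigenvalues_operator} only controls the eigenvalues up to $\beta$-dependent constants $C_{1,\beta}, C_{2,\beta}$. I would argue that for fixed $j$, the set of multi-indices $\{|\beta|=j\}$ is finite (of cardinality $\binom{d+j-1}{j}$ with $d$-dependence only through the count, not the constants per class), and the constants appearing in Proposition 1 depend only on $j$ through the kernel Taylor coefficients and the combinatorics; this makes the ratio argument above robust once $d$ is large. I expect this is the only mildly delicate point: checking that the constants do not conspire with $d$ to close the gap. Modulo that bookkeeping, the proof is a direct comparison.
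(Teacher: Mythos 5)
Your proposal is correct and follows essentially the same route as the paper: identify the minimal eigenvalue at level $j$ (multi-index concentrated on $\sigma_d$) and the maximal eigenvalue at level $j+1$ (concentrated on $\sigma_1$), compare them via \cref{prop:eigenvalues_operator}, and substitute $r_0(\Sigma)\asymp d^{1-\alpha}$ to reduce the gap condition to $\alpha \leq 1/(j+1)$. Your additional remark about tracking the $\beta$-dependent constants is a reasonable point of care, but it does not change the argument, which is exactly the paper's.
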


\begin{proof}
By the structure we found in Proposition~\ref{prop:eigenvalues_operator}, for the power law setting, we have that the $\ell$-th level of eigenvalues of the kernel is separated from the $\ell+1$-th if and only if: 
\begin{equation}
     \dfrac{1}{r_0(\Sigma)^{\ell}d^{\alpha \cdot \ell}}  > \dfrac{1}{r_{0}(\Sigma)^{\ell+1}}. 
    \label{ineq:spectral_gap_1}
\end{equation}
Hence, from \cref{eq:asymptotics_effective dimension_1} we conclude that for $\alpha >1$ there are no spectral gaps in high dimensions, as $r_0(\Sigma) = O_d(1)$. However, when $\alpha \in [0,1)$, we have that $r_0(\Sigma) \asymp  d^{1- \alpha}$, so \cref{ineq:spectral_gap_1} becomes: 
\begin{equation}
    \dfrac{cd^{1-\alpha}}{d^{\alpha \cdot \ell}} > 1,
\end{equation}
from where we conclude that a necessary condition to have a spectral gap between levels $\ell$ and $k +1$ in high dimensions is: 
\begin{equation}
    (1-\alpha) \geq \alpha \cdot \ell \iff \alpha \leq \frac{1}{\ell+1}. 
\end{equation}
In particular, note that having a spectral gap between levels $\ell$ and $\ell+1$ implies a spectral gap between levels $j$ and $j+1$ for all $j \in [k]$. From here, we conclude that if we also have $\alpha \geq \frac{1}{\ell + 2}$, then there are no spectral gaps for $j \geq \ell + 1$. 
\end{proof}

\subsection*{The Order of the Eigenvalues - Proof of Proposition~\ref{prop:spectrum_power_law_final}}

We can now go back to our setting with 
\begin{equation}
    k(x,x') = \sum_{k=0}^{D} h_{k} \langle x,x' \rangle^{k},
\end{equation}
and $x, x' \sim \gamma_d^{\alpha}$.  From Lemma~\ref{lemma:spectral_gaps}, we know that we have two different cases: Either $\alpha \in [\frac{1}{\ell + 2}, \frac{1}{\ell +1})$ for some $\ell \in \NN$, or $\alpha \geq \frac{1}{2}$. In the first case, until we get to the eigenvalues $\lambda_{\beta}$ with $|\beta|\geq \ell+1$, there will be spectral gaps and the result will just follow from Corollary~\ref{cor:power_law_decay_monomial}. We study this in the following

\begin{lemma}
    Assume $\alpha \in [0,\frac{1}{D})$. Denote by $B_j = \binom{d-1 + j}{d-1}$, and $S_{L} = \sum_{j=0}^{L} B_j = \bigl [ \binom{L + d}{L} \bigr ]$, for $L \leq D$. Let $m \in [B_D]$, and assume there exists $j \leq D-1$ such that $S_j < m \leq S_{j+1}$. Then, there exists constants $C_1, C_2$, only depending on $\alpha$ and $j$ such that
    \[
     C_1 \cdot \dfrac{(m - S_j + 1)^{-\alpha}\log(d)}{r_0(\Sigma)^{j}} \leq \lambda_m \leq  C_2 \dfrac{(m - S_j + 1)^{-\alpha}\log(d)}{r_0(\Sigma)^{j}}
    \]
    \label{lemma:ordering_spectrum_finite_degree_spectral_gaps}
\end{lemma}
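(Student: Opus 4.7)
The plan is to reduce Lemma~2 to two ingredients already established earlier in this appendix: the spectral-gap lemma (Lemma~2.1) and the divisor-counting argument used in the proof of Corollary~\ref{cor:power_law_decay_monomial}. The first separates the levels $|\beta| = 0,1,\dots,D$ cleanly in the ordering, and the second orders the eigenvalues within a single level.

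\textbf{Step 1 (decoupling of levels).} Since $\alpha \in [0,\tfrac{1}{D})$, applying Lemma~\ref{lemma:spectral_gaps} with $\ell = D-1$ yields a spectral gap between every consecutive pair of levels $j, j+1$ for $j \leq D-1$. Indeed, the smallest eigenvalue of level $j$ is of order $d^{-\alpha j}/r_0(\Sigma)^{j}$, whereas the largest of level $j+1$ is of order $r_0(\Sigma)^{-(j+1)} \asymp d^{-(1-\alpha)(j+1)}$, and the condition $\alpha < 1/D$ exactly ensures the former dominates asymptotically. Consequently, when the $S_D$ eigenvalues are sorted non-increasingly, the first $B_0 = 1$ are those of level $0$, the next $B_1 = d$ those of level $1$, and in general the block with indices $m \in (S_{j}, S_{j+1}]$ is precisely the set of level-$(j+1)$ eigenvalues. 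In particular, the $m$-th overall eigenvalue corresponds to the $(m - S_j)$-th largest eigenvalue inside level $j+1$.

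\textbf{Step 2 (within-level counting).} By Proposition~\ref{prop:eigenvalues_operator} (and Remark~\ref{remark:exact_eigenvalues}), for $|\beta| = j+1$ one has $\lambda_\beta \asymp h_{j+1}(j+1)!\,\sigma^{\beta}$, so the ordering within the level is determined solely by $\sigma^{\beta} = C_\alpha^{\,j+1}\prod_{a=1}^{d} a^{-\alpha\beta_a}$. Following the same reduction as in the proof of Corollary~\ref{cor:power_law_decay_monomial}, the count
\begin{equation*}
M_{j+1}(\varepsilon) \;:=\; \bigl|\{\beta \in \ZZ^{d}_{\geq 0} : |\beta|=j+1,\ \sigma^{\beta} \geq \varepsilon \}\bigr|
\end{equation*}
equals the number of non-decreasing tuples $(i_1 \leq \dots \leq i_{j+1})$ in $[d]$ with $\prod_{a} i_a \leq L$, where $L = \varepsilon^{-1/\alpha} r_0(\Sigma)^{-(j+1)/\alpha}$. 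I then iterate the Tenenbaum-style recursion of the Corollary~\ref{cor:power_law_decay_monomial} proof $j$ times (instead of $D-1$ times), obtaining $M_{j+1}(\varepsilon) \asymp L \cdot \mathrm{polylog}(L)$ with constants depending only on $j+1$ and $\alpha$.

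\textbf{Step 3 (inversion).} Inverting the monotone relation $k = M_{j+1}(\varepsilon_k)$ yields the $k$-th largest level-$(j+1)$ eigenvalue
\begin{equation*}
\varepsilon_k \;\asymp\; \dfrac{k^{-\alpha}\,\mathrm{polylog}(d)}{r_0(\Sigma)^{j+1}},
\end{equation*}
and substituting $k = m - S_j$ from Step~1 gives the claimed two-sided bound. The main obstacle is not conceptual but bookkeeping: one must verify that the divisor-recursion constants remain independent of $d$ through all $j$ iterations, and that the poly-logarithmic factors coming from the $\lfloor L/(i_1 \cdots i_{j}) \rfloor$ summations match on both sides of the inversion. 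Steps~1 and~3 are essentially restatements of earlier material.
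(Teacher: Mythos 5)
Your proposal is correct and follows essentially the same route as the paper: use the spectral-gap lemma (valid for all levels since $\alpha < 1/D$) to conclude that the $m$-th eigenvalue with $S_j < m \leq S_{j+1}$ is the $(m-S_j)$-th largest within level $j+1$, then invoke the divisor-counting/inversion argument of Corollary~\ref{cor:power_law_decay_monomial} to order that level; you merely unfold the counting and inversion steps that the paper cites as a black box. Note that your derivation naturally produces $r_0(\Sigma)^{-(j+1)}$ for level $j+1$, whereas the lemma as stated writes $r_0(\Sigma)^{-j}$; this appears to be an indexing typo in the paper (its own \cref{prop:spectrum_power_law_final} uses $r_0(\Sigma)^{-(j+1)}$ for the same block), so your version is the consistent one.
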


\begin{proof}
    Since $\alpha \in [0,\frac{1}{D})$, \cref{lemma:spectral_gaps} tells us that there are spectral gaps for all different levels in this kernel. More precisely, denoting the eigenvalues of the kernel by $\lambda_{\beta}$, for $\beta \in \ZZ^{d}_{\geq 0}$, with $|\beta|\leq D$, we will have that $|\beta| < |\gamma|$ implies $\lambda_{\beta} > \lambda_{\gamma}$. \\ 

    Now, consider our case $ S_j < m \leq S_{j+1}$. We will then have that the m-th eigenvalue $\lambda_m$ will belong to the level of eigenvalues with $|\beta| = j+1$. Hence, by Corollary~\ref{cor:power_law_decay_monomial}, we will get that there exists constants $C_1, C_2$ such that: 
    \[
    C_1 \cdot \dfrac{(m - S_j + 1)^{-\alpha}\log(d)}{r_0(\Sigma)^{j}} \leq \lambda_m \leq  C_2 \dfrac{(m - S_j + 1)^{-\alpha}\log(d)}{r_0(\Sigma)^{j}},
    \]
    which is what we wanted to conclude. 
\end{proof}

We can now ask ourselves: What happens when there is no spectral gap from a particular level? More precisely, assume $\ell < D$ and $\alpha \in [\frac{1}{\ell + 2} ,\frac{1}{\ell + 1})$, so that there are no spectral gaps for levels higher than $\ell$. Then, there will be a part of the eigenvalues that we will order with \cref{lemma:ordering_spectrum_finite_degree_spectral_gaps}, and after this we will have to count between different levels. We do this in the following 

\begin{lemma} Let $\ell \in \NN$, $\alpha \in [\frac{1}{\ell + 2} ,\frac{1}{\ell + 1})$, and $D >> L$. Let $\lambda_m$ denote the $m-th$ eigenvalue of the kernel $k(x,x') = \sum_{j=0}^D h_{j} \langle x,x'\rangle^k$, with $x,x' \sim \gamma_d^{\alpha}$. Then: 
\begin{itemize}
    \item \textbf{Spectral Gaps Sector:} If $\binom{d + j}{j} \leq m \leq \binom{d + j+1}{j+1}$, for $j \leq \ell$, then, there exists constants $C_1, C_2$, independent of $d$, such that: 
    \[
    C_1 \dfrac{\left ( m -\binom{d + j}{j}\right )^{-\alpha}}{r_0(\Sigma)^{j+1}}\mathrm{poly}\log(d) \leq \lambda_m \leq    C_1 \dfrac{\left ( m -\binom{d + j}{j}\right )^{-\alpha}}{r_0(\Sigma)^{j+1}}\mathrm{poly}\log(d).
    \]
    \item \textbf{Continuous Spectrum} If $m > \binom{d+\ell}{\ell}$, then there exists a strictly increasing sequence of numbers $a_\ell, \dots a_{D-1}$, such that $a_{j} = O(d^{j+1}\mathrm{poly}\log(d))$, and if $a_j \leq m \leq a_{j+1}$, then there exists constants $C_3, C_4$, independent of the dimension, such that: 
    \[
    C_3 \dfrac{\left ( m -a_j\right )^{-\alpha}}{r_0(\Sigma)^{j+1}}\mathrm{poly}\log(d) \leq \lambda_m \leq    C_4 \dfrac{\left ( m -a_{j}\right )^{-\alpha}}{r_0(\Sigma)^{j+1}}\mathrm{poly}\log(d).
    \]
\end{itemize}
\end{lemma}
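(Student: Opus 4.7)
Throughout, I work with the level-resolved counting function
\[
M(\varepsilon) := |\{\beta: \lambda_\beta \geq \varepsilon\}| = \sum_{j=0}^{D} M_j(\varepsilon), \qquad M_j(\varepsilon) := |\{\beta : |\beta|=j,\, \lambda_\beta \geq \varepsilon\}|,
\]
and recover $\lambda_m$ by inverting $m = M(\lambda_m)$. The divisor-counting argument used for Corollary~\ref{cor:power_law_decay_monomial} already yields, level by level,
\[
M_j(\varepsilon) \;=\; \tilde{\Theta}\bigl((r_0(\Sigma)^j \varepsilon)^{-1/\alpha}\bigr)
\]
so long as this estimate stays below the capacity $\binom{d+j-1}{d-1}\asymp d^j/j!$; past that point level $j$ saturates and contributes exactly its capacity. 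The two bullets of the proposition then correspond to whether the saturation pattern alone separates consecutive levels.

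\textbf{Spectral gap sector.} For $j \leq \ell$, Lemma~\ref{lemma:spectral_gaps} provides a spectral gap between levels $j$ and $j+1$, so the first $B_{d,j} = \binom{d+j}{j}$ eigenvalues exhaust levels $0,\dots,j$, and for $m\in (B_{d,j},B_{d,j+1}]$ the $m$-th overall eigenvalue coincides (up to the lower-order tail from levels $\geq j+2$, absorbed in $\tilde{\Theta}$) with the $(m-B_{d,j})$-th eigenvalue of the monomial kernel $\langle x,x'\rangle^{j+1}$. Applying Corollary~\ref{cor:power_law_decay_monomial} to that monomial then gives $\lambda_m = \tilde{\Theta}\bigl((m-B_{d,j})^{-\alpha}/r_0(\Sigma)^{j+1}\bigr)$, which is the first bullet.

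\textbf{Continuous sector.} For $m>B_{d,\ell}$ there is no spectral gap and tails of higher levels interleave. I set $\varepsilon_{j+1}:= 1/r_0(\Sigma)^{j+1}$ (the top of level $j+1$) and define $a_j := M(\varepsilon_{j+1})$ for $j=\ell,\dots,D-1$. The sequence $(a_j)$ is strictly increasing since $M$ strictly increases as $\varepsilon$ crosses each eigenvalue from above, and at $j=\ell$ one recovers $a_\ell = B_{d,\ell}$ because under $\alpha\in[1/(\ell+2),1/(\ell+1))$ every level $k\leq\ell$ is already saturated at $\varepsilon_{\ell+1}$ (the saturation condition $k \leq (1-\alpha)(j+1)$ gives $k \leq \ell$ at $j=\ell$), which reconciles continuously with the spectral gap sector. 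The bound $a_j = O(d^{j+1}\mathrm{poly}\log(d))$ follows from summing $\min(\binom{d+k-1}{d-1}, M_k(\varepsilon_{j+1}))$ over $k\leq j$, with both terms bounded by $d^{(j+1)(1-\alpha)}\leq d^{j+1}$. For $m\in(a_j,a_{j+1}]$ I then decompose
\[
m-a_j \;=\; M(\lambda_m) - M(\varepsilon_{j+1}) \;=\; \sum_{k=0}^{j+1}\bigl(M_k(\lambda_m) - M_k(\varepsilon_{j+1})\bigr).
\]
Levels already saturated at $\varepsilon_{j+1}$ contribute zero, a level saturating inside $(\varepsilon_{j+2},\lambda_m]$ contributes at most its remaining capacity, and the level-$(j+1)$ increment is exactly $\tilde{\Theta}((r_0(\Sigma)^{j+1}\lambda_m)^{-1/\alpha})$. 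Up to $\mathrm{poly}\log(d)$ slack the level-$(j+1)$ term dominates, and inverting yields $\lambda_m = \tilde{\Theta}\bigl((m-a_j)^{-\alpha}/r_0(\Sigma)^{j+1}\bigr)$.

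\textbf{Main obstacle.} The subtle step is the last one: verifying that the increment $m-a_j$ is governed, up to poly-log factors, by the single contribution $M_{j+1}(\lambda_m)$. Levels $k \in (\lfloor(j+1)(1-\alpha)\rfloor, j]$ are unsaturated at $\varepsilon_{j+1}$ and saturate partway through the window $(\varepsilon_{j+2},\varepsilon_{j+1}]$, so their contributions interleave with the level-$(j+1)$ growth non-trivially. Controlling this requires using the exact range $\alpha\in[1/(\ell+2),1/(\ell+1))$ to bound how many transitional levels occur and to absorb their total into the $\tilde{\Theta}$ factor; simultaneously, the $\mathrm{poly}\log$ factors from the divisor-counting estimate $|\{(i_1,\dots,i_D):\prod_k i_k\leq L\}|\asymp L\mathrm{poly}\log(L)$ must be propagated consistently through the summation. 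The remaining items --- monotonicity of $(a_j)$, the coarse bound $a_j = O(d^{j+1}\mathrm{poly}\log(d))$, and compatibility at the boundary $j=\ell$ --- are direct from the definitions.
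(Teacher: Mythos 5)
Your overall strategy --- level-resolved counting of $M(\varepsilon)=|\{\beta:\lambda_\beta\geq\varepsilon\}|$ via the divisor-counting estimate, followed by inversion --- is the same as the paper's, and your treatment of the spectral gap sector (reduce to Corollary~\ref{cor:power_law_decay_monomial} level by level, using Lemma~\ref{lemma:spectral_gaps}) matches Lemma~\ref{lemma:ordering_spectrum_finite_degree_spectral_gaps}. The continuous sector, however, contains a genuine gap, and it is precisely the step you flag as the ``main obstacle'': with your thresholds $\varepsilon_{j+1}=r_0(\Sigma)^{-(j+1)}$ (the \emph{tops} of the levels), the level-$(j+1)$ increment does \emph{not} dominate $m-a_j$, and the discrepancy is polynomial in $d$, not poly-logarithmic, so it cannot be absorbed into the $\tilde{\Theta}$. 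Concretely, level $k$ occupies eigenvalues in $[d^{-k},\,r_0(\Sigma)^{-k}]$. Take $\ell=1$, $\alpha=0.4$ (so $r_0(\Sigma)\asymp d^{0.6}$) and $j=2$: your window is $[r_0(\Sigma)^{-4},r_0(\Sigma)^{-3}]=[d^{-2.4},d^{-1.8}]$. Level $2$, whose bottom is $d^{-2}$, saturates \emph{inside} this window and dumps $\asymp d^{2}$ eigenvalues into it, whereas level $j+1=3$ contributes only $(r_0(\Sigma)^{3}d^{-2.4})^{-1/\alpha}=d^{1.5}$. Most eigenvalues in that window therefore scale as $(\cdot)^{-\alpha}/r_0(\Sigma)^{2}$, not $/r_0(\Sigma)^{3}$ as your formula asserts; indeed evaluating your formula at $m=a_{j+1}$ (where $m-a_j\asymp d^{2}$) gives $\lambda\asymp d^{-2.6}$, while the true value is $\varepsilon_4=d^{-2.4}$.

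The fix --- and what the paper does --- is to place the window boundaries at the level \emph{bottoms}: define $A_{2,j}=\{\lambda:\,d^{-(j+1)}\leq\lambda\leq d^{-j}\}$. Since every level $k\leq j$ has minimum eigenvalue $d^{-k}\geq d^{-j}$, those levels are entirely exhausted above the window and contribute nothing to its count, while each level $k\geq j+2$ contributes only $O\bigl(d^{(j+1)/\alpha}/r_0(\Sigma)^{k/\alpha}\bigr)$, which is geometrically decreasing in $k$ and summable to $O(d^{j+1}\mathrm{poly}\log(d))$ with level $j+1$ dominant. Hence $|A_{2,j}|\asymp d^{j+1}\mathrm{poly}\log(d)$ and the within-window inversion legitimately yields the $(m-a_j)^{-\alpha}/r_0(\Sigma)^{j+1}$ law. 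Your remaining items (monotonicity of $(a_j)$, the coarse bound on $a_j$, compatibility at $j=\ell$) are fine, but as written the proposal does not establish the continuous-sector formula.
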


\begin{proof}
    First, by a direct application of \cref{lemma:ordering_spectrum_finite_degree_spectral_gaps}, we get that for $j \leq \ell -1$, if $S_j < m \leq S_{j+1}$, then: 
    \begin{equation}
        C_1 \cdot \dfrac{(m - S_j + 1)^{-\alpha}\log(d)}{r_0(\Sigma)^{j}} \leq \lambda_m \leq  C_2 \dfrac{(m - S_j + 1)^{-\alpha}\log(d)}{r_0(\Sigma)^{j}}.
        \label{eq:general_ordering_spectral_gap_case}
    \end{equation}
    Now, assume $S_{\ell} < m \leq \binom{D +d}{D}$. We can split the eigenvalues of the kernel $\lambda_{\beta}$ into two groups: $A_1 := \{\lambda_{\beta}: |\beta|\leq \ell \}$, and $A_2 = \{\lambda_{\beta}: |\beta| \geq \ell +1 \}$. Equation~\eqref{eq:general_ordering_spectral_gap_case} gives an order in $A_1$, so we are left with ordering $A_2$, and $\lambda_m \in A_2$, as there is a spectral gap between levels $\ell$ and $\ell +1$. For this, we follow the same approach as we did in the proof of Corollary~\ref{cor:power_law_decay_monomial}.\\

    To order $A_2$, we note that all eigenvalues in $A_2$ are strictly less than $d^{\ell}$. Thus, we we can split it in the following way: 
    \begin{align}
        A_2 = \bigcup_{j=\ell}^{D-1} \underbrace{\{\lambda: \dfrac{1}{d^{j+1}} \leq \lambda \leq \dfrac{1}{d^{j}}  \}}_{A_{2,j}}.
    \end{align}
    Note that the sets $A_{2,j}$ partition $A_2$ into $D - \ell $ disjoint sets. Moreover, all the eigenvalues $\lambda_{\beta}$ in $A_{2,j}$ have $|\beta| \geq  j +1$. Then, for each $j \in \{ \ell, \dots, D-1\}$:
    \begin{align}
        |A_{2,j}| &= \left | \left \{\beta: \dfrac{1}{d^{j+1}} \leq \lambda_{\beta} \leq \dfrac{1}{d^j}  \right \} \right | \\
        & = \sum_{k \geq j+1} \left | \left \{\beta: |\beta|=k, \dfrac{1}{d^{j+1}} \leq \lambda_{\beta} \leq \dfrac{1}{d^j}  \right \} \right | \\
        & = \sum_{k \geq j+1} \left | \left \{\beta: |\beta|=k, \dfrac{r_0(\Sigma)^{k}}{d^{j+1}} \leq \left ( \prod_{a=1}^{d} a^{\beta_a} \right )^{-\alpha}\leq \dfrac{r_0(\Sigma)^k}{d^j}  \right \} \right | \\
        & =  \sum_{k \geq j+1} \left | \left \{\beta: |\beta|=k, \dfrac{ d^{\frac{j}{\alpha}} }{r_0(\Sigma)^{\frac{k}{\alpha}}} \leq \prod_{a=1}^{d} a^{\beta_a} \leq \dfrac{d^{\frac{j+1}{\alpha}}}{r_0(\Sigma)^{\frac{k}{\alpha}}}   \right \} \right |,
        \label{eq:A_2_j_cardinality}
    \end{align}
    and by applying the same argument as \cref{eq:mid_step_ordering_eigenvalues_2}, we conclude: 
    \begin{equation}
        |A_{2,j}|  = C\mathrm{poly} \log(d) \sum_{k \geq j+1}  \left (\dfrac{d^{\frac{j+1}{\alpha}}}{r_0(\Sigma)^{\frac{k}{\alpha}}}   - \dfrac{d^{\frac{j}{\alpha}}}{r_0(\Sigma)^{\frac{k}{\alpha}}}   \right ) = C \mathrm{poly}\log(d) \dfrac{d^{\frac{j+1}{\alpha}}}{r_0(\Sigma)^{\frac{j+1}{\alpha}}} =  Cd^{j + 1}\mathrm{poly}\log(d).
    \end{equation}
    Now, denote $a_{j} = \sum_{k=\ell}^{j}|A_{2,j}|$. Then, for $a_{j-1} \leq m \leq a_{j}$, we have that $\lambda_m \in A_{2,j}$. We can know order the eigenvalues inside $A_{2,j}$. We have:
    \begin{align}
        \left |\{\lambda \in A_{2,j}: \lambda \geq \varepsilon \} \right |  = \left | \left \{\beta \in \ZZ^{d}_{\geq 0}: \varepsilon \leq \lambda_{\beta} \leq \dfrac{1}{d^{j}} \right \} \right |,
    \end{align}
    and replicating \cref{eq:A_2_j_cardinality}, and then applying \cref{eq:mid_step_ordering_eigenvalues_2} we get: 
    \begin{equation}
        \left |\{\lambda \in A_{2,j}: \lambda \geq \varepsilon \} \right |  = C\mathrm{poly}\log(d) \sum_{k \geq j+1} \left (\dfrac{\varepsilon^{\frac{1}{\alpha}}}{r_0(\Sigma)^{\frac{k}{\alpha}}}   - \dfrac{d^{\frac{j}{\alpha}}}{r_0(\Sigma)^{\frac{k}{\alpha}}}   \right ) = C\mathrm{poly}\log(d) \dfrac{\varepsilon^{\frac{1}{\alpha}}}{r_0(\Sigma)^{\frac{j+1}{\alpha}}}.
    \end{equation}
    Then, inverting this relation we get that inside $A_{2,j}$, the $M$-th eigenvalue is
    \begin{equation}
        \lambda_M = C\mathrm{poly}\log(d) \dfrac{M^{-\alpha}}{r_0(\Sigma)^{j+1}}. 
    \end{equation}
    With this, we conclude the proof. 
\end{proof}

\newpage

\section{Generalization Error}
\label{appendix:Kernel_Matrix_Approximation}

The idea of this section is to compute the asymptotic generalization error of the following kernel $k: \RR^{d} \times \RR^d \to \RR$,
\begin{equation}
    k(x,x') = \sum_{k = 0}^L \xi_{k} \sum_{|\beta|=k} \binom{|\beta|}{\beta_1, \dots, \beta_d } \sigma_1^{\beta_1} \cdots \sigma_d^{\beta_d} He_{\beta}(z)He_{\beta}(z'), \xi_{k} \geq 0 \forall k \in [D],
    \label{eq:hermite_kernel_appendix}
\end{equation}
where $\sigma_i = \frac{i^{-\alpha}}{r_0(\Sigma)}$, for all $i \in [d]$, and $L$ is big.  

Note that the eigenvalues of this Kernel are of the same type as the ones in Proposition~\ref{prop:eigenvalues_operator}, with the difference that we changed the monomials of an inner product kernel to Hermite Polynomials. By the orthogonality of Hermite Polynomials, we have that:  
\begin{equation}
\int_{\RR^{d}} k(x,x') He_{\beta}(\Sigma^{-\frac{1}{2}}x') \gamma_d^{\alpha}(dx')= \xi_k \binom{|\beta|}{\beta_1, \dots, \beta_d} \lambda_1^{\beta_1} \cdot \lambda_d^{\beta-d} He_{\beta}(x),
\end{equation}
so we precisely know both the eigenvalues and eigenfunctions of our kernel. Having this, we will prove that the Assumption in \citep{GMMM20} and \citep{MMM22} in order to derive the asymptotic generalization error in high dimensions. \\

We will work in the setting where $n = O(d^{\kappa})$ for some $\kappa > 0$. We will denote $\KK \in \RR^{n \times n}$ as the empirical kernel matrix, and we assume $0 \leq \alpha <1$, and $n = Cd^{\kappa}$ for some generic constant $C$. \\ 

Now, we define the following sets of Assumptions on the eigenfunctions and eigenvalues of the kernel: 
\begin{assumption}[Kernel Concentration Properties]
 Let $k: \RR^{d} \times \RR^{d} \to \RR$ be a positive semi-definite kernel, and let $(\lambda_{d,i}, \psi_{i})_{i \geq 1}$ denote it's eigen-pairs. There exists integers $u(d)$ and $m(d)$, with $u(d) \geq m(d)$
        \begin{enumerate}
            \item (Hypercontractivity of finite Eigenspaces) For any $q \geq 1$, there exists $C$ such that all $h \in \mathrm{span}(\psi_i: \geq 1) $, 
            \[
            \| h\|_{L^{2q}} \leq C \|h \|_{L^2}. 
            \]
            \item (Properly Decaying Eigenvalues) There exists $\delta_0$ fixed, such that for all $d$ large enough, 
            \[
            n(d)^{2 +\delta_0} \leq \dfrac{(\sum_{j \geq u(d)
            +1} \lambda_{d,j}^{4})^{2}}{\sum_{j \geq u(d)+1} \lambda_{d,j}^{8}}, \text{ and }
            \]
            \[
            n(d)^{2 +\delta_0} \leq \dfrac{(\sum_{j \geq u(d)+1} \lambda_{d,j}^{2})^{2}}{\sum_{j \geq u(d)+1} \lambda_{d,j}^{4}}.
            \]
            \item (Concentration of diagonal elements) For all $x \sim \nu_{d}$, we have:
            \[
            \max_{i \in n(d)}\left | \EE_{x} \left [ k_{d, >m(d)}(x,x')^2 \right ] - \EE_{x,x'} \left [k_{d, m(d)} (x,x')^2 \right ]  \right | = o_d(1). 
            \]
            \[
            \max_{i \in n(d)}\left | k_{d, >m(d)}(x,x) - \EE_{x} \left [ k_{d, >m(d)}(x,x) \right ]\right | = o_d(1). 
            \]
            \end{enumerate}
    \label{assum:kernel_concentration}
\end{assumption}

\begin{assumption}[Eigenvalue Decay]
Let $k: \RR^{d} \times \RR^{d} \to \RR$ be a positive semi-definite kernel, and let $(\lambda_{d,i}, \psi_{i})_{i \geq 1}$ denote it's eigen-pairs.
\begin{enumerate}
            \item There exists $\delta_0 >0$, such that
            \[
            n(d)^{1 + \delta_0} \leq \dfrac{1}{\lambda_{d,m(d) + 1}^4} \sum_{k\geq m(d) + 1} \lambda_{d,k}^4,
            \]
            \[
            n(d)^{1 + \delta_0} \leq \dfrac{1}{\lambda_{d,m(d) + 1}^2} \sum_{k\geq m(d) + 1} \lambda_{d,k}^2.
            \]
            \item There exists $\delta_0 > 0$ such that
            \[
            m(d) \leq n(d)^{1- \delta_0}. 
            \]
        \end{enumerate}
     \label{assum:kernel_eigenvalue_decay}
\end{assumption}

Then, we can state the following Theorem from \cite{MMM22}: 
\begin{theorem}[Theorem 4 in \cite{MMM22}] 
Let $K: \RR^{d} \times \RR^{d} \to \RR$ be a positive semi-definite kernel, and let $(\lambda_{d,i}, \psi_{i})_{i \geq 1}$ denote it's eigen-pairs. Assume that $K$ satisfies Assumptions~\ref{assum:kernel_concentration} and \ref{assum:kernel_eigenvalue_decay}, and consider $\hat{f}$ to be the predictor of Kernel Ridge Regression with regularization parameter $\lambda >0$. Then, 
\[
\left | R(\hat{f}) - \| f^\star - \hat{f}^\mathrm{eff}_{\gamma^{\mathrm{eff}}}\|_{L^2} \right | =o_d(1),
\]
where: 
\begin{itemize}
    \item $\gamma^{\mathrm{eff}} = \lambda + \sum_{j \geq m(d)} \lambda_{d,j}$. 
    \item $\hat{f}^\mathrm{eff}_{\gamma^{\mathrm{eff}}} = \mathrm{arg} \min_{f} \{ \| f^\star - f\|_{L^2} + \dfrac{\gamma^{\mathrm{eff}}}{n} \| f\|_{\mathcal{H}}^2\}$.
\end{itemize}
\label{thm:big_theorem_montanari}
\end{theorem}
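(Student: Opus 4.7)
My approach is to work directly in the Mercer basis of the Hermite kernel in \cref{def:hermite_kernel}, whose eigenfunctions $\He_\beta(\Sigma^{-\sfrac{1}{2}}x)$ are known explicitly and form an orthonormal system in $L^2(\gamma_d^\alpha)$, with eigenvalues $\lambda_\beta = \xi_\beta\binom{|\beta|}{\beta}\sigma^\beta$. Split $k = k_L + k_H$ according to whether $\beta \in \mathsf{Low}(n)$ or $\mathsf{High}(n)$, and correspondingly $\KK = \Psi_L\Lambda_L\Psi_L^\top + \Psi_H\Lambda_H\Psi_H^\top$, where the columns of $\Psi_\bullet$ collect the evaluations of the $\He_\beta$ at the training inputs. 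The proof then boils down to two concentration statements: (i) the high-frequency block $\Psi_H\Lambda_H\Psi_H^\top$ is well-approximated in operator norm by a multiple of $I_n$ acting as an implicit ridge of order $\sigma_{\mathrm{eff}}$; and (ii) the low-frequency Gram $\Psi_L^\top\Psi_L/n$ concentrates on $I_{|\mathsf{Low}(n)|}$ via a matrix Bernstein bound on independent outer products of sub-exponential Hermite vectors. The hypotheses $\kappa\notin\NN$ and $D(\kappa)(1-\alpha) < \kappa$ are used precisely here: together with \cref{prop:spectrum_power_law_final} they place the threshold $d^{-(\kappa+\delta_0)}$ strictly inside a spectral plateau, guaranteeing $|\mathsf{Low}(n)| \leq n^{1-\delta'}$ for some $\delta'>0$, which is the slack both concentrations require.

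Granted (i) and (ii), the resolvent satisfies $(\KK+\lambda I)^{-1} = ((\sigma_{\mathrm{eff}}+\lambda)I + \Psi_L\Lambda_L\Psi_L^\top)^{-1}$ up to a vanishing operator-norm error, and Woodbury applies cleanly since $\Psi_L$ has only $|\mathsf{Low}(n)|\ll n$ columns. Substituting into $\hat f_\lambda(x) = k_x^\top(\KK+\lambda I)^{-1}y$, expanding $k_x$ and $y=f^\star(X)+\varepsilon$ in the Hermite basis, and using $\Psi_L^\top\Psi_L/n = I + o(1)$, a direct coordinatewise computation yields
\begin{equation*}
\hat f_\lambda(x) \;=\; \sum_{\beta \in \mathsf{Low}(n)} \frac{n D_{\beta\beta}}{n D_{\beta\beta}+\sigma_{\mathrm{eff}}+\lambda}\, f^\star_\beta\, \He_\beta(\Sigma^{-\sfrac{1}{2}}x) \;+\; o_{L^2}(1).
\end{equation*}
Taking squared $L^2$ norms of $f^\star - \hat f_\lambda$, the low-frequency part contributes exactly $\|(I-S^{\mathsf{Low}(n)})f_\star^{\mathsf{Low}(n)}\|_{L^2}^2$, and the variance from the noise $\varepsilon$ is $o(1)$ since $\sigma^2 \operatorname{Tr}(S^{\mathsf{Low}(n)})/n \leq \sigma^2 |\mathsf{Low}(n)|/n = o(1)$. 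The high-frequency coefficients of $f^\star$ are not captured, but $\sum_{\beta\in\mathsf{High}(n)}(f^\star_\beta)^2 \to 0$ because $f^\star \in L^2$ and $\mathsf{High}(n)$ selects a tail set of arbitrarily small mass as $d\to\infty$.

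The hardest step is (i). Unlike the spherical setting of \cite{GMMM20,MMM22}, rotational symmetry is unavailable, so I would bound $\|\KK_H - \EE[\KK_H]\|_{\mathrm{HS}}$ via its Hermite chaos decomposition: the off-diagonal variance equals $\EE[k_H(x,x')^2] = \sum_{\beta\in\mathsf{High}(n)}\lambda_\beta^2$, which \cref{cor:power_law_decay_monomial} and \cref{prop:spectrum_power_law_final} show is $o(\sigma_{\mathrm{eff}}^2/n)$ under the stated scaling. The diagonal concentration $k_H(x,x) - \EE[k_H(x,x)] = o(\sigma_{\mathrm{eff}})$ requires Gaussian hypercontractivity applied to each degree-$|\beta|$ Hermite chaos; the anisotropy forces a decomposition of multi-indices by their support on the heavy coordinates $j \lesssim r_0(\Sigma)$ (where each Hermite mode carries non-negligible variance) versus the light tail (where a union bound suffices), so that coordinatewise Gaussian Poincaré and hypercontractivity can be combined. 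Once (i) is established, the remaining linear algebra is routine and the claimed excess-risk form follows.
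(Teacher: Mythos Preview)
The statement you were asked to prove is not proved in the paper at all: it is \emph{cited} verbatim as Theorem~4 of \cite{MMM22} and invoked as a black box. The paper's own work in Appendix~\ref{appendix:Kernel_Matrix_Approximation} consists entirely of verifying that the Hermite kernel satisfies Assumptions~\ref{assum:kernel_concentration} and~\ref{assum:kernel_eigenvalue_decay}, so that this external theorem can be applied to yield Theorem~\ref{thm:generalization_error}. There is therefore no ``paper's own proof'' of this statement to compare against.

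Your proposal does not prove the stated theorem either. The statement is for an \emph{arbitrary} positive semi-definite kernel satisfying the two abstract assumptions, whereas you work exclusively with the Hermite kernel of \cref{def:hermite_kernel}, exploit its explicit Mercer decomposition, and invoke structure-specific facts (\cref{prop:spectrum_power_law_final}, the conditions $\kappa\notin\NN$ and $D(\kappa)(1-\alpha)<\kappa$) that have no meaning for a general kernel. What you have actually sketched is a direct, from-scratch proof of the paper's Theorem~\ref{thm:generalization_error}, bypassing the general MMM22 machinery. That is a legitimate and interesting alternative route to the paper's main result --- and your outline of the two concentration steps (high-frequency block $\to$ scalar ridge; low-frequency Gram $\to$ identity) together with Woodbury is exactly the skeleton of the MMM22 argument specialized to this setting --- but it is not a proof of the theorem as stated. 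If your goal were Theorem~\ref{thm:generalization_error}, your sketch and the paper's approach would converge on the same technical lemmas (hypercontractivity, properly decaying eigenvalues, diagonal concentration, $m(d)\le n^{1-\delta_0}$); the paper simply packages them as hypothesis-verification for a cited result, while you are re-deriving the consequence directly.
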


The idea will be to apply Theorem~\ref{thm:big_theorem_montanari} to our setting. For this, given out limited sample complexity $n = O_{d}(d^{\kappa})$ for some $\kappa>0$, we fix a small constant $\delta_0 > 0$ and define the subsets of multi-indices:
\begin{align*}
    \mathsf{High}(n) : = \left \{ \beta \in \ZZ^{d}_{\geq 0}: |\beta|\leq L,  \sigma_1^{\beta_1} \cdots \sigma_d^{\beta_d} \leq \dfrac{1}{d^{\kappa + \delta_0}} \right \},
\end{align*}   
and 
\begin{align*}
\mathsf{Low}(n) : = \left \{\beta \in \ZZ^{d}_{\geq 0}: \dfrac{\sigma_1^{\beta_1} \cdots \sigma_d^{\beta_d}}{\deff^{|\beta|}} > \dfrac{1}{d^{\kappa + \delta_0}} \right \}.
\end{align*}   
This induces a decomposition of the kernel spectrum $\lambda_{\beta}$ into high- and low-frequency sectors, corresponding to $\mathsf{High}(n)$ and $\mathsf{Low}(n)$. We will prove that this sets satisfy the Assumptions of Theorem~\ref{thm:big_theorem_montanari}. \\

Thus, we divide this section in two parts: In the first one, we will prove the Assumptions~\ref{assum:kernel_concentration}, and in the second one, we will prove Assumptions~\ref{assum:kernel_eigenvalue_decay}. 

\subsection{Proof of Assumptions~\ref{assum:kernel_concentration}}

In this section, we will prove that the kernel in \cref{eq:hermite_kernel_appendix} satisfies Assumptions~\ref{assum:kernel_concentration}, so the kernel matrix can be concentrated. We will prove everything for $m(d):=|\mathsf{Low}(n)|$. On the other hand, for $u(d)$ we will do the following: 

\begin{enumerate}
    \item First, we note that in the Proofs in \cite{MMM22} (particularly in the proof of Proposition 4, that proof the concentration of the off-diagonal of the empirical kernel matrix), it's also possible to fix a $u(d) \geq m(d)$, and concentrate a subset of the eigenvalues $\{ \lambda_{j}: j \geq u(d) \}$, as long as the set of eigenvalues that is left is finite. (Also note that, essentially, $u(d)$ corresponds to the eigenvalues for which a Frobenius bound of the operator norm works). 
    \item We will chose 
    \begin{equation}
        u(d):= \arg \min\{m: \lambda_m = \binom{|\beta|}{\beta} \sigma_1^{\beta_1} \cdots \sigma_d^{\beta_d}, \text{ for} |\beta| \geq 2D(\kappa) +1  \},
        \label{eq:choice_u(d)}
    \end{equation}
    and prove that:
    \[
    n(d)^{2 +\delta_0} \leq \dfrac{(\sum_{|\beta| \geq 2D(\kappa)+1
    } \lambda_{d,\beta}^{4})^{2}}{\sum_{{|\beta| \geq 2D(\kappa)+1}} \lambda_{d, \beta}^{8}}, \text{ and }
    \]
    \[
    n(d)^{2 +\delta_0} \leq \dfrac{(\sum_{|\beta| \geq 2D(\kappa)+1} \lambda_{d,\beta}^{2})^{2}}{\sum_{|\beta| \geq 2D(\kappa)+1} \lambda_{d,\beta}^{4}}.
    \]
\end{enumerate}

We will now prove each Assumption in \cref{assum:kernel_concentration} in three different lemmas. 

\begin{lemma}[Hypercontractivity of the Eigenspaces]
    The eigenfunctions of the kernel in \cref{eq:hermite_kernel_appendix} satisfy 1 in Assumption~\ref{assum:kernel_concentration}.
    \label{lemma:hypercontractivity_of_eigenspaces}
\end{lemma}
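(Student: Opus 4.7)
The plan is to reduce the statement to classical Gaussian hypercontractivity. Since the data $x \sim \gamma_d^{\alpha} = \mathcal{N}(0,\Sigma)$, the whitened variable $z = \Sigma^{-\sfrac{1}{2}} x$ is standard Gaussian $\mathcal{N}(0, I_d)$, and by inspection of the kernel in \cref{eq:hermite_kernel_appendix}, the eigenfunctions of $T$ are exactly the normalized multivariate Hermite polynomials $\psi_{\beta}(x) = \He_{\beta}(\Sigma^{-\sfrac{1}{2}} x)$ for $|\beta| \leq L$. Consequently, the span $\mathrm{span}(\psi_i : i \geq 1)$ coincides with the space of polynomials of degree at most $L$ in the standard Gaussian variable $z$.

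The key ingredient is then the Gaussian hypercontractivity theorem (Nelson, see e.g.\ Theorem~5.10 in \cite{bogachev1998gaussian} or the standard corollary of Gross's log-Sobolev inequality): for any $q \geq 1$ and any polynomial $h$ of degree at most $L$ in an arbitrary number of i.i.d.\ standard Gaussian variables,
\[
\|h\|_{L^{2q}} \leq (2q-1)^{L/2}\, \|h\|_{L^2}.
\]
Crucially, the constant $C_{q,L} := (2q-1)^{L/2}$ depends only on $q$ and the degree $L$, and is independent of the ambient dimension $d$.

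With this in hand, the proof is essentially a one-liner: given any $h \in \mathrm{span}(\psi_\beta : |\beta| \leq L)$, write $h(x) = \tilde{h}(\Sigma^{-\sfrac{1}{2}} x)$ where $\tilde{h}$ is a polynomial in $z$ of degree at most $L$. Since the pushforward of $\gamma_d^{\alpha}$ under $\Sigma^{-\sfrac{1}{2}}$ is the standard Gaussian measure, we have $\|h\|_{L^{2q}(\gamma_d^{\alpha})} = \|\tilde{h}\|_{L^{2q}(\gamma_d^0)}$ and $\|h\|_{L^2(\gamma_d^{\alpha})} = \|\tilde{h}\|_{L^2(\gamma_d^0)}$, so applying Nelson's inequality to $\tilde{h}$ yields
\[
\|h\|_{L^{2q}(\gamma_d^{\alpha})} \leq C_{q,L}\, \|h\|_{L^2(\gamma_d^{\alpha})},
\]
which is exactly the claim. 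There is no real obstacle here — the lemma is a direct consequence of the fact that our eigenfunctions form a subspace of fixed-degree Gaussian polynomials, a setting in which hypercontractivity is classical. The only point worth being explicit about is that $L$ must be held fixed as $d \to \infty$ (which is part of the setup, as the kernel in \cref{eq:hermite_kernel_appendix} has a fixed maximum degree $L$), so that $C_{q,L}$ is a dimension-independent constant as required by Assumption~\ref{assum:kernel_concentration}.
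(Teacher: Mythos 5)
Your proposal is correct and follows essentially the same route as the paper, which likewise observes that the eigenfunctions are polynomials of bounded degree in a Gaussian variable and invokes Gaussian hypercontractivity (the paper cites Corollary 5.21 of \cite{BLM13}). Your version merely makes explicit the whitening step and the dimension-independence of the constant $(2q-1)^{L/2}$, which the paper leaves implicit.
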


\begin{proof}
    Since the eigenfunctions of the kernel in \cref{eq:hermite_kernel_appendix} are polynomials, and the measure of the inputs is the Gaussian Measure, the Lemma is true by Gaussian Hypercontractivity (\cite{BLM13}, Corollary 5.21). 
\end{proof}

\begin{lemma}[Properly Decaying Eigenvalues]
 There exists $\delta_0$ fixed, such that for all $d$ large enough, 
            \[
            n(d)^{2 +\delta_0} \leq \dfrac{(\sum_{j \geq u(d)+1} \lambda_{d,j}^{2})^{2}}{\sum_{j \geq u(d)+1} \lambda_{d,j}^{4}},
            \]
            \[
            n(d)^{2 +\delta_0} \leq \dfrac{(\sum_{j \geq u(d)
            +1} \lambda_{d,j}^{4})^{2}}{\sum_{j \geq u(d)+1} \lambda_{d,j}^{8}}, \text{ and }
            \]
            for $m(d):=|\mathsf{Low}(n)|$, and $u(d) = \binom{d+ 2D(\kappa)  +1}{2D(\kappa)  +1}$. 
            \label{lemma:properly_decaying_eigenvalues}
\end{lemma}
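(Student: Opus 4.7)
The plan is to reduce both ratios to explicit sums of powers of the covariance spectrum and then exploit the power-law asymptotics. First, by orthogonality of the Hermite basis under the Gaussian measure $\gamma_{d}^{\alpha}$, the kernel in \cref{def:hermite_kernel} has eigenvalues $\lambda_{\beta}=\xi_{|\beta|}\binom{|\beta|}{\beta}\sigma^{\beta}$ for $\beta\in\ZZ^{d}_{\geq 0}$, where $\sigma^{\beta}:=\sigma_{1}^{\beta_{1}}\cdots\sigma_{d}^{\beta_{d}}$. By the construction of $u(d)$, and up to dimension-independent constants, the tail $\sum_{j\geq u(d)+1}\lambda_{d,j}^{p}$ is comparable to $\sum_{|\beta|\geq m_{0}}\lambda_{\beta}^{p}$ for some $m_{0}$ of the order of $D(\kappa)$. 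The problem is therefore transformed into controlling $S_{m,p}:=\sum_{|\beta|=m}\binom{m}{\beta}^{p}(\sigma^{\beta})^{p}$ for $m\geq m_{0}$.

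The key computational step uses the multinomial identity
\[
T_{p}^{m}\;=\;\Bigl(\textstyle\sum_{j}\sigma_{j}^{p}\Bigr)^{m}\;=\;\sum_{|\beta|=m}\binom{m}{\beta}(\sigma^{\beta})^{p},\qquad T_{p}:=\Tr(\Sigma^{p}),
\]
together with the elementary estimate $\binom{m}{\beta}\leq\binom{m}{\beta}^{p}\leq(m!)^{p-1}\binom{m}{\beta}$, which sandwiches $T_{p}^{m}\leq S_{m,p}\leq(m!)^{p-1}T_{p}^{m}$. Retaining only the leading term $m=m_{0}$ in the lower bound for the numerator and summing the upper bound over the finite range $m_{0}\leq m\leq L$ in the denominator (dominated by its first term as $T_{p}\to 0$), both required inequalities reduce to
\[
\frac{T_{2}^{2m_{0}}}{T_{4}^{m_{0}}}\;\gtrsim\; d^{\kappa(2+\delta_{0})},\qquad \frac{T_{4}^{2m_{0}}}{T_{8}^{m_{0}}}\;\gtrsim\; d^{\kappa(2+\delta_{0})}.
\]

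The final step is the explicit computation of $T_{p}=r_{0}(\Sigma)^{-p}\sum_{j=1}^{d}j^{-p\alpha}$ in the power-law regime. Distinguishing $p\alpha<1$, $p\alpha=1$, $p\alpha>1$ yields polynomial, logarithmic or constant scaling in $d$; combined with $r_{0}(\Sigma)\asymp d^{1-\alpha}$ and the inequality $m_{0}>2\kappa/(1-\alpha)$, which follows directly from the definition $D(\kappa)=\lfloor\kappa/(1-\alpha)\rfloor$, this yields the required polynomial gap with $\delta_{0}$ absorbing the slack.

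The main obstacle I anticipate lies in the regime where $\alpha$ is close to $1$: the ratios $T_{2}^{2}/T_{4}$ and $T_{4}^{2}/T_{8}$ collapse to $O(1)$ and the multinomial sandwich bounds are too lossy to extract a polynomial-in-$d$ gain from trace ratios alone. In this regime one must sharpen the lower bound on $S_{m,p}$ by restricting to multi-indices $\beta$ with coordinates in $\{0,1\}$, giving a contribution of order $(m!)^{p}e_{m}(\sigma_{1}^{p},\dots,\sigma_{d}^{p})$ where $e_{m}$ is the elementary symmetric polynomial. An application of Maclaurin's inequality compares $e_{m}$ with $T_{p}^{m}/m!$, and the explicit factor $r_{0}(\Sigma)^{-m_{0}}=d^{-m_{0}(1-\alpha)}$, combined once more with $m_{0}>2\kappa/(1-\alpha)$, supplies the required polynomial gain of $n^{2+\delta_{0}}=d^{\kappa(2+\delta_{0})}$.
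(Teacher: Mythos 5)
Your reduction is sound up to the point where you pass to trace ratios, and it is in essence a sharper version of what the paper does: the paper's proof also rests on the multinomial identity $\sum_{|\beta|=k}\binom{k}{\beta}\sigma^{p\beta}=\Tr(\Sigma^{p})^{k}$, bounds the denominator by $\max_{|\beta|\geq 2D(\kappa)+1}\lambda_{\beta}\cdot\sum_{k}\xi_{k}\Tr(\Sigma)^{k}=O\bigl(r_0(\Sigma)^{-(2D(\kappa)+1)}\bigr)$, and then simply asserts that the numerator is of order one. Your version tracks both numerator and denominator explicitly, which is precisely where the difficulty becomes visible.

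The genuine gap is in the last third of your argument. First, the obstruction you flag does not arise only ``for $\alpha$ close to $1$'': $T_{2}^{2}/T_{4}=\bigl(\sum_{j}j^{-2\alpha}\bigr)^{2}/\sum_{j}j^{-4\alpha}$ is already $O(1)$ for every $\alpha>1/2$, and $T_{4}^{2}/T_{8}$ is $O(1)$ for every $\alpha>1/4$, so the trace-ratio reduction delivers the claim only on a small sub-range of the regime $\alpha\in[0,1)$ that \cref{thm:generalization_error} requires. Second, the proposed rescue cannot close this gap. Maclaurin's inequality bounds $e_{m}$ from \emph{above} by $\binom{d}{m}(T_{p}/d)^{m}$, not from below; and for $\alpha>1/2$ the $\{0,1\}$-restricted lower bound gives $e_{m}(\sigma_{1}^{2},\dots,\sigma_{d}^{2})\asymp\sigma_{1}^{2}\cdots\sigma_{m}^{2}\asymp r_0(\Sigma)^{-2m}$, which is the same order as $T_{2}^{m}$, so nothing is gained. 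More fundamentally, your own two-sided sandwich (the factors $(m!)^{p-1}$ and $\xi_{m}$ are $d$-independent and the range of $m$ is finite) shows that for $\alpha>1/2$ the quantity $\bigl(\sum_{|\beta|\geq m_{0}}\lambda_{\beta}^{2}\bigr)^{2}/\sum_{|\beta|\geq m_{0}}\lambda_{\beta}^{4}$ is genuinely $\Theta\bigl((T_{2}^{2}/T_{4})^{m_{0}}\bigr)=\Theta(1)$: a single eigenvalue such as $\lambda_{(m_{0},0,\dots,0)}$ already carries a constant fraction of $\sum\lambda^{4}$. No sharpening of the lower bound can therefore manufacture the required factor $n^{2+\delta_{0}}$; extending past $\alpha=1/4$ would require a different accounting of which eigenvalues constitute the tail $\{j\geq u(d)+1\}$, not a better inequality. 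A smaller point: with $m_{0}=2D(\kappa)+1$ the inequality $m_{0}>2\kappa/(1-\alpha)$ that you use at the end holds only when the fractional part of $\kappa/(1-\alpha)$ is below $1/2$; you need the tail to start at degree $2D(\kappa)+2$ for it to hold unconditionally.
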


\begin{proof}
    Recall in our setting $n = d^{\kappa}$. Then, for the choice of $u(d)$ in \cref{eq:choice_u(d)}, we have: 
    \begin{align}
        \sum_{|\beta| \geq 2D(\kappa)+1} \lambda_{d,\beta}^{4} &= \sum_{k \geq 2D(\kappa) + 1} \xi_k^2 \sum_{|\beta| = k} \binom{|\beta|}{\beta}^2 \sigma_1^{2\beta_1} \cdots \sigma_d^{2\beta_d} \\ 
        & \leq \max_{|\beta| \geq 2D(\kappa ) +1} \lambda_{\beta} \left (\sum_{k \geq 2D(\kappa) + 1} \xi_k \sum_{|\beta| = k} \binom{|\beta|}{\beta} \sigma_1^{\beta_1} \cdots \sigma_d^{\beta_d} \right ) \\
        & =  \max_{|\beta| \geq 2D(\kappa ) +1} \lambda_{\beta}  \left (\sum_{k \geq 2D(\kappa) + 1} \xi_k Tr(\Sigma)^{k} \right ) \\
        &= O\left ( \max_{|\beta| \geq 2D(\kappa ) +1} \lambda_{\beta}\right ). 
    \end{align}
    Then, since we have that: 
    \begin{equation}
        \max_{|\beta| \geq 2D(\kappa ) +1} \lambda_{\beta} = O\left ( \dfrac{1}{r_0(\Sigma)^{2D(\kappa) + 1}} \right ) = o_d(n^2). 
    \end{equation}
    With this, we can conclude the second inequality (as we see that $(\sum_{j \geq u(d)+1} \lambda_{d,j}^{2})^{2} =O(1)$. Proving the second inequality is analogous.  
\end{proof}

\subsection{Concentration of the Diagonal}

Before the third part fo Assumption~\ref{assum:kernel_concentration}, which concerns the concentration of diagonal elements, we will state the following useful Lemma. 

\begin{lemma}Let $p \geq 1$, and let $he_{p}(u)$ denote the $p-th$ normalized Hermite polynomial in $\RR$. Then: 
\[
he_{p}(u)^2 = \sum_{r=0}^{p}  C(p,r) he_{2p-2r}(u),
\]  
for some coefficients $C(p,r)$ that are $O_d(1)$ w.r.t the dimension. Doing a change of variables: 
\[
he_{p}(u)^2 = \sum_{r=0, p \equiv_2 r}^{p} C(p, r) he_{2r}(u),
\]
\label{lemma:hermite_squared_R}
\end{lemma}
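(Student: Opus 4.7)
The plan is to derive this identity from the classical linearization formula for products of probabilists' Hermite polynomials. Writing $He_n$ for the non-normalized version, I would first establish
\[
He_m(u) He_n(u) = \sum_{k=0}^{\min(m,n)} \binom{m}{k}\binom{n}{k} k! \, He_{m+n-2k}(u),
\]
which follows directly from the generating function $\sum_{n\geq 0} \frac{He_n(u)}{n!} t^n = \exp(tu - t^2/2)$: multiplying the series at $t$ and $s$ and using
\[
e^{(t+s)u - (t^2+s^2)/2} = e^{ts}\cdot e^{(t+s)u - (t+s)^2/2}
\]
allows the product to be re-expanded in single Hermite polynomials, and matching coefficients of $t^m s^n$ yields the displayed identity.

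Specializing to $m=n=p$ gives
\[
He_p(u)^2 = \sum_{k=0}^{p} \binom{p}{k}^2 k! \, He_{2p-2k}(u).
\]
Passing to the normalized polynomials via $he_n = He_n/\sqrt{n!}$, dividing both sides by $p!$ and multiplying each term on the right by $\sqrt{(2p-2k)!}/\sqrt{(2p-2k)!}$ gives
\[
he_p(u)^2 = \sum_{k=0}^{p} C(p,k)\, he_{2p-2k}(u), \qquad C(p,k) := \frac{\binom{p}{k}^2 k!\,\sqrt{(2p-2k)!}}{p!},
\]
which is the first form claimed (with the summation index named $r$). The coefficients $C(p,k)$ depend only on $p$ and $k$; in particular they carry no $d$-dependence, so they are $O_d(1)$ as required.

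For the second form, I would perform the change of summation variable $r \mapsto p-r$, so that the index tracks the actual degree $2r$ of the Hermite polynomial appearing, rather than $2p-2r$. After this relabeling the degrees run through $0, 2, 4, \ldots, 2p$, explaining the parity constraint $r \equiv p \pmod{2}$ written in the statement (which restricts the labels to the set of degrees that actually occur when one re-indexes by $|\beta|$ in the multidimensional application later). There is no substantive obstacle here: the result is a classical linearization identity, and the only thing to verify carefully is the bookkeeping for the normalization constants and the $d$-independence of the resulting coefficients, both of which are immediate from the explicit formula for $C(p,k)$.
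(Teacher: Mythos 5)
Your proof is correct and rests on exactly the same identity the paper invokes: the linearization (product) formula for Hermite polynomials, which is the one-dimensional case of the Wiener-chaos product formula that the paper simply cites from Nourdin--Peccati (Theorem 2.7.1) rather than deriving. Your generating-function derivation and the explicit, manifestly $d$-independent coefficients $C(p,k)=\binom{p}{k}^{2}k!\sqrt{(2p-2k)!}/p!$ are all that is needed, and your reading of the second display is the right one --- after the relabeling the degrees are all of $0,2,\dots,2p$, so the parity constraint as written in the statement is at best redundant.
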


\begin{proof}
    The proof is a direct application of the product formula of different Weiner Chaoses (\cite{NP12}, Theorem 2.7.1). 
\end{proof}

Note that, we can extend Lemma~\ref{lemma:hermite_squared_R} to Hermite polynomials in $\RR^d$, just by taking products. 
\begin{lemma}
Let $\beta \in \ZZ^{d}_{\geq 0}$, and let $He_{\beta}(z): = \prod_{a=1}^d he_{a_i}(z_i)$ . Then: 
\[
He_{\beta}(z)^2 = \sum_{\gamma \leq \beta: \gamma \equiv_2 \beta} C(\beta,\gamma) He_{2\gamma}(z),
\]  
for some constants $C(\beta,\gamma)$ that are $O(1)$ w.r.t the dimension. 
\label{lemma:hermite_squared_R_d}
\end{lemma}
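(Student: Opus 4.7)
The strategy is to tensorize the one-dimensional identity of Lemma~\ref{lemma:hermite_squared_R} coordinate by coordinate. Since $He_{\beta}(z)=\prod_{a=1}^{d}he_{\beta_{a}}(z_{a})$, squaring gives
\begin{equation*}
He_{\beta}(z)^{2}=\prod_{a=1}^{d}he_{\beta_{a}}(z_{a})^{2}.
\end{equation*}
For each coordinate $a\in[d]$, Lemma~\ref{lemma:hermite_squared_R} expresses
\begin{equation*}
he_{\beta_{a}}(z_{a})^{2}=\sum_{\substack{\gamma_{a}=0\\ \gamma_{a}\equiv_{2}\beta_{a}}}^{\beta_{a}} C(\beta_{a},\gamma_{a})\,he_{2\gamma_{a}}(z_{a}),
\end{equation*}
with coefficients $C(\beta_{a},\gamma_{a})$ that are dimension-free.

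Next I would distribute the product over the $d$ sums. The resulting terms are naturally indexed by multi-indices $\gamma=(\gamma_{1},\dots,\gamma_{d})\in\ZZ^{d}_{\geq 0}$ satisfying $\gamma_{a}\leq\beta_{a}$ and $\gamma_{a}\equiv_{2}\beta_{a}$ for every $a$, i.e.\ $\gamma\leq\beta$ and $\gamma\equiv_{2}\beta$ in the multi-index sense. For each such $\gamma$, the corresponding term carries the coefficient $C(\beta,\gamma):=\prod_{a=1}^{d}C(\beta_{a},\gamma_{a})$ and the product of univariate Hermite polynomials $\prod_{a=1}^{d}he_{2\gamma_{a}}(z_{a})=He_{2\gamma}(z)$. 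Collecting these pieces yields exactly the claimed expansion
\begin{equation*}
He_{\beta}(z)^{2}=\sum_{\substack{\gamma\leq\beta\\ \gamma\equiv_{2}\beta}}C(\beta,\gamma)\,He_{2\gamma}(z).
\end{equation*}

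It remains to verify that $C(\beta,\gamma)=O(1)$ as $d\to\infty$. The key observation is that the multi-index $\beta$ is of fixed total degree $|\beta|$ independent of $d$, so at most $|\beta|$ coordinates of $\beta$ are non-zero; for every coordinate $a$ with $\beta_{a}=0$, the only admissible value is $\gamma_{a}=0$ and the factor $C(0,0)=1$ contributes trivially. Hence the product defining $C(\beta,\gamma)$ has at most $|\beta|$ non-trivial factors, each bounded by a constant depending only on the (fixed) values $\beta_{a}\leq|\beta|$. Consequently $C(\beta,\gamma)$ is dimension-free.

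The only mildly delicate point is keeping track of the parity/dominance conditions when reindexing the distributed sum: each factor contributes independently a parity constraint $\gamma_{a}\equiv_{2}\beta_{a}$ and a dominance constraint $\gamma_{a}\leq\beta_{a}$, which assemble into the multi-index constraints in the lemma. No further analytic input beyond Lemma~\ref{lemma:hermite_squared_R} is needed.
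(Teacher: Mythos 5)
Your proposal is correct and is exactly the argument the paper intends: the paper proves the one-dimensional identity in Lemma~\ref{lemma:hermite_squared_R} and then states that the multivariate version follows ``just by taking products,'' which is precisely the tensorization you carry out. Your additional remark that only the finitely many coordinates with $\beta_a\neq 0$ contribute non-trivial factors correctly justifies the dimension-free bound on $C(\beta,\gamma)$.
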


Now, to prove the concentration of the diagonal Assumption, we need to prove that for all $x \sim \gamma_d^{\alpha}$, we have:
\begin{enumerate}
\item 
\[
\max_{i \in n(d)}\left | \EE_{x} \left [ k_{d, >m(d)}(x,x')^2 \right ] - \EE_{x,x'} \left [k_{d, m(d)} (x,x')^2 \right ]  \right | = o_d(1). 
\]
\item 
\[
\max_{i \in n(d)}\left | k_{d, >m(d)}(x,x) - \EE_{x} \left [ k_{d, >m(d)}(x,x) \right ]\right | = o_d(1). 
\]  
\end{enumerate} 
Note that:
\begin{align}
    \EE_{x} \left [ k_{d, >m(d)}(x,x')^2 \right ]  & = \EE_z \left [\left ( \sum_{\beta \in \mathsf{High}(n)} \xi_{|\beta|} \binom{|\beta|}{\beta_1, \dots, \beta_d } \sigma_1^{\beta_1} \cdots \sigma_d^{\beta_d} He_{\beta}(z)He_{\beta}(z') \right )^2\right ] \\
    & = \sum_{\beta \in \mathsf{High}(n)} \xi_{|\beta|}^2 \binom{|\beta|}{\beta_1, \dots, \beta_d }^2 \sigma_1^{2\beta_1} \cdots \sigma_d^{2\beta_d} He_{\beta}(z')^2.
\end{align}
And in the same way, we will have:
\begin{equation}
    k_{d, >m(d)}(x,x) = \sum_{\beta \in \mathsf{High}(n)} \xi_{|\beta|} \binom{|\beta|}{\beta_1, \dots, \beta_d } \sigma_1^{\beta_1} \cdots \sigma_d^{\beta_d} He_{\beta}(z')^2.
\end{equation}
We can then define the functions: 
\begin{align}
    F_1(x) & = \sum_{\beta \in \mathsf{High}(n)} \xi_{|\beta|}^2 \binom{|\beta|}{\beta_1, \dots, \beta_d }^2 \sigma_1^{2\beta_1} \cdots \sigma_d^{2\beta_d} He_{\beta}(z')^2, \\
    F_2(x) & = \sum_{\beta \in \mathsf{High}(n)} \xi_{|\beta|} \binom{|\beta|}{\beta_1, \dots, \beta_d } \sigma_1^{\beta_1} \cdots \sigma_d^{\beta_d} He_{\beta}(z')^2.
    \label{def_F_1_F_2} 
\end{align}

We will further decompose this functions int he following way: 
\begin{equation}
    F_1(x) = \underbrace{\sum_{\beta \in \mathsf{High}(n): |\beta |\leq D(\kappa)} \xi_{|\beta|}^2 \binom{|\beta|}{\beta_1, \dots, \beta_d }^2 \sigma_1^{2\beta_1} \cdots \sigma_d^{2\beta_d} He_{\beta}(z')^2}_{F_1^{\leq D(\kappa)}(x):=} +  \underbrace{\sum_{\beta \in \mathsf{High}(n): |\beta |> D(\kappa)} \xi_{|\beta|}^2 \binom{|\beta|}{\beta_1, \dots, \beta_d }^2 \sigma_1^{2\beta_1} \cdots \sigma_d^{2\beta_d} He_{\beta}(z')^2}_{F_1^{>D(\kappa)}(x):=}
    \label{eq:def_F_1_high}
\end{equation}
and analogously with $F_2$. Note that all eigenvalues associated to $\beta \in \ZZ^{d}_{\geq 0}$ with $|\beta| > D(\kappa)$ are less or equal than $r_0(\Sigma)^{-(D(\kappa ) +1)}$. Hence, we have that$\{\beta \in \mathsf{High}(n): |\beta |> D(\kappa)\} = \{ \beta: D(\kappa) + 1 \leq |\beta| \leq L \}$. This way: 
\begin{equation}
    F_1^{>D(\kappa)}(x) = \sum_{\beta : D(\kappa) + 1\leq |\beta | \leq L} \xi_{|\beta|}^2 \binom{|\beta|}{\beta_1, \dots, \beta_d }^2 \sigma_1^{2\beta_1} \cdots \sigma_d^{2\beta_d} He_{\beta}(z')^2,
\end{equation}
and the same holds for $F^{>D(\kappa)}_2(x)$. We can then concentrate $F_1^{>D(\kappa)}(x)$ and $F_2^{>D(\kappa)}(x)$. We do this in the following two Lemmas. 

\begin{lemma}[Concentration of $F_1$] Consider the function $F_1^{>D(\kappa)}$ defined in \cref{eq:def_F_1_high}. We have that: 
\[
\| F_1^{>D(\kappa)}(x) - \EE_{x}[F_1^{>D(\kappa)}(x)] \|_{L^2} = O \left ( \dfrac{C}{R_0(\Sigma)^{k}} \right ). 
\]
\label{lemma:concentration_F_1_high}
\end{lemma}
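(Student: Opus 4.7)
The plan is to expand $F_1^{>D(\kappa)}$ in the orthonormal Hermite basis via the squared-Hermite product formula (\cref{lemma:hermite_squared_R_d}), apply Parseval, and bound the resulting coefficient sum using the effective-dimension identity $\sum_i \sigma_i^2 = R_0(\Sigma)^{-1}$ (recall $\Tr \Sigma = 1$). Writing $a_\beta := \xi_{|\beta|}^2 \binom{|\beta|}{\beta}^2 \sigma^{2\beta}$, \cref{lemma:hermite_squared_R_d} gives
\begin{align*}
F_1^{>D(\kappa)}(x) \;=\; \sum_{D(\kappa)+1 \leq |\beta| \leq L} a_\beta \sum_{\substack{\gamma \leq \beta \\ \gamma \equiv_2 \beta}} C(\beta,\gamma)\, He_{2\gamma}(z) \;=\; \sum_{\gamma} b_\gamma\, He_{2\gamma}(z),
\end{align*}
with $b_\gamma := \sum_{\beta} a_\beta C(\beta,\gamma) \mathbf{1}[\gamma \leq \beta,\, \beta \equiv_2 \gamma,\, D(\kappa)+1 \leq |\beta| \leq L]$ and $|C(\beta,\gamma)| = O_d(1)$.

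Since $\{He_{2\gamma}\}$ is orthonormal in $L^2(\gamma_d^\alpha)$ and the $\gamma=0$ term contributes exactly $\EE[F_1^{>D(\kappa)}]$ (because $C(\beta,0) = 1$, by evaluating $\EE[He_\beta(z)^2]=1$), Parseval yields
\begin{align*}
\|F_1^{>D(\kappa)} - \EE F_1^{>D(\kappa)}\|_{L^2}^2 \;=\; \sum_{\gamma \neq 0} b_\gamma^2.
\end{align*}
I would then swap the order of summation to write this as $\sum_{\beta, \beta'} a_\beta a_{\beta'} \bigl(\sum_{\gamma \neq 0} C(\beta,\gamma) C(\beta',\gamma) \mathbf{1}[\cdot]\bigr)$, and use that for fixed $\beta,\beta'$ the inner sum is $O_d(1)$ and nonzero only when $\beta,\beta'$ share a nontrivial parity-compatible multi-index $\gamma$. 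This reduces the variance to a bound of the form $\sum_\beta a_\beta^2 \cdot \mathrm{poly}(|\beta|)$ plus cross-terms of the same order, i.e.\ to estimating $\sum_{|\beta|=k} \binom{|\beta|}{\beta}^2 \sigma^{2\beta}$ for $k \geq D(\kappa)+1$.

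The final estimate uses the multinomial identity $\sum_{|\beta|=k} \binom{|\beta|}{\beta} \sigma^{2\beta} = \bigl(\Tr \Sigma^2\bigr)^k = R_0(\Sigma)^{-k}$, combined with the crude but sufficient bound $\binom{|\beta|}{\beta} \leq k!$, to get $\sum_{|\beta|=k} \binom{|\beta|}{\beta}^2 \sigma^{2\beta} \leq k!\, R_0(\Sigma)^{-k}$. Summing the geometric-type series over $k \geq D(\kappa)+1$ (noting the $L$-truncation is harmless since $L$ is fixed and $\xi_{|\beta|}$ are bounded) produces a dominant term of order $R_0(\Sigma)^{-(D(\kappa)+1)}$, matching the claim up to the constant $C$ and the interpretation of $k$ in the statement. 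The main obstacle is controlling the cross-terms with $\beta \neq \beta'$: here one exploits that for a fixed $\gamma \neq 0$, the multi-indices $\beta \geq \gamma$ with $\beta \equiv_2 \gamma$ form a constrained set, which (combined with Cauchy--Schwarz on the Parseval sum) keeps the off-diagonal contribution of the same order as the diagonal one without losing extra factors of $d$.
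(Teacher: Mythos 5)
Your proposal is correct in substance but takes a genuinely different, and heavier, route than the paper. The paper's own proof of this lemma is a one-line Minkowski argument: it writes $F_1^{>D(\kappa)}(x)-\EE[F_1^{>D(\kappa)}(x)]=\sum_{\beta}\xi_{|\beta|}^2\binom{|\beta|}{\beta}^2\sigma^{2\beta}\left(He_\beta(z)^2-1\right)$, applies the triangle inequality in $L^2$ together with $\|He_\beta^2-1\|_{L^2}=O_d(1)$, and then invokes exactly the multinomial--trace identity you use, $\sum_{|\beta|=k}\binom{|\beta|}{\beta}\sigma^{2\beta}=(\Tr\Sigma^2)^k=R_0(\Sigma)^{-k}$ (after absorbing the extra factor of $\binom{|\beta|}{\beta}\le k!$ into a $d$-independent constant), so that the sum over $k\ge D(\kappa)+1$ is dominated by its first term. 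Your route --- re-expanding via \cref{lemma:hermite_squared_R_d}, applying Parseval, and controlling the coefficient sum --- is precisely the machinery the paper reserves for the harder companion result \cref{lemma:concentration_F_2_high}, where the $\ell^1$ coefficient sum is $O(1)$ and orthogonality is genuinely needed; for $F_1$ it is overkill, since the coefficients $a_\beta$ already sum to $O(R_0(\Sigma)^{-(D(\kappa)+1)})$. Two small points of caution in your sketch: (i) your claim that the off-diagonal contribution in $\sum_{\gamma\neq 0}b_\gamma^2$ is ``of the same order as the diagonal one'' is not obviously true (a single low-degree $\gamma$ is shared by many $\beta$'s, so cross-terms can dominate $\sum_\beta a_\beta^2$); the clean way out is the crude bound $\sum_{\gamma\neq 0}b_\gamma^2\le\bigl(\sum_{\gamma}|b_\gamma|\bigr)^2\le C\bigl(\sum_\beta a_\beta\bigr)^2$, which already gives $\|F_1^{>D(\kappa)}-\EE F_1^{>D(\kappa)}\|_{L^2}=O(R_0(\Sigma)^{-(D(\kappa)+1)})$ and makes the cross-term discussion unnecessary; (ii) there is a slight muddle between estimating $\sum_\beta a_\beta$ and $\sum_\beta a_\beta^2$ (the latter involves $\sigma^{4\beta}$ and a different trace), but since the $\ell^1$ bound is what you actually invoke in the final step, the conclusion stands. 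Both proofs ultimately rest on the same identity $\Tr\Sigma^2=R_0(\Sigma)^{-1}$, and both recover the exponent $D(\kappa)+1$ (the exponent $k$ in the lemma statement is a typo in the paper).
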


\begin{proof}
We can proceed as in Proposition 4 of \cite{MMM22}. Note that by Minkowski Inequality, we have that: 
\begin{align}
    \| F_1^{>D(\kappa)}(x) - \EE_{x}[F_1^{>D(\kappa)}(x)] \|_{L^2} = \| &\sum_{k=D(\kappa)+1}^{L} \xi_{k}\sum_{\beta \in \mathsf{High}(n): |\beta|=k}  \binom{|\beta|}{\beta_1, \dots, \beta_d }^2 \sigma_1^{2\beta_1} \cdots \sigma_d^{2\beta_d} (He_{\beta}(z')^2 -1) \|_{L^2} \\
    & \leq \sum_{k=D(\kappa)+1}^{L} \xi_{k}\sum_{\beta \in \mathsf{High}(n): |\beta|=k}  \binom{|\beta|}{\beta_1, \dots, \beta_d }^2 \sigma_1^{2\beta_1} \cdots \sigma_d^{2\beta_d} \|  (He_{\beta}(z')^2 -1)\| _{L^2}.
\end{align}
Then, since $ \|  (He_{\beta}(z')^2 -1)\| _{L^2} =O_d(1)$ by the triangular inequality, we get: 
\begin{equation}
    \| F_1^{>D(\kappa)}(x) - \EE_{x}[F_1^{>D(\kappa)}(x)] \|_{L^2}  \leq C \sum_{k=D(\kappa)+1}^{L} \xi_{k}^2\sum_{\beta \in \mathsf{High}(n): |\beta|=k}  \binom{|\beta|}{\beta_1, \dots, \beta_d }^2 \sigma_1^{2\beta_1} \cdots \sigma_d^{2\beta_d}. 
\end{equation}
Now, we can get rid of the squares in the binomial by bounding them by constants independent of $d$, and get: 
\begin{equation}
    \| F_1^{>D(\kappa)}(x) - \EE_{x}[F_1^{>D(\kappa)}(x)] \|_{L^2}  \leq C \sum_{k=D(\kappa)+1}^{L} \xi_{k}^2 \sum_{\beta \in \mathsf{High}(n): |\beta|=k}  \binom{|\beta|}{\beta_1, \dots, \beta_d } \sigma_1^{2\beta_1} \cdots \sigma_d^{2\beta_d}. 
\end{equation}
Note that the RHS corresponds exactly to powers of traces of $\Sigma^2$. We will then get: 
\begin{equation}
    \| F_1^{>D(\kappa)}(x) - \EE_{x}[F_1^{>D(\kappa)}(x)] \|_{L^2}  \leq C \sum_{k=D(\kappa)+1}^{L} \xi_{k}^2 Tr(\Sigma^2)^{k}. 
\end{equation}
Then we have that
\begin{align}
     Tr(\Sigma^2) = \dfrac{1}{r_0(\Sigma)^2}\sum_{j=1}^d i^{-2\alpha} = R_0(\Sigma),
\end{align}
by \cref{def:effective_dim}. Hence, we obtain: 
\begin{equation}
     \| F_1^{>D(\kappa)}(x) - \EE_{x}[F_1^{>D(\kappa)}(x)] \|_{L^2}  \leq \dfrac{C}{R_0(\Sigma)^{D(\kappa) +1}},
\end{equation}
so we conclude. 
\end{proof}

\begin{lemma}[Concentration of $F_2^{>D(\kappa)}$]Consider the function $F_1^{>D(\kappa)}$ defined in \cref{eq:def_F_1_high}. We have that: 
\[
\| F_1^{>D(\kappa)} - \EE_x \left [ F_1^{>D(\kappa)}(x)\right ]\|_{L^2} \leq\dfrac{C}{R_0(\Sigma)^{\frac{D(\kappa) +1}{2}}}
\]
\label{lemma:concentration_F_2_high}
\end{lemma}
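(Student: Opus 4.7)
The Minkowski-type argument used for Lemma~\ref{lemma:concentration_F_1_high} fails for $F_2^{>D(\kappa)}$: since the coefficients $a_\beta=\xi_{|\beta|}\binom{|\beta|}{\beta}\sigma^\beta$ are only linear (not quadratic) in $\xi$, the binomial, and $\sigma^\beta$, Minkowski plus the hypercontractive bound $\|He_\beta^2-1\|_{L^2}=O_d(1)$ yields only $C\sum_{k>D(\kappa)}\xi_k(\Tr\Sigma)^k=O_d(1)$, because the normalization $\Tr\Sigma=1$ erases the decay. My approach is therefore to replace Minkowski by a direct Parseval computation in the full orthonormal Hermite basis, using Lemma~\ref{lemma:hermite_squared_R_d} to expand each squared Hermite polynomial.

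The concrete steps are as follows. First, expand $He_\beta(z)^2-1=\sum_{0<\gamma\leq\beta}C(\beta,\gamma)\,He_{2\gamma}(z)$ with $C(\beta,\gamma)=\prod_i\binom{\beta_i}{\gamma_i}\sqrt{\binom{2\gamma_i}{\gamma_i}}$, substitute into $F_2^{>D(\kappa)}-\EE[F_2^{>D(\kappa)}]$, and swap summations to obtain
\begin{equation*}
F_2^{>D(\kappa)}(x)-\EE[F_2^{>D(\kappa)}(x)]=\sum_{\gamma>0}b_\gamma He_{2\gamma}(z),\qquad b_\gamma=\sum_{\beta\geq\gamma,\,D(\kappa)<|\beta|\leq L}a_\beta\,C(\beta,\gamma).
\end{equation*}
By orthonormality of $\{He_{2\gamma}\}_{\gamma>0}$, $\|F_2^{>D(\kappa)}-\EE\|_{L^2}^2=\sum_{\gamma>0}b_\gamma^2$. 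Second, evaluate $b_\gamma$ explicitly: setting $\delta=\beta-\gamma$, the identity $\binom{|\beta|}{\beta}\prod_i\binom{\beta_i}{\gamma_i}=|\beta|!/(\gamma!\,\delta!)$ combined with the multinomial collapse $\sum_{|\delta|=m}\binom{m}{\delta}\sigma^\delta=(\Tr\Sigma)^m=1$ reduces the inner $\delta$-sum to a factor depending only on $j=|\gamma|$, giving
\begin{equation*}
b_\gamma=\sigma^\gamma\binom{|\gamma|}{\gamma}\prod_i\sqrt{\binom{2\gamma_i}{\gamma_i}}\cdot B(|\gamma|),\qquad B(j)=\sum_{k=\max(j,D(\kappa)+1)}^L\xi_k\binom{k}{j}=O_d(1).
\end{equation*}
Third, grouping the Parseval sum by $j=|\gamma|\geq 1$ and applying $\sum_{|\gamma|=j}\binom{j}{\gamma}\sigma^{2\gamma}=\Tr(\Sigma^2)^j=R_0(\Sigma)^{-j}$ (after bounding the residual combinatorial factors $\binom{|\gamma|}{\gamma}^2\prod_i\binom{2\gamma_i}{\gamma_i}$ by a constant depending only on $L$), each level contributes $O_d(R_0(\Sigma)^{-j})$ to the variance; summing over admissible $j$ (dominated, as $R_0(\Sigma)\to\infty$ since $\alpha<1$, by the smallest permitted $|\gamma|$) produces the announced rate.

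The main obstacle is precisely the lossiness of Minkowski combined with the fact that $\Tr\Sigma=1$: there is no crude norm-based argument that can recover the correct $R_0(\Sigma)$-scaling, because the total $\ell^1$ mass of the coefficients is of order one. Extracting decay requires the full orthogonality of the $He_{2\gamma}$ basis, and the decisive algebraic step is the multinomial collapse $\sum_{|\delta|=m}\binom{m}{\delta}\sigma^\delta=1$, which transforms a high-dimensional $\delta$-sum into a single $|\gamma|$-indexed factor and exposes the per-level $R_0(\Sigma)^{-|\gamma|}$ decay that ultimately drives the bound.
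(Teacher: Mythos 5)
Your overall strategy --- expand each $He_\beta^2$ in the Hermite basis, pass to Parseval, and count level by level --- is exactly the paper's, and your observation that Minkowski cannot work here is also the paper's starting point. But the final step of your argument does not deliver the stated bound, and the gap is quantitative, not cosmetic. With your (correct and explicit) linearization coefficients $C(\beta,\gamma)=\prod_i\binom{\beta_i}{\gamma_i}\sqrt{\binom{2\gamma_i}{\gamma_i}}$, the inner $\delta$-sum collapses via $\sum_{|\delta|=m}\binom{m}{\delta}\sigma^\delta=(\Tr\Sigma)^m=1$ to an $O_d(1)$ factor, so $b_\gamma\asymp\sigma^\gamma$ for \emph{every} nonzero $\gamma$, including $|\gamma|=1$: for instance $b_{e_i}\asymp\sigma_i\sum_k k\,\xi_k=\Theta(\sigma_i)$, since $e_i\leq\beta$ for any $\beta$ in the high-frequency sector with $\beta_i\geq 1$. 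The Parseval sum is therefore dominated by the level $|\gamma|=1$, giving $\sum_{\gamma>0}b_\gamma^2\gtrsim\sum_i\sigma_i^2=R_0(\Sigma)^{-1}$, i.e.\ only $\|F_2^{>D(\kappa)}-\EE F_2^{>D(\kappa)}\|_{L^2}=O(R_0(\Sigma)^{-1/2})$. Your closing claim that the sum is ``dominated by the smallest permitted $|\gamma|$'' and that this yields the announced rate is internally inconsistent: the smallest permitted $|\gamma|$ is $1$, not $D(\kappa)+1$, and nothing in your computation produces the extra factor $R_0(\Sigma)^{-D(\kappa)}$. (A sanity check with $\Sigma=I_d/d$ and a single level $|\beta|=2$ confirms the fluctuation of $\sum_{|\beta|=2}\binom{2}{\beta}\sigma^\beta He_\beta(z)^2$ is $\Theta(d^{-1/2})$, not $\Theta(d^{-1})$.)

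The paper reaches the stronger rate by a different accounting of the linearization: its Lemma~\ref{lemma:hermite_squared_R_d} restricts the expansion of $He_\beta^2$ to indices with $\gamma\equiv_2\beta$, so the reindexing is $\beta=\gamma+2\zeta$ and the inner sum becomes $\sum_{|\zeta|=(k-|\gamma|)/2}\binom{\cdot}{\cdot}\sigma^{2\zeta}\asymp\Tr(\Sigma^2)^{(k-|\gamma|)/2}=R_0(\Sigma)^{-(k-|\gamma|)/2}$, which is precisely where the missing decay comes from. Your explicit coefficients show no such parity restriction (indeed $he_1(u)^2=he_0+\sqrt{2}\,he_2(u)$ already contributes to $\gamma=e_i$ from $\beta=e_i+e_j$), so the two computations genuinely disagree at this point. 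Before your proof can close you must either (i) justify the parity restriction, or (ii) identify some other source of cancellation in $b_\gamma$ for small $|\gamma|$ --- which cannot come from signs, since all terms in $b_\gamma$ are nonnegative --- or (iii) accept the weaker bound $O(R_0(\Sigma)^{-1/2})$ and check whether it still suffices downstream (it does not suffice as written, since the concentration-of-the-diagonal lemma multiplies this bound by $\sqrt{n}$).
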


\begin{proof}
    By definition we have that:
    \begin{equation}
        F_2^{>D(\kappa)}(x) = \sum_{k=D(\kappa) +1}^{L}\xi_{k} 
        \sum_{|\beta| = k} \xi_{|\beta|} \binom{|\beta|}{\beta_1, \dots, \beta_d }\sigma_1^{\beta_1} \cdots \sigma_d^{\beta_d} He_{\beta}(z')^2.
    \end{equation}
    From here, we note that the argument we used in \cref{lemma:concentration_F_1_high} will not work, as the sum of the coefficients will be $O(1)$. Therefore, we will apply \cref{lemma:hermite_squared_R_d} to get: 
    \begin{equation}
        F_2^{>D(\kappa)}(x) = \sum_{k=D(\kappa) +1}^{L}\xi_{k} 
        \sum_{|\beta| = k} \binom{|\beta|}{\beta_1, \dots, \beta_d }\sigma_1^{\beta_1} \cdots \sigma_d^{\beta_d} \sum_{\gamma \leq \beta: \gamma \equiv_2 \beta} C(\beta,\gamma) He_{2\gamma}(z),
    \end{equation}
    for some constants $C(\beta,\gamma)$ uniformly bounded on $d$.  Exchanging the sums we get: 
    \begin{equation}
        F_2^{>D(\kappa)}(x) = \sum_{|\gamma| \leq L} He_{2\gamma}(z) \underbrace{\sum_{k=D(\kappa) +1}^{L}\xi_{k} 
        \sum_{|\beta| = k: \beta \geq \gamma, \gamma \equiv_2 \beta} \binom{|\beta|}{\beta_1, \dots, \beta_d }\sigma_1^{\beta_1} \cdots \sigma_d^{\beta_d}}_{S_{\gamma}}.
    \end{equation}
    We then get the Hermite decomposition of $F_2^{>D(\kappa)}$: 
     \begin{equation}
        F_2^{>D(\kappa)}(x) = \sum_{|\gamma| \leq L}  S_{\gamma}He_{2\gamma}(z)
    \end{equation}
    In particular, we have that:
    \begin{equation}
    \| F_2^{>D(\kappa)}(x)\|_{L^2}^2 = \sum_{|\gamma| \leq L}  S_{\gamma}^2.
    \label{eq:F_2_higher_degree_energy_1}
    \end{equation}
    Note that we can re-write the expression of $S_{\gamma}$ by re-indexing the sum in the interior. More precisely, we have: 
    \begin{align}
         S_{\gamma} & = \sum_{k=D(\kappa) +1}^{L}\xi_{k} 
        \sum_{|\beta| = k: \beta \geq \gamma, \gamma \equiv_2 \beta} \binom{|\beta|}{\beta_1, \dots, \beta_d }\sigma_1^{\beta_1} \cdots \sigma_d^{\beta_d} \\
        & = \sum_{k=D(\kappa) +1}^{L} \mathbf{1}_{k \equiv_2 \gamma}\xi_{k} 
        \sum_{\zeta \in \ZZ^{d}_{\geq 0}: |\gamma + 2\zeta| = k} \binom{|\gamma + 2\zeta|}{(\gamma + 2\zeta)_1, \dots, (\gamma + 2\zeta)_d }\sigma_1^{(\gamma + 2\zeta)_1} \cdots \sigma_d^{(\gamma + 2\zeta)_d} \\
        & = \sum_{k=D(\kappa) +1}^{L} \mathbf{1}_{k \equiv_2 \gamma}\xi_{k} 
        \sigma_1^{\gamma_1} \cdots \sigma_d^{\gamma_d} \sum_{\zeta \in \ZZ^{d}_{\geq 0}: 2|\zeta| = k - |\gamma|} \binom{|\gamma + 2\zeta|}{(\gamma + 2\zeta)_1, \dots, (\gamma + 2\zeta)_d }\sigma_1^{(2\zeta)_1} \cdots \sigma_d^{(2\zeta)_d}.
    \end{align}
    Then, by the same argument we used in the proof of \cref{lemma:concentration_F_1_high}, up to constants that don't depend on $d$, we will have: 
    \begin{align}
        S_{\gamma} & = \sigma_1^{\gamma_1} \cdots \sigma_d^{\gamma_d}\sum_{k=D(\kappa) +1}^{L} \mathbf{1}_{k \equiv_2 \gamma}\xi_{k} \dfrac{1}{R_0(\Sigma)^{\frac{k-|\gamma|}{2}}} 
        \label{eq:order_hermite_coefficient_F_2_higher}
    \end{align}
    Then, going back to \cref{eq:F_2_higher_degree_energy_1}, we can replace \cref{eq:order_hermite_coefficient_F_2_higher} to get:  
    \begin{align}
        \| F_1^{>D(\kappa)}\|_{L^2}^2 &= \sum_{|\gamma| \leq L}  S_{\gamma}^2 \\
        & = \sum_{|\gamma| \leq D(\kappa)} S_{\gamma}^2 + \sum_{D(\kappa ) + 1|\gamma| \leq D(\kappa)} S_{\gamma}^2  \\
        & = O \left ( \sum_{|\gamma| \leq D(\kappa)} \dfrac{\sigma_1^{2\gamma_1} \cdots \sigma_d^{2\gamma_d}}{R_0(\Sigma)^{D(\kappa) + 1 -|\gamma|}}  + \sum_{D(\kappa ) + 1\leq |\gamma| \leq L} \sigma_1^{2\gamma_1} \cdots \sigma_d^{2\gamma_d} \right ) \\
        & = O \left ( \sum_{|\gamma| \leq D(\kappa)} \dfrac{\sigma_1^{2\gamma_1} \cdots \sigma_d^{2\gamma_d}}{R_0(\Sigma)^{D(\kappa) + 1 -|\gamma|}}  + \sum_{D(\kappa ) + 1 \leq |\gamma| \leq L} \sigma_1^{2\gamma_1} \cdots \sigma_d^{2\gamma_d} \right ).
        \label{eq:F_2_higher_degree_energy_2}
    \end{align}
    Then by the same arguments that we used in \cref{lemma:concentration_F_1_high}, we can group terms according to the value of $|\gamma|$, and get: 
    \begin{equation}
        \| F_1^{>D(\kappa)}\|_{L^2}^2  = O(\dfrac{1}{R_0(\Sigma)^{D(\kappa) +1 }}),
    \end{equation}
    and then: 
    \begin{equation}
         \| F_1^{>D(\kappa)}\|_{L^2} =  O \left (\dfrac{1}{R_0(\Sigma)^{\frac{D(\kappa) +1}{2}}} \right ).
    \end{equation}
    Since $F_1^{>D(\kappa)}$ and $\EE_x \left [ F_1^{>D(\kappa)}(x)\right ]$ are greater than $0$, we can conclude that:
    \begin{equation}
        \| F_1^{>D(\kappa)} - \EE_x \left [ F_1^{>D(\kappa)}(x)\right ]\|_{L^2} \leq  \| F_1^{>D(\kappa)}\|_{L^2}  \leq\dfrac{C}{R_0(\Sigma)^{\frac{D(\kappa) +1}{2}}}. 
    \end{equation}
\end{proof}

We are now lest with concentrating $F_1^{\leq D(\kappa)}(x)$ and $F_2^{\leq D(\kappa)}(x)$. We recall their definitions: 
\begin{align}
    F_1^{>D(\kappa)}(x) & = \sum_{\beta \in \mathsf{High}(n): |\beta |\leq D(\kappa)} \xi_{|\beta|}^2 \binom{|\beta|}{\beta_1, \dots, \beta_d }^2 \sigma_1^{2\beta_1} \cdots \sigma_d^{2\beta_d} He_{\beta}(z')^2 \\
    F_2^{>D(\kappa)}(x) & = \sum_{\beta \in \mathsf{High}(n): |\beta |\leq D(\kappa)} \xi_{|\beta|} \binom{|\beta|}{\beta_1, \dots, \beta_d } \sigma_1^{\beta_1} \cdots \sigma_d^{\beta_d} He_{\beta}(z')^2
\end{align}
The idea will be to replicate the proof of \cref{lemma:concentration_F_2_high}, but since this time we are not able to express the sums in terms of the effective dimensions, we will have to use the special structure we have for $\sigma_j$, which have a power-law decay. In particular, we will need \cref{cor:power_law_decay_monomial}. 

\begin{lemma}Consider the functions $F_1^{\leq D(\kappa)}$ and $F_2^{\leq D(\kappa)}$ defined in \cref{eq:def_F_1_high}. We have: 
\[
  \|  F_1^{>D(\kappa)} - \EE_{x}[F_1^{>D(\kappa)} (x)] \|_{L^2}^2 = O \left (\dfrac{\mathrm{poly}\log(d)}{d^{\kappa + \delta_0}} \right )
\]
\[
\|  F_2^{>D(\kappa)} - \EE_{x}[F_2^{>D(\kappa)} (x)] \|_{L^2}^2 = O \left (\dfrac{\mathrm{poly}\log(d)}{\sqrt{d^{\kappa + \delta_0}}} \right )
\]
\label{lemma:concentration_F_1_F_2_low}
\end{lemma}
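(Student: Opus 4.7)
The plan is to mirror the proof of Lemma \ref{lemma:concentration_F_2_high}, but to replace the effective-dimension bounds $R_0(\Sigma)^{-k}$ (unavailable here, since the degrees are bounded $|\beta|\leq D(\kappa)$) by the defining constraint $\sigma^\beta \leq d^{-(\kappa+\delta_0)}$ coming from membership in $\mathsf{High}(n)$. This is the ingredient that supplies the required decay in both estimates. Note that the lemma statement as written refers to $F_i^{>D(\kappa)}$, but those pieces were already handled in Lemmas \ref{lemma:concentration_F_1_high}–\ref{lemma:concentration_F_2_high}; from the setup text it is clear that the claim is about $F_1^{\leq D(\kappa)}$ and $F_2^{\leq D(\kappa)}$.

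For $F_1^{\leq D(\kappa)}$, Minkowski's inequality combined with the Gaussian hypercontractivity bound $\|\He_\beta(z)^2-1\|_{L^2} = O_d(1)$ (valid for fixed $|\beta|\leq D(\kappa)$) gives
\[
\|F_1^{\leq D(\kappa)} - \EE[F_1^{\leq D(\kappa)}]\|_{L^2} \leq C \sum_{\substack{\beta\in\mathsf{High}(n) \\ |\beta|\leq D(\kappa)}} \sigma^{2\beta}.
\]
Splitting $\sigma^{2\beta} \leq d^{-(\kappa+\delta_0)}\sigma^\beta$ via the $\mathsf{High}(n)$ constraint and invoking the multinomial identity $\sum_{|\beta|=k}\binom{|\beta|}{\beta}\sigma^\beta = (\Tr~\Sigma)^k = 1$ with $\binom{|\beta|}{\beta}\geq 1$ yields $\sum_{|\beta|\leq D(\kappa)}\sigma^\beta = O(1)$. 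Squaring then delivers the stated $O(\mathrm{poly}\log(d)/d^{\kappa+\delta_0})$ bound (in fact with a sharper exponent than stated).

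For $F_2^{\leq D(\kappa)}$, Minkowski alone is insufficient because the linear sum $\sum\sigma^\beta$ is $O(1)$, supplying no decay. I would instead expand $\He_\beta(z)^2 = \sum_{\gamma\leq\beta} C(\beta,\gamma)\He_{2\gamma}(z)$ via Lemma \ref{lemma:hermite_squared_R_d}, swap the order of summation to write $F_2^{\leq D(\kappa)} - \EE[\cdot] = \sum_{|\gamma|\geq 1} S_\gamma^{\leq}\,\He_{2\gamma}$ with
\[
S_\gamma^{\leq} := \sum_{\substack{\beta\geq\gamma,\ \beta\in\mathsf{High}(n) \\ |\beta|\leq D(\kappa)}} \xi_{|\beta|}\binom{|\beta|}{\beta}\sigma^\beta\, C(\beta,\gamma),
\]
and apply Parseval to get $\|F_2^{\leq D(\kappa)}-\EE\|_{L^2}^2 = \sum_{|\gamma|\geq 1}(S_\gamma^\leq)^2$. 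Writing $\beta = \gamma + \eta$ factors out $\sigma^\gamma$ from $S_\gamma^\leq$, and the $\mathsf{High}(n)$ constraint $\sigma^{\gamma+\eta}\leq d^{-(\kappa+\delta_0)}$ supplies the decay, leading (after stratifying by $|\eta|$) to a pointwise bound of the form $(S_\gamma^\leq)^2 \leq C\,\mathrm{poly}\log(d)\,d^{-(\kappa+\delta_0)}\sigma^\gamma$. Summing over $\gamma$ and closing off with the same $\sum\sigma^\gamma = O(1)$ normalization as in the $F_1$ case yields the claimed bound.

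The main obstacle is the inner bound on $S_\gamma^\leq$ in the $F_2$ step: a naive Cauchy--Schwarz on the sum over $\eta$ would introduce polynomial-in-$d$ factors from the volume count $\binom{d+D(\kappa)-|\gamma|}{D(\kappa)-|\gamma|}$, wiping out the decay coming from the $\mathsf{High}(n)$ constraint. Instead the effective number of contributing $\eta$ must be controlled by the number-theoretic counting underlying Corollary \ref{cor:power_law_decay_monomial}, expressed in terms of the products $\prod_j j^{\eta_j}$, which converts these polynomial volume bounds into polylogarithmic weighted sums and closes the argument.
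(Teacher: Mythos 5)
Your proposal follows essentially the same route as the paper: for $F_1^{\leq D(\kappa)}$, Minkowski plus the $\mathsf{High}(n)$ threshold and the trace normalization (the paper omits this case as "easier," and your argument is the intended one); for $F_2^{\leq D(\kappa)}$, the Hermite-square expansion via Lemma~\ref{lemma:hermite_squared_R_d}, swapping sums, Parseval, factoring out $\sigma^{\gamma}$, and controlling the constrained inner sum through the eigenvalue-counting of Corollary~\ref{cor:power_law_decay_monomial} is exactly the paper's mechanism (its $\mathsf{Min}(\cdot;\gamma)$ device). You correctly identify the key obstacle and its resolution but leave the inner counting step as a sketch; the paper executes it by rewriting the indicator $\mathbf{1}_{\gamma+2\zeta\in\mathsf{High}(n)}$ as a lower bound on the ordered eigenvalue index and summing $\sum_{m\geq\mathsf{Min}}\lambda_m^2$, arriving at the same pointwise bound $(S_\gamma)^2 = O\bigl(\mathrm{poly}\log(d)\,\sigma^{\gamma}\,d^{-(\kappa+\delta_0)}\bigr)$ that you anticipate.
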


\begin{proof}
    We will only do the proof for $F_2^{\leq D(\kappa)}$, as it is harder. The proof for $F_1^{\leq D(\kappa)}$ is easier as the coefficients are smaller.  
    
    First, we can apply \cref{lemma:hermite_squared_R_d} to re-write $F_2^{\leq D(\kappa)}$: 
    \begin{align}
         F_2^{>D(\kappa)}(x) & = \sum_{\beta \in \mathsf{High}(n): |\beta |\leq D(\kappa)} \xi_{|\beta|} \binom{|\beta|}{\beta_1, \dots, \beta_d } \sigma_1^{\beta_1} \cdots \sigma_d^{\beta_d} He_{\beta}(z')^2 \\
         & =  \sum_{\beta \in \mathsf{High}(n): |\beta |\leq D(\kappa)} \xi_{|\beta|} \binom{|\beta|}{\beta_1, \dots, \beta_d } \sigma_1^{\beta_1} \cdots \sigma_d^{\beta_d} \sum_{\gamma \leq \beta: \gamma \equiv_2 \beta} C(\beta,\gamma) He_{2\gamma}(z) \\
         & = \sum_{|\gamma| \leq D(\kappa)} He_{2\gamma}(z) \underbrace{\sum_{\beta \in \mathsf{High}(n): \beta \geq \gamma, \gamma \equiv_2 \beta, |\beta|\leq D(\kappa)} \xi_{|\beta|}
         \binom{|\beta|}{\beta_1, \dots, \beta_d }\sigma_1^{\beta_1} \cdots \sigma_d^{\beta_d}}_{S_{\gamma}},
         \label{eq:def_S_gamma_lower_F}
    \end{align}
    where in the last line we exchanged the sums. Now, let's study the coefficients $S_{\gamma}$ for a moment. Recall that: 
    \begin{equation}
        \beta \in \mathsf{High}(n) \iff |\beta| \leq L, \text{ and } \sigma_1^{\beta_1} \cdots \sigma_d^{\beta_d} \leq \dfrac{1}{d^{\kappa + \delta_0}}. 
    \end{equation}
    Now, if we take $\beta \in \ZZ^{d}_{\geq 0}$ with $|\beta| \leq \lfloor \kappa \rfloor $, then we will have that: 
    \begin{equation}
        \sigma_1^{\beta_1} \cdots \sigma_d^{\beta_d} \geq \frac{1}{d^{\lfloor \kappa \rfloor}},
    \end{equation}
    as the minimum value we could have corresponds to taking $\beta_d = \lfloor \kappa \rfloor$. Since we assume $\kappa \not = \lfloor \kappa \rfloor$, we have that, for all $\beta \in \ZZ^{d}_{\geq 0}$ with $|\beta| \leq \lfloor \kappa \rfloor $, $\beta \in \mathsf{Low}(n)$. Hence, we conclude that 
    \begin{equation}
        \mathsf{High}(n) \subseteq \{\beta \in \ZZ^{d}_{\geq 0}: \lfloor \kappa \rfloor + 1 \leq |\beta| \leq L\}.
    \end{equation}
    Then, we can decompose $S_{\gamma}$ in \cref{eq:def_S_gamma_lower_F} by the degrees of $\beta \in \mathsf{High}(n)$: 
    \begin{equation}
        S_{\gamma} = \sum_{k=\lfloor \kappa \rfloor +1}^{L} \sum_{\beta \in \mathsf{High}(n): |\beta|=k} \mathbf{1}_{\substack{\beta \geq \gamma,\\ \gamma \equiv_2 \beta}} \xi_{|\beta|}
         \binom{|\beta|}{\beta_1, \dots, \beta_d }\sigma_1^{\beta_1} \cdots \sigma_d^{\beta_d}.
    \end{equation}
    We can then re-index the inner sum
    \begin{equation}
        S_{\gamma} = \sum_{k=\lfloor \kappa \rfloor +1}^{L} \mathbf{1}_{k\equiv_2 |\gamma|} \sum_{\gamma + 2\zeta \in \mathsf{High}(n): |\gamma| + 2|\zeta|=k} 
         \binom{|\gamma| + 2|\zeta|}{(\gamma + 2\zeta)_1, \dots, (\gamma + 2\zeta)_d }\sigma_1^{(\gamma + 2\zeta)_1} \cdots \sigma_d^{(\gamma + 2\zeta)_d},
    \end{equation}
    and re-write it: 
    \begin{equation}
        S_{\gamma} =\sum_{k=\lfloor \kappa \rfloor +1}^{L} \mathbf{1}_{k\equiv_2 |\gamma|} \sigma_1^{\gamma_1} \cdots \sigma_d^{\gamma_d} \sum_{|\zeta| = \frac{k -|\gamma|}{2}} \mathbf{1}_{\gamma + 2\zeta| \in \mathsf{High}(n)}
         \binom{|\gamma| + 2|\zeta|}{(\gamma + 2\zeta)_1, \dots, (\gamma + 2\zeta)_d }\sigma_1^{2\zeta_1} \cdots \sigma_d^{2\zeta_d}. 
    \end{equation}
    Then, by bounding the binomial coefficients (with constants that don't depend on $d$) we get: 
    \begin{equation}
        S_{\gamma} = O \left ( \sum_{k=\lfloor \kappa \rfloor +1}^{L} \mathbf{1}_{k\equiv_2 |\gamma|} \sigma_1^{\gamma_1} \cdots \sigma_d^{\gamma_d} \sum_{|\zeta| = \frac{k -|\gamma|}{2}} \mathbf{1}_{\gamma + 2\zeta| \in \mathsf{High}(n)}
         \binom{2|\zeta|}{2\zeta_1, \dots, 2\zeta_d }\sigma_1^{2\zeta_1} \cdots \sigma_d^{2\zeta_d} \right ).
         \label{eq:S_gamma_high_1}
    \end{equation}
    We could now hope to proceed the same way we did in \cref{lemma:concentration_F_2_high}. However, the indicator $\mathbf{1}_{\gamma + 2\zeta| \in \mathsf{High}(n)}$ does not allow it. Hence, we will have to do something else. Note that, by definition of $\mathsf{High}(n)$:
    \begin{align}
        \gamma + 2\zeta \in \mathsf{High}(n) & \iff \sigma_1^{\gamma_1 + 2\zeta_1} \cdots \sigma_1^{\gamma_d + 2\zeta_d}\leq \dfrac{1}{d^{\kappa + \delta_0}} \\
        & \iff \sigma_1^{ 2\zeta_1} \cdots \sigma_1^{2\zeta_d}\leq \dfrac{\sigma_1^{-\gamma_1} \cdots \sigma_1^{-\gamma_d}}{d^{\kappa + \delta_0}} \\
        & \iff \sigma_1^{ \zeta_1} \cdots \sigma_1^{\zeta_d}\leq \sqrt{\dfrac{\sigma_1^{-\gamma_1} \cdots \sigma_1^{-\gamma_d}}{d^{\kappa + \delta_0}}}.
        \label{eq:condition_gamma_d_zeta_high}
    \end{align}
    Now, by \cref{cor:power_law_decay_monomial} we know that, within the level $|\beta|=j$, we have $B_j :=\binom{d-1 + j}{d-1}$ eigenvalues, which we can order obtaining $\lambda_{j,1}, \cdots, \lambda_{j,B_j}$, with
    \begin{equation}
        \lambda_{j, m} = C\dfrac{m^{-\alpha}\mathrm{poly}\log(d)}{r_0(\Sigma)^{j}}. 
        \label{eq:order_eigenvalues_within_level_k}
    \end{equation}
    Then, replacing \cref{eq:condition_gamma_d_zeta_high} and \cref{eq:order_eigenvalues_within_level_k} in \cref{eq:S_gamma_high_1}: 
    \begin{equation}
        S_{\gamma} = O \left (\sum_{k=\lfloor \kappa \rfloor +1}^{L} \mathbf{1}_{k\equiv_2 |\gamma|} \sigma_1^{\gamma_1} \cdots \sigma_d^{\gamma_d} \sum_{m=1}^{B_{\frac{k -|\gamma|}{2}}} \mathbf{1}_{\left\{ \lambda_{\frac{k -|\gamma|}{2}, m} \leq \sqrt{ \frac{\sigma_1^{-\gamma_1} \cdots \sigma_d^{-\gamma_d}}{d^{\kappa + \delta_0}} } \right\}}
        \lambda_m^2 \right ).
        \label{eq:mid_step_hermite_coefficients_S_gamma}
    \end{equation}
    We can now re-write the indicator function in order to know what is the minimum value of $m$ in the inner sum: 
    \begin{align}
        \lambda_{\frac{k -|\gamma|}{2}, m} \leq \sqrt{ \frac{\sigma_1^{-\gamma_1} \cdots \sigma_d^{-\gamma_d}}{d^{\kappa + \delta_0}} } &\iff  \dfrac{m^{-\alpha}C\mathrm{poly}\log(d)}{r_0(\Sigma)^{\frac{k -|\gamma|}{2}}} \leq\sqrt{ \frac{\sigma_1^{-\gamma_1} \cdots \sigma_d^{-\gamma_d}}{d^{\kappa + \delta_0}} }  \\
        & \iff m^{\alpha} \geq \dfrac{C\mathrm{poly}\log(d) d^{\frac{\kappa + \delta_0}{2}} \sigma_1^{\frac{\gamma_1}{2}} \cdots \sigma_d^{\frac{\gamma_d}{2}} }{r_0(\Sigma)^{\frac{k -|\gamma|}{2}}} \\
        & \iff m \geq \left ( \dfrac{C\mathrm{poly}\log(d) d^{\frac{\kappa + \delta_0}{2}} \sigma_1^{\frac{\gamma_1}{2}} \cdots \sigma_d^{\frac{\gamma_d}{2}} }{r_0(\Sigma)^{\frac{k -|\gamma|}{2}}}\right )^{\frac{1}{\alpha}}.
    \end{align}
    We then define:
    \begin{equation}
        \mathsf{Min}(\frac{k -|\gamma|}{2}; \gamma) : = \left ( \dfrac{C\mathrm{poly}\log(d) d^{\frac{\kappa + \delta_0}{2}} \sigma_1^{\frac{\gamma_1}{2}} \cdots \sigma_d^{\frac{\gamma_d}{2}} }{r_0(\Sigma)^{\frac{k -|\gamma|}{2}}}\right )^{\frac{1}{\alpha}}.
        \label{eq:def_Min_k_gamma}
    \end{equation}
    Note that the fact that we only knew $\lambda_m$ up to constants will not matter, as we will only need the order of the minimum $m$, not the exact one. Going back to \cref{eq:mid_step_hermite_coefficients_S_gamma} we obtain: 
    \begin{equation}
        S_{\gamma} = O \left (\sigma_1^{\gamma_1} \cdots \sigma_d^{\gamma_d} \sum_{k=\lfloor \kappa \rfloor +1}^{L} \mathbf{1}_{k\equiv_2 |\gamma|}  \sum_{m=  \mathsf{Min}(\frac{k -|\gamma|}{2}; \gamma)}^{B_{\frac{k -|\gamma|}{2}}}
        \lambda_m^2 \right ).
        \label{eq:S_gamma_bound_2}
    \end{equation}
    Now, by \cref{eq:order_eigenvalues_within_level_k}: 
    \begin{align}
         \sum_{m=  \mathsf{Min}(\frac{k -|\gamma|}{2}; \gamma)}^{B_{\frac{k -|\gamma|}{2}}}
        \lambda_m^2 = O\left ( \dfrac{C \mathrm{poly}\log(d)}{r_0(\Sigma)^{k -|\gamma|}}  \sum_{m=  \mathsf{Min}(\frac{k -|\gamma|}{2}; \gamma)}^{B_{\frac{k -|\gamma|}{2}}} m^{-2\alpha} \right ).
    \end{align}
    For the inner sum, we bound $\frac{1}{m^{2\alpha}} \leq \frac{1}{\mathsf{Min}(\frac{k -|\gamma|}{2}; \gamma)^{\alpha}} \cdot \frac{1}{m^{\alpha}}$, and get: 
    \begin{align}
         \sum_{m=  \mathsf{Min}(\frac{k -|\gamma|}{2}; \gamma)}^{B_{\frac{k -|\gamma|}{2}}}
        \lambda_m^2 & = O\left ( \dfrac{C \mathrm{poly}\log(d)}{r_0(\Sigma)^{k -|\gamma|}\mathsf{Min}(\frac{k -|\gamma|}{2}; \gamma)^{\alpha} }  \sum_{m=  \mathsf{Min}(\frac{k -|\gamma|}{2}; \gamma)}^{B_{\frac{k -|\gamma|}{2}}} m^{-\alpha} \right ) \\
        & = O \left ( \dfrac{C \mathrm{poly}\log(d)}{r_0(\Sigma)^{k -|\gamma|}\mathsf{Min}(\frac{k -|\gamma|}{2}; \gamma)^{\alpha} } B_{\frac{k -|\gamma|}{2}}^{1-\alpha}\right ).
    \end{align}
    Recall that $B_{\frac{k -|\gamma|}{2}} = \binom{d-1 + {\frac{k -|\gamma|}{2}}}{d-1} = O (d^{\frac{k -|\gamma|}{2}}) $. Therefore, we have that $B_{\frac{k -|\gamma|}{2}}^{1-\alpha} = r_0(\Sigma)^{\frac{k -|\gamma|}{2}}$. Hence: 
    \begin{equation}
        \sum_{m=  \mathsf{Min}(\frac{k -|\gamma|}{2}; \gamma)}^{B_{\frac{k -|\gamma|}{2}}}
        \lambda_m^2  = O \left ( \dfrac{C \mathrm{poly}\log(d)}{r_0(\Sigma)^{\frac{k -|\gamma|}{2}}\mathsf{Min}(\frac{k -|\gamma|}{2}; \gamma)^{\alpha} }\right ).
    \end{equation}
    And recalling the definition of $\mathsf{Min}(\frac{k -|\gamma|}{2}; \gamma)$ in \cref{eq:def_Min_k_gamma} we get: 
    \begin{equation}
         \sum_{m=  \mathsf{Min}(\frac{k -|\gamma|}{2}; \gamma)}^{B_{\frac{k -|\gamma|}{2}}}
        \lambda_m^2  = O \left ( \dfrac{C \mathrm{poly}\log(d)}{d^{\frac{\kappa + \delta_0}{2}} \sigma_1^{\frac{\gamma_1}{2}} \cdots \sigma_d^{\frac{\gamma_d}{2}}}\right ).
    \end{equation}
    Replacing this in \cref{eq:S_gamma_bound_2}: 
    \begin{equation}
         S_{\gamma} = O \left ( \dfrac{\mathrm{poly}\log(d)\sqrt{\sigma_1^{\gamma_1} \cdots \sigma_d^{\gamma_d}}}{d^{\frac{\kappa + \delta_0}{2}}}\right ).
    \end{equation}
    We can now go all the way back to \cref{eq:def_S_gamma_lower_F}, to get: 
    \begin{align}
        \|  F_2^{>D(\kappa)}\|_{L^2}^2 &= \sum_{|\gamma|\leq 2D(\kappa) } S_{\gamma}^2 \\
        & = O \left ( \dfrac{\mathrm{poly}\log(d)}{d^{\kappa + \delta_0}}\sum_{|\gamma|\leq 2D(\kappa) } \sigma_1^{\gamma_1} \cdots \sigma_d^{\gamma_d} \right ). 
    \end{align}
    Note that the sum of the right is $O_d(1)$ (because of the normalization of the eigenvalues). Consequently: 
    \begin{equation}
         \|  F_2^{>D(\kappa)}\|_{L^2}^2 =  O \left ( \dfrac{\mathrm{poly}\log(d)}{d^{\kappa + \delta_0}} \right ).
    \end{equation}
    We conclude by noting that: 
    \begin{equation}
     \|  F_2^{>D(\kappa)} - \EE_{x}[F_2^{>D(\kappa)} (x)] \|_{L^2}^2 \leq \| F_2^{>D(\kappa)}\|_L^2,
    \end{equation}
    so 
    \begin{equation}
        \|  F_2^{>D(\kappa)} - \EE_{x}[F_2^{>D(\kappa)} (x)] \|_{L^2}^2 = O \left (\dfrac{\mathrm{poly}\log(d)}{\sqrt{d^{\kappa + \delta_0}}} \right ).
    \end{equation}
\end{proof}

We can now put \cref{lemma:concentration_F_1_high}, \cref{lemma:concentration_F_2_high}, and \cref{lemma:concentration_F_1_F_2_low} together to conclude the concentration of the diagonal. 

\begin{lemma}[Concentration of the diagonal matrices]
    Let $n = O(d^{\kappa})$. Then, under the assumptions of \cref{thm:generalization_error}, with high probability we have:  
    \[
\max_{i \in n(d)}\left | \EE_{x} \left [ k_{d, >m(d)}(x,x')^2 \right ] - \EE_{x,x'} \left [k_{d, m(d)} (x,x')^2 \right ]  \right | = o_d(1). 
\]
and 
\[
\max_{i \in n(d)}\left | k_{d, >m(d)}(x,x) - \EE_{x} \left [ k_{d, >m(d)}(x,x) \right ]\right | = o_d(1). 
\]  

\end{lemma}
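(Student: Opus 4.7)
The plan is to derive both claims from the $L^2$ bounds that the three preceding lemmas already provide, upgrade them to $L^q$ control via Gaussian hypercontractivity, and then close with a Markov plus union-bound argument over the $n=O(d^{\kappa})$ training points (I read the statement as $\max_{i\in[n]}|\,\cdot(x_i)-\EE[\cdot]\,|$, since as written the dependence on $i$ is otherwise invisible and the two symbols $x$ and $x'$ must correspond to the sample and the external integration variable).

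First, introduce the two test functions
\begin{equation}
F_1(y):=\EE_{x}[k_{d,>m(d)}(x,y)^2],\qquad F_2(x):=k_{d,>m(d)}(x,x).
\end{equation}
Since the kernel in \cref{eq:hermite_kernel_appendix} is a finite Hermite expansion of maximum total degree $L$, both $F_1$ and $F_2$ are polynomials in the $d$ Gaussian coordinates of degree at most $2L$. Combining the decomposition $F_j=F_j^{\leq D(\kappa)}+F_j^{>D(\kappa)}$ with the triangle inequality and the three previous lemmas gives
\begin{equation}
\|F_j-\EE F_j\|_{L^2}\;\leq\;\|F_j^{\leq D(\kappa)}-\EE F_j^{\leq D(\kappa)}\|_{L^2}+\|F_j^{>D(\kappa)}-\EE F_j^{>D(\kappa)}\|_{L^2}=O(d^{-\delta}),
\end{equation}
for some $\delta=\delta(\alpha,\kappa,\delta_0)>0$. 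Indeed, \cref{lemma:concentration_F_1_high} and \cref{lemma:concentration_F_2_high} bound the high-degree parts by inverse powers of $R_0(\Sigma)$, which diverges polynomially in $d$ throughout $\alpha\in[0,1)$ by \cref{eq:asymptotics_effective_dimension_2}, while \cref{lemma:concentration_F_1_F_2_low} bounds the low-degree parts by $\mathrm{poly}\log(d)\cdot d^{-(\kappa+\delta_0)/2}$ (up to a square root for $F_2$).

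Next, since $F_j-\EE F_j$ is a polynomial of bounded degree in Gaussian variables, \cref{lemma:hypercontractivity_of_eigenspaces} upgrades the $L^2$ bound to an $L^q$ bound: for every even $q\geq 2$ there is $C(q,L)$ with
\begin{equation}
\|F_j-\EE F_j\|_{L^q}\;\leq\;C(q,L)\,\|F_j-\EE F_j\|_{L^2}\;=\;O(d^{-\delta}).
\end{equation}
Applying Markov's inequality to $|F_j(x_i)-\EE F_j|^q$ and taking a union bound over $i\in[n]$ yields, for any $t=t(d)>0$:
\begin{equation}
\mathbb{P}\!\left(\max_{i\in[n]}|F_j(x_i)-\EE F_j|>t\right)\;\leq\;n\cdot\frac{\|F_j-\EE F_j\|_{L^q}^q}{t^q}\;\leq\;C'\,d^{\kappa-q\delta}\,t^{-q}.
\end{equation}
Choosing $t=d^{-\delta/2}=o_d(1)$ and $q$ large enough that $q\delta/2>\kappa+1$ makes the right-hand side $o_d(1)$; a final union bound over $j\in\{1,2\}$ yields both claims simultaneously.

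The only delicate point, and hence the main obstacle, is ensuring that the exponent $\delta$ can be taken strictly positive throughout the range $\alpha\in[0,1)$. This is tight near $\alpha=1$, where $R_0(\Sigma)=\Theta(d^{2-2\alpha})$ degenerates, so the high-degree $L^2$ bounds from \cref{lemma:concentration_F_1_high} and \cref{lemma:concentration_F_2_high} vanish slowly; however, the low-degree contribution $d^{-(\kappa+\delta_0)/2}$ from \cref{lemma:concentration_F_1_F_2_low} always provides a separate polynomial decay independent of $\alpha$, so taking $\delta$ to be the smaller of the two exponents suffices. The strict inequality $\delta_0>0$, together with the non-integrality of $\kappa$ used in \cref{lemma:concentration_F_1_F_2_low}, is essential to guarantee $\delta>0$ in every subcase.
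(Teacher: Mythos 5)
Your proposal is correct, and it reuses the paper's entire substantive machinery: the same reduction to the two functions $F_1,F_2$, the same split into degrees $\leq D(\kappa)$ and $>D(\kappa)$, and the same three $L^2$ concentration lemmas (\cref{lemma:concentration_F_1_high}, \cref{lemma:concentration_F_2_high}, \cref{lemma:concentration_F_1_F_2_low}). The only place you diverge is the final conversion of moment bounds into control of the maximum over the $n$ samples. The paper bounds $\EE[\max_i|F_j(x_i)-\EE F_j|]$ by $\bigl(\sum_i \EE|F_j(x_i)-\EE F_j|^2\bigr)^{1/2}=\sqrt{n}\,\|F_j-\EE F_j\|_{L^2}$ and then applies Markov; this is cruder but needs no moments beyond the second, at the price of requiring the $L^2$ bounds to beat $n^{-1/2}=d^{-\kappa/2}$ --- which is exactly where the definition of $D(\kappa)$ and the strict inequality $\delta_0>0$ enter. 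You instead upgrade to $L^q$ via Gaussian hypercontractivity (legitimate here, since $F_j$ is a polynomial of degree at most $2L$ with $L$ fixed, so the constant $C(q,L)$ is dimension-free) and take a union bound on $q$-th moments; this only requires $\|F_j-\EE F_j\|_{L^2}=O(d^{-\delta})$ for some $\delta>0$, independent of how $\delta$ compares to $\kappa/2$, so your route is somewhat more forgiving of the quality of the $L^2$ estimates. Both arguments close correctly given the established lemmas; your remark that $R_0(\Sigma)$ diverges polynomially for every fixed $\alpha\in[0,1)$ is the right reason the high-degree contribution never obstructs choosing $\delta>0$.
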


\begin{proof}
    We will only do the second one, as both of them are analogous. First, in expectation we have: 
    \begin{align}
        \EE \left [ \max_{i \in n(d)}\left | \EE_{x} \left [ k_{d, >m(d)}(x,x')^2 \right ] - \EE_{x,x'} \left [k_{d, m(d)} (x,x')^2  \right ] \right | \right ] & \leq \EE \left [ \max_{i \in n(d)}\left | F_1(x) - \EE \left [ F_1(x) \right ] \right | \right ],
    \end{align} 
    with $F_1$ defined in \cref{eq:def_F_1_high}. Then, by Jensen's Inequality: 
    \begin{align}
        \EE \left [ \max_{i \in n(d)}\left | \EE_{x} \left [ k_{d, >m(d)}(x,x')^2 \right ] - \EE_{x,x'} \left [k_{d, m(d)} (x,x')^2  \right ] \right | \right ] & \leq \EE \left [ \max_{i \in n(d)}\left | F_1(x_i) - \EE \left [ F_1(x_i) \right ] \right |^2 \right ]^{\frac{1}{2}} \\
        & \leq \EE \left [ \sum_{i=1}^{n} \left | F_1(x_i) - \EE \left [ F_1(x_i) \right ] \right |^2 \right ]^{\frac{1}{2}} \\
        & \leq \sqrt{n} \| F_1(x_i) - \EE \left [ F_1(x_i) \right ]\|_{L^2}. 
    \end{align}
    Denote
    \[
    (\star) = \EE \left [ \max_{i \in n(d)}\left | \EE_{x} \left [ k_{d, >m(d)}(x,x')^2 \right ] - \EE_{x,x'} \left [k_{d, m(d)} (x,x')^2  \right ]  \right | \right ].
    \]
    Then, by triangular inequality we have: 
    \begin{align}
        (\star ) \leq \sqrt{n} \| F_1^{\leq D(\kappa)}(x_i) - \EE \left [ F_1(x_i)^{\leq D(\kappa)} \right ]\|_{L^2} + \sqrt{n} \| F_1^{>D(\kappa)}(x_i) - \EE \left [ F_1^{>D(\kappa)}(x_i) \right ]\|_{L^2}.
    \end{align}
    Now we apply Lemmas~\ref{lemma:concentration_F_1_high} and \ref{lemma:concentration_F_1_F_2_low} to get: 
    \begin{equation}
        (\star ) = O \left (\mathrm{poly}\log(d) \sqrt{\dfrac{n}{d^{\kappa + \delta_0}}}  + \mathrm{poly}\log(d) \sqrt{\dfrac{n}{R_0(\Sigma)^{D(\kappa) +1 }}}\right ).
    \end{equation}
    Then, since $n = O(d^{\kappa})$, the first term is negligible. For the second one, by Lemma 1 in \cite{BLLT20}, $R_0(\Sigma) \geq r_0(\Sigma)$, so we get: 
    \begin{equation}
        \sqrt{\dfrac{n}{R_0(\Sigma)^{D(\kappa) +1 }}} \leq \sqrt{\dfrac{d^{\kappa}}{r_0(\Sigma)^{D(\kappa) + 1}}} = O\left ( \sqrt{\dfrac{d^{\kappa}}{d^{D(\kappa)(1-\alpha) + (1--\alpha)}}}\right ).
    \end{equation}
    Since by definition $D(\kappa) = \lfloor \frac{\kappa}{1-\alpha}\rfloor$, we conclude that this term is also negligible. Hence, we have: 
    \begin{equation}
         \EE \left [ \max_{i \in n(d)}\left | \EE_{x} \left [ k_{d, >m(d)}(x,x')^2 \right ] - \EE_{x,x'} \left [k_{d, m(d)} (x,x')^2  \right ] \right | \right ]  = O(\mathrm{poly}\log(d) d^{-\frac{\delta_0}{2}})
    \end{equation}
    We conclude by Markov's inequality. 
\end{proof}

With this, we have proved Assumptions~\ref{assum:kernel_concentration}.

\subsection{Proof of \cref{assum:kernel_eigenvalue_decay}}

Assumption~\ref{assum:kernel_eigenvalue_decay} concerns properties about the eigenvalues. Recall we denote $m(d): =|\mathsf{Low}(n)|$, and $(\lambda_{d,i}, \psi_{i})_{i \geq 1}$ the eigen pairs of our kernel. We need to prove: 

\begin{enumerate}
    \item There exists $\delta_0 >0$, such that
            \[
            n(d)^{1 + \delta_0} \leq \dfrac{1}{\lambda_{d,m(d) + 1}^4} \sum_{k\geq m(d) + 1} \lambda_{d,k}^4,
            \]
            \[
            n(d)^{1 + \delta_0} \leq \dfrac{1}{\lambda_{d,m(d) + 1}^2} \sum_{k\geq m(d) + 1} \lambda_{d,k}^2.
            \]
   \item    \[
            m(d) \leq n(d)^{1- \delta_0}. 
            \]
\end{enumerate}

Recall that we already chose our value of $\delta_0$ in the definition of $\mathsf{High}(n)$ and $\mathsf{Low}(n)$, which we re-state now: 
\begin{align*}
    \mathsf{High}(n) & = \left \{ \beta \in \ZZ^{d}_{\geq 0}: |\beta|\leq L,  \sigma_1^{\beta_1} \cdots \sigma_d^{\beta_d} \leq \dfrac{1}{d^{\kappa + \delta_0}} \right \}\\ 
\mathsf{Low}(n) & = \left \{\beta \in \ZZ^{d}_{\geq 0}: \sigma_1^{\beta_1} \cdots \sigma_d^{\beta_d} > \dfrac{1}{d^{\kappa + \delta_0}} \right \}
\end{align*}   
Let's begin with the first part.

\begin{lemma} Consider the definitions of $\mathsf{High}(n)$ and $\mathsf{Low}(n)$ above. Then, there exists $\delta'_0$ such that:
\[
n(d)^{1 + \delta'_0} \leq \dfrac{1}{\lambda_{d,m(d) + 1}^4} \sum_{k\geq m(d) + 1} \lambda_{d,k}^4, 
\]
and 
\[
n(d)^{1 + \delta'_0} \leq \dfrac{1}{\lambda_{d,m(d) + 1}^2} \sum_{k\geq m(d) + 1} \lambda_{d,k}^2.
\]    
\end{lemma}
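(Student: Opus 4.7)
The argument splits into two ingredients. For \emph{Ingredient 1} (upper bound on $\lambda_{m(d)+1}$): since the eigenvalues of the Hermite kernel in \cref{def:hermite_kernel} have the explicit form $\lambda_\beta = \xi_{|\beta|}\binom{|\beta|}{\beta}\sigma^\beta$, and both $\xi_{|\beta|}$ and $\binom{|\beta|}{\beta}$ are uniformly bounded in $d$ (as $|\beta|\leq L = O_d(1)$), the defining inequality $\sigma^\beta \leq d^{-(\kappa+\delta_0)}$ of $\mathsf{High}(n)$ immediately yields $\lambda_{m(d)+1} \leq C\,d^{-(\kappa+\delta_0)}$. In particular $\lambda_{m(d)+1}^p \leq C^p d^{-p(\kappa+\delta_0)}$.

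For \emph{Ingredient 2} (lower bound on the tail sum), I would isolate the contribution of a specific level $k^\star$ of the spectrum lying entirely inside $\mathsf{High}(n)$. For $\delta_0$ chosen small enough, $k^\star = D(\kappa)+1$ works: the strict inequality $D(\kappa)(1-\alpha) < \kappa$ together with $\kappa\notin\ZZ$ ensures $(1-\alpha)(D(\kappa)+1) > \kappa+\delta_0$, so every multi-index of degree $k^\star$ lies in $\mathsf{High}(n)$. By \cref{cor:monomialkernel} applied to the monomial component of degree $k^\star$, the ordered eigenvalues at this level satisfy $\lambda_{\beta_{(m)}} = \tilde\Theta(m^{-\alpha}/r_0(\Sigma)^{k^\star})$. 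Summing,
\[
\sum_{k\geq m(d)+1}\lambda_k^p \;\geq\; \sum_{|\beta|=k^\star}\lambda_\beta^p \;\gtrsim\; \frac{\mathrm{polylog}(d)}{r_0(\Sigma)^{pk^\star}}\sum_{m=1}^{\binom{d-1+k^\star}{k^\star}} m^{-p\alpha},
\]
where the inner sum is $\tilde\Theta(d^{k^\star(1-p\alpha)})$ when $p\alpha<1$, $O(1)$ when $p\alpha>1$, and logarithmic at $p\alpha=1$. Dividing the resulting lower bound by $\lambda_{m(d)+1}^p$ from Ingredient~1 and using $r_0(\Sigma)^{k^\star} = d^{(1-\alpha)k^\star}$ with $(1-\alpha)k^\star = (1-\alpha)(D(\kappa)+1)$ just above $\kappa+\delta_0$, algebra yields a ratio of the form $d^{\kappa + \eta}\cdot\mathrm{polylog}(d)$ where $\eta>0$ is controlled by the gap $(1-\alpha)(D(\kappa)+1)-(\kappa+\delta_0)$. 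Taking $\delta_0' := \eta/(2\kappa)$ then gives the desired $n^{1+\delta_0'}$ bound for both $p=2$ and $p=4$.

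\textbf{Main obstacle.} The analysis of Ingredient~2 is sensitive to the precise location of $\lambda_{m(d)+1}$ in the spectrum: depending on whether the threshold $d^{-(\kappa+\delta_0)}$ lies inside a level that straddles it (in which case $\lambda_{m(d)+1}\approx d^{-(\kappa+\delta_0)}$) or sits in a spectral gap between a ``fully-$\mathsf{Low}$'' and a ``fully-$\mathsf{High}$'' level (in which case $\lambda_{m(d)+1}$ can be significantly smaller than the threshold), the effective ratio changes. The case $p\alpha>1$ (corresponding to $\alpha>1/p$) is particularly delicate since within a single level the inner sum $\sum m^{-p\alpha}$ stays $O(1)$, so one must additionally exploit the many eigenvalues produced by aggregating contributions across several levels via the segment structure of \cref{prop:spectrum_power_law_final}. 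Carefully tracking the polylogarithmic factors arising from \cref{cor:monomialkernel} and verifying that $\delta_0$ can simultaneously be taken small enough for the companion bound $m(d)\leq n^{1-\delta_0}$ (item~2 of the Assumption) to hold requires choosing $\delta_0$ in a narrow range depending on $\alpha$, $\kappa$, and on the fractional parts of $\kappa/(1-\alpha)$.
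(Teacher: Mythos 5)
Your route is genuinely different from the paper's, and as written it does not close. The paper's proof is much more direct: it bounds $\lambda_{d,m(d)+1}^2 \leq C\max_{\beta\in\mathsf{High}(n)}\sigma^{\beta} \leq C d^{-(\kappa+\delta_0)}$ — note, only a \emph{single} factor of the threshold, the other factor of $\lambda_{m(d)+1}$ being absorbed into the constant since the eigenvalues are uniformly bounded — and then asserts that the tail sum $\sum_{k\geq m(d)+1}\lambda_{d,k}^2$ is of constant order, so the ratio is $\gtrsim d^{\kappa+\delta_0} = n^{1+\delta_0'}$. You instead raise the threshold bound to the $p$-th power and try to recover the loss by lower-bounding the tail sum through the single level $k^\star = D(\kappa)+1$ via \cref{cor:power_law_decay_monomial}. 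That trade does not work.

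Concretely, your chain gives (for $p\alpha<1$, up to polylogs)
\begin{equation*}
\frac{\sum_{k\geq m(d)+1}\lambda_{d,k}^p}{\lambda_{d,m(d)+1}^p} \;\gtrsim\; d^{p(\kappa+\delta_0)}\cdot r_0(\Sigma)^{-pk^\star}\cdot d^{k^\star(1-p\alpha)} \;=\; d^{p(\kappa+\delta_0)-(p-1)k^\star},
\end{equation*}
and since $k^\star = D(\kappa)+1 > \kappa/(1-\alpha) \geq \kappa$, the exponent equals $\kappa + p\delta_0 - (p-1)(k^\star-\kappa) < \kappa$ whenever $D(\kappa)+1-\kappa > \tfrac{p}{p-1}\delta_0$, which holds for generic $(\kappa,\alpha)$ once $\delta_0$ is small. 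A concrete instance: $\alpha=0$, $\kappa=2.5$ gives $k^\star=3$ and your bound yields only $d^{2+2\delta_0}$, short of the required $n^{1+\delta_0'}=d^{2.5(1+\delta_0')}$, even though the true ratio in that case is $d^{3}$ (here $\lambda_{m(d)+1}\asymp d^{-3}$, far below the threshold $d^{-(\kappa+\delta_0)}$, so squaring the threshold bound in the denominator while crediting the numerator with only the level-$k^\star$ mass loses a full power of $d$). The case $p\alpha>1$ is worse still, as you note, since a single level then contributes only $O(1)$ terms to $\sum_m m^{-p\alpha}$. So the sentence ``algebra yields a ratio of the form $d^{\kappa+\eta}$ with $\eta>0$'' is the gap: the exponent you actually obtain is generically strictly below $\kappa$. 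Your ``Main obstacle'' paragraph correctly identifies that everything hinges on where $\lambda_{m(d)+1}$ sits relative to the threshold (straddling level versus spectral gap), but the plan resolves neither case; any fix must either use the threshold only once (as the paper does) or aggregate the $\ell^p$ mass of \emph{all} levels and partial levels contained in $\mathsf{High}(n)$, not just level $D(\kappa)+1$ — and in either case the required lower bound on the tail sum still has to be established rather than asserted.
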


\begin{proof}
    We will only proof the second inequality. The first one will be analogous. Note that: 
    \begin{align}
       \lambda_{d,m(d) + 1}^2 & = C \max_{\beta \in \mathsf{High}(n)} \sigma_1^{\beta_1} \cdots \sigma_d^{\beta_d} \\
       & \leq \dfrac{C}{d^{\kappa + \delta_0}}. 
    \end{align}
    On the other hand: 
    \begin{align}
        \sum_{k\geq m(d) + 1} \lambda_{d,k}^2 = O(1),
    \end{align}
    as showed in \cref{lemma:properly_decaying_eigenvalues}. Then, for $d$ big enough, we have that:
    \begin{equation}
         \lambda_{d,m(d) + 1}^2 \leq \dfrac{1}{d^{\kappa + \delta_0}}  \sum_{k\geq m(d) + 1} \lambda_{d,k}^2,
    \end{equation}
    and re-writing this we get: 
    \begin{equation}
         d^{\kappa + \delta_0}  \leq \dfrac{1}{  \lambda_{d,m(d) + 1}^2}  \sum_{k\geq m(d) + 1} \lambda_{d,k}^2.
    \end{equation}
    Recalling that $n = Cd^{\kappa}$: 
    \begin{equation}
         n^{1 + \delta'_0} \leq \dfrac{1}{  \lambda_{d,m(d) + 1}^2}  \sum_{k\geq m(d) + 1} \lambda_{d,k}^2,
    \end{equation}
    and we conclude. 
\end{proof}

We are now left with proving that $m(d) \leq n(d)^{1- \delta_0}$. This has to do with the fact that the results in \cite{MMM22} require concentrating the feature matrix for the low order eigenfunctions, and for this, there has to be a gap between the number of samples and the number of concentrating features. The technique will be essentially the same we used to order eigenvalues in \cref{cor:power_law_decay_monomial}. 

\begin{lemma}
    Let $n = O(d^{\kappa + \delta_0})$, and assume $\kappa \not = \lfloor \kappa \rfloor$. Let $D(\kappa) = \lfloor \frac{\kappa}{1-\alpha} \rfloor$, and assume $D(\kappa)(1-\alpha) < \kappa$. Then, there exists a small $\delta'_0$ such that
    \[
    m(d) \leq n^{1- \delta_0},
    \]
    where $m(d) = |\mathsf{Low}(n)|$. 
\end{lemma}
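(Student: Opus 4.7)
The plan is to bound $m(d) = |\mathsf{Low}(n)|$ by splitting multi-indices according to their degree $|\beta|$ and applying the Tenenbaum-style counting identity already used in the proof of \cref{cor:power_law_decay_monomial}. First, observe that for any $\beta \in \ZZ^{d}_{\geq 0}$ one has $\sigma^{\beta} \leq \sigma_{1}^{|\beta|} = r_{0}(\Sigma)^{-|\beta|}$, and in the regime $\alpha \in [0,1)$ we have $r_{0}(\Sigma) \asymp d^{1-\alpha}$ by \cref{eq:asymptotics_effective dimension_1}. Hence $\beta \in \mathsf{Low}(n)$ forces $|\beta|(1-\alpha) \leq \kappa + \delta_{0}$; combined with the hypothesis $D(\kappa)(1-\alpha) < \kappa$, choosing $\delta_{0}$ sufficiently small yields
\[
\mathsf{Low}(n) \subseteq \{\beta \in \ZZ^{d}_{\geq 0} : |\beta| \leq D(\kappa)\}.
\]

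Next, for each $0 \leq k \leq D(\kappa)$, set $N_{k} := |\{\beta : |\beta| = k,\ \sigma^{\beta} > d^{-\kappa-\delta_{0}}\}|$. Substituting $\sigma_{j} = j^{-\alpha}/r_{0}(\Sigma)$ and $r_{0}(\Sigma) \asymp d^{1-\alpha}$, the defining inequality rearranges to $\prod_{j=1}^{d} j^{\beta_{j}} < L_{k}$ with
\[
L_{k} := C \, d^{(\kappa + \delta_{0} - k(1-\alpha))/\alpha}\, \mathrm{poly}\log(d).
\]
By the same identity invoked in the proof of \cref{cor:power_law_decay_monomial} (cf.\ Chapter~I.3 of \cite{T15}), the number of nondecreasing tuples in $[d]^{k}$ whose product is bounded by $L_{k}$ is $O(L_{k}\,\mathrm{poly}\log(d))$, whereas trivially $N_{k} \leq \binom{d+k-1}{k} = O(d^{k})$. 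The plan is then to use whichever of these two bounds is tighter.

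For $k \leq \lfloor \kappa \rfloor$, the trivial bound gives $N_{k} = O(d^{k})$, contributing a total of $O(d^{\lfloor \kappa \rfloor})$. For $\lceil \kappa \rceil \leq k \leq D(\kappa)$, I rewrite the exponent of $L_{k}$ as
\[
\frac{\kappa + \delta_{0} - k(1-\alpha)}{\alpha} \;=\; \kappa \;-\; \frac{(k-\kappa)(1-\alpha) - \delta_{0}}{\alpha};
\]
since $\kappa \notin \ZZ$, we have $k - \kappa \geq 1 - \{\kappa\} > 0$, so for $\delta_{0}$ small enough (depending only on $\alpha$ and $\{\kappa\}$) this exponent is at most $\kappa - c$ for an explicit constant $c > 0$, yielding $N_{k} = O(d^{\kappa - c}\,\mathrm{poly}\log(d))$.

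Summing over the $O(1)$ admissible values of $k$ gives $m(d) = O(d^{\kappa - c''}\,\mathrm{poly}\log(d))$ with $c'' := \min(\{\kappa\},\, c) > 0$. Since $n = \Theta(d^{\kappa})$, setting $\delta_{0}' := c''/(2\kappa)$ delivers $m(d) \leq n^{1-\delta_{0}'}$ for all $d$ large enough, as required. The main technical obstacle is guaranteeing a \emph{uniform} positive gap $c''$ in the exponent: this is precisely why both non-degeneracy hypotheses, $\kappa \notin \ZZ$ and $D(\kappa)(1-\alpha) < \kappa$, are invoked—otherwise either the trivial $d^{\lfloor \kappa \rfloor}$ contribution or the borderline $k = D(\kappa)$ term would saturate $n$ exactly and the margin would collapse.
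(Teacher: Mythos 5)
Your proof is correct and follows essentially the same route as the paper's: restrict $\mathsf{Low}(n)$ to degrees $|\beta|\leq D(\kappa)$, use the trivial count $O(d^{k})$ for $k\leq\lfloor\kappa\rfloor$, and for $\lceil\kappa\rceil\leq k\leq D(\kappa)$ convert the threshold condition into the Tenenbaum-type count $\prod_{j}j^{\beta_j}<L_k$ with $L_k\asymp d^{(\kappa+\delta_0-k(1-\alpha))/\alpha}$, whose exponent falls strictly below $\kappa$ precisely because $\kappa\notin\ZZ$. The only cosmetic difference is that you bound every $N_k$ in the high-degree range uniformly, while the paper isolates the dominant term $k=\lfloor\kappa\rfloor+1$; the exponent arithmetic and the final choice of $\delta_0'$ are the same.
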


\begin{proof}
    We will directly bound $m(d) = |\mathsf{Low}(n)|$. By definition, we have: 
    \begin{align}
        m(d) & =  |\mathsf{Low}(n)|  \\
        & = \left | \left \{\beta \in \ZZ^{d}_{\geq 0}: \sigma_1^{\beta_1} \cdots \sigma_d^{\beta_d} > \dfrac{1}{d^{\kappa + \delta_0}} \right \} \right |
        \label{eq:counting_eigenvalues_1}
    \end{align}
    As proved in \cref{lemma:concentration_F_1_F_2_low}, all $\beta \in \ZZ^{d}_{\geq 0}$ with $|\beta| \geq D(\kappa) + 1$ are in $\mathsf{High}(n)$. Therefore, $\mathsf{Low}(n) \subseteq \{\beta \in \ZZ^{d}_{\geq 0}: |\beta|\leq D(\kappa) \}$. With this, we can separate the cardinality in \cref{eq:counting_eigenvalues_1} according to the degree of $\beta$. We have:
    \begin{equation}
         m(d ) = \sum_{k=0}^{D(\kappa)}  \left | \left \{\beta \in \ZZ^{d}_{\geq 0}: |\beta| = k \text{ and } \sigma_1^{\beta_1} \cdots \sigma_d^{\beta_d} > \dfrac{1}{d^{\kappa + \delta_0}} \right \} \right |. 
    \end{equation}
    Also, note that the minimum possible eigenvalue that can be achieved by $\beta \in \ZZ^{d}_{\geq 0}$ with $|\beta| \leq \lfloor \kappa \rfloor$ is $d^{-\lfloor \kappa \rfloor}$. Therefore.
    \begin{align}
         m(d ) &= \sum_{k=0}^{\lfloor \kappa \rfloor}  \left | \left \{\beta \in \ZZ^{d}_{\geq 0}: |\beta| = k \text{ and } \sigma_1^{\beta_1} \cdots \sigma_d^{\beta_d} > \dfrac{1}{d^{\kappa + \delta_0}} \right \} \right | +  \sum_{k=\lfloor \kappa \rfloor +1}^{D(\kappa)}  \left | \left \{\beta \in \ZZ^{d}_{\geq 0}: |\beta| = k \text{ and } \sigma_1^{\beta_1} \cdots \sigma_d^{\beta_d} > \dfrac{1}{d^{\kappa + \delta_0}} \right \} \right | \\
         & = \sum_{k=0}^{\lfloor \kappa \rfloor}  \left | \left \{\beta \in \ZZ^{d}_{\geq 0}: |\beta| = k \right \} \right | +  \sum_{k=\lfloor \kappa \rfloor +1}^{D(\kappa)}  \left | \left \{\beta \in \ZZ^{d}_{\geq 0}: |\beta| = k \text{ and } \sigma_1^{\beta_1} \cdots \sigma_d^{\beta_d} > \dfrac{1}{d^{\kappa + \delta_0}} \right \} \right |.
    \end{align}
    We also now that
    \begin{equation}
        \left | \left \{\beta \in \ZZ^{d}_{\geq 0}: |\beta| = k \right \} \right | = \binom{d-1 + k}{d-1} = O(d^{k}). 
    \end{equation}
    Hence: 
    \begin{align}
         m(d ) & \leq Cd^{\lfloor \kappa \rfloor} + \sum_{k=\lfloor \kappa \rfloor +1}^{D(\kappa)}  \left | \left \{\beta \in \ZZ^{d}_{\geq 0}: |\beta| = k \text{ and } \sigma_1^{\beta_1} \cdots \sigma_d^{\beta_d} > \dfrac{1}{d^{\kappa + \delta_0}} \right \} \right |.
         \label{eq:cardinality_m_d_2}
    \end{align}
    By assumption, we have that $\kappa \not = \lfloor \kappa\rfloor $, we if we bound the cardinality of the RHS we can conclude. For this, we will proceed as we did in \cref{cor:power_law_decay_monomial}. Let $k \in \{ \lfloor \kappa \rfloor + 1, \dots, D(\kappa)$, and denote
    \begin{equation}
        M_{k} :=  \left | \left \{\beta \in \ZZ^{d}_{\geq 0}: |\beta| = k \text{ and } \sigma_1^{\beta_1} \cdots \sigma_d^{\beta_d} > \dfrac{1}{d^{\kappa + \delta_0}} \right \} \right |. 
    \end{equation}
    Then, by replace the definitions of $\sigma_j, j\in [d]$ we have: 
    \begin{align}
        M_k & = \left | \left \{\beta \in \ZZ^{d}_{\geq 0}: |\beta| = k \text{ and } \dfrac{\prod_{j=1}^d j^{-\alpha \beta_j}}{r_0(\Sigma)^{k}}> \dfrac{1}{d^{\kappa + \delta_0}} \right \} \right | \\
        & = \left | \left \{\beta \in \ZZ^{d}_{\geq 0}: |\beta| = k \text{ and } \prod_{j=1}^d j^{-\alpha \beta_j}> \dfrac{r_0(\Sigma)^{k}}{d^{\kappa + \delta_0}} \right \} \right | \\
        & =  \left | \left \{\beta \in \ZZ^{d}_{\geq 0}: |\beta| = k \text{ and } \prod_{j=1}^d j^{\alpha \beta_j} < \dfrac{d^{\kappa + \delta_0}}{r_0(\Sigma)^{k}} \right \} \right | \\
        & = \left | \left \{\beta \in \ZZ^{d}_{\geq 0}: |\beta| = k \text{ and } \prod_{j=1}^d j^{\beta_j} < \left (\dfrac{d^{\kappa + \delta_0}}{r_0(\Sigma)^{k}}\right)^{\frac{1}{\alpha}} \right \} \right |.
        \label{eq:midstep_bound_m}
    \end{align}
    We now identify that this is the same type of sets we saw in the proof of \cref{cor:power_law_decay_monomial}. Denote
    \begin{equation}
    X_k (L) : = \left | \left \{\beta \in \ZZ^{d}_{\geq 0}: |\beta| = k \text{ and } \prod_{j=1}^d j^{\beta_j} < L \right \} \right |
    \end{equation}
    Then, we can identify the cardinality of this set (via a bijection) with the cardinality of the set with: 
    \begin{equation}
        X_k (L) = \left | \left \{ (j_1, \dots, j_k): 1 \leq j_1 \leq \dots \leq j_k, \text{ and } \prod_{a=1}^k j_a  < L \right \} \right |.
    \end{equation}
    We can now apply the same technique we applied in \cref{cor:power_law_decay_monomial} (\cite{T15}, Chapter I.3), to get: 
    \begin{equation}
         X_k (L) = L \mathrm{poly} \log (L). 
    \end{equation}
    Then, going back to \cref{eq:midstep_bound_m}, we conclude that: 
    \begin{equation}
        M_k = \left (\dfrac{d^{\kappa + \delta_0}}{r_0(\Sigma)^{k}}\right)^{\frac{1}{\alpha}} \mathrm{poly}\log(d),
    \end{equation}
    and replacing this \cref{eq:cardinality_m_d_2}, we get: 
    \begin{align}
        m(d) &= O \left (d^{\lfloor \kappa \rfloor } + \sum_{k=\lfloor \kappa \rfloor +1}^{D(\kappa)} M_k \right ) \\
        & = O \left (d^{\lfloor \kappa \rfloor } + \sum_{k=\lfloor \kappa \rfloor +1}^{D(\kappa)} \left (\dfrac{d^{\kappa + \delta_0}}{r_0(\Sigma)^{k}}\right)^{\frac{1}{\alpha}} \mathrm{poly}\log(d)\right ) \\
        & =  O \left (d^{\lfloor \kappa \rfloor } + d^{\frac{\kappa + \delta_0}{\alpha}} \mathrm{poly}\log(d) \sum_{k=\lfloor \kappa \rfloor +1}^{D(\kappa)} \dfrac{1}{r_0(\Sigma)^{\frac{k}{\alpha}}}\right ).
    \end{align}
    The higher order term on the RHS corresponds to taking $k = \lfloor \kappa \rfloor + 1$. Then: 
    \begin{equation}
        m(d) = O \left ( d^{\lfloor \kappa \rfloor } + \dfrac{d^{\frac{\kappa + \delta_0}{\alpha}}}{r_0(\Sigma)^{\frac{\lfloor \kappa \rfloor + 1}{\alpha}}}\mathrm{poly}\log(d) \right ).
        \label{eq:mid_step_m_d_3}
    \end{equation}
    By \cref{eq:asymptotics_effective dimension_1}, we know that $r_0(\Sigma) = O(d^{1-\alpha})$. Then: 
    \begin{align}
        \dfrac{d^{\frac{\kappa + \delta_0}{\alpha}}}{r_0(\Sigma)^{\frac{\lfloor \kappa \rfloor + 1 + 1}{\alpha}}} & = O \left ( \dfrac{d^{\frac{\kappa + \delta_0}{\alpha}}}{d^{(1-\alpha)\frac{\lfloor \kappa \rfloor+ 1}{\alpha}}}\right ) \\
        & = O \left ( d^{\frac{\kappa -\lfloor \kappa \rfloor \cdot (1-\alpha) - (1-\alpha) + \delta_0}{\alpha} } \right ). 
    \end{align}
    By writing $\kappa= \alpha \kappa + (1-\alpha )\kappa$, we get: 
    \begin{align}
        \dfrac{d^{\frac{\kappa + \delta_0}{\alpha}}}{r_0(\Sigma)^{\frac{\lfloor \kappa \rfloor + 1 + 1}{\alpha}}}  & = O \left ( d^{\kappa + \frac{(1-\alpha) \kappa -\lfloor \kappa \rfloor \cdot (1-\alpha) - (1-\alpha) + \delta_0}{\alpha} } \right ) \\ 
        & = O \left ( d^{\kappa + \frac{(1-\alpha) (\kappa -\lfloor \kappa \rfloor  - 1) + \delta_0}{\alpha} } \right ). 
    \end{align}
    Then, since $\delta_0$ is very small, we conclude that there exists $\delta'_0$ such that: 
    \begin{equation}
         \dfrac{d^{\frac{\kappa + \delta_0}{\alpha}}}{r_0(\Sigma)^{\frac{\lfloor \kappa \rfloor + 1 + 1}{\alpha}}} \leq Cd^{\kappa - \delta'_0}. 
    \end{equation}
    Going back to \cref{eq:mid_step_m_d_3}, we get: 
    \begin{equation}
        m(d) \leq C \max \{d^{\lfloor \kappa \rfloor}, d^{\kappa - \delta'_0}\}  \ll n^{1- \delta'_0},
    \end{equation}
    so we conclude. 
\end{proof}

\end{document}